%% file: neurips_2026.tex
\documentclass{article}

\usepackage[preprint]{neurips_2026}

\usepackage[utf8]{inputenc} 
\usepackage[T1]{fontenc}    

\usepackage{hyperref}       
\usepackage{url}            
\usepackage{booktabs}       
\usepackage{amsfonts}       
\usepackage{nicefrac}       
\usepackage{microtype}      
\usepackage{xcolor}         
\definecolor{GitHubGray}{HTML}{24292E}
\usepackage{enumitem}       
\usepackage{listings}       

\usepackage{pgfplots}
\pgfplotsset{compat=1.18}
\usepgfplotslibrary{groupplots}

\usepackage{graphicx}
\usepackage{subfigure}
\usepackage{amsmath}
\usepackage{amssymb}
\usepackage{mathtools}
\usepackage{bbm}
\usepackage{amsthm}
\usepackage[capitalize,noabbrev]{cleveref}
\usepackage[textsize=tiny]{todonotes}

\usepackage{algorithm}
\usepackage{algorithmic}

\allowdisplaybreaks

\theoremstyle{plain}
\newtheorem{theorem}{Theorem}[section]
\newtheorem{proposition}[theorem]{Proposition}
\newtheorem{lemma}[theorem]{Lemma}
\newtheorem{corollary}[theorem]{Corollary}
\theoremstyle{definition}

\theoremstyle{remark}

\title{ANCORA: Learning to Question via Manifold-Anchored Self-Play for Verifiable Reasoning}

\author{%
  Chengcao Yang \\
  Wuhan University \\
  Wuhan, China \\
  \texttt{adammyhos@gmail.com} \\
}

\makeatletter
\newcommand{\ancoracodelink}{%
  \if@anonymous\else
    \begin{center}
      \vspace{-1.0em}
      \small
      \href{https://github.com/MythosAd/ANCORA}{\textcolor{GitHubGray}{\textbf{\texttt{GitHub}}}}%
      \vspace{-0.4em}
    \end{center}
  \fi
}
\makeatother

\begin{document}

\maketitle
\ancoracodelink

\begin{abstract}
    We propose a paradigm shift toward \textbf{open-ended curriculum self-play}: rather than learning to answer on a fixed prompt set, a unified policy learns to question---generating verifiable problems, solving them, and turning verifier feedback into self-improvement without human-annotated solutions. We introduce \textbf{ANCORA}, in which the policy alternates between a Proposer that synthesizes novel specifications and a Solver that produces verified solutions, anchored by three load-bearing mechanisms: a \textbf{two-level group-relative update} coupling Proposer advantages across specifications with Solver advantages across solution attempts; \textbf{iterative self-distilled SFT} projecting the base model onto its valid-output manifold before RL; and a \textbf{UCB-guided Curriculum DAG} whose policy-induced problem set can provably expand under self-composition. Without these stabilizers, sparse verifier feedback drives Proposer collapse even under MLRL-aligned rewards; with them, ANCORA bootstraps a verifiable curriculum from zero human solutions. Instantiated in Verus, ANCORA lifts Dafny2Verus pass@1 from a 26.6\% SFT baseline to 81.5\% in test-time training (TTT, 0-shot), outperforming PSV self-play by 15.8 points despite PSV's 1-shot inference; in a transfer setting, training from Dafny2Verus seeds yields 36.2\% and 17.2\% pass@1 on held-out MBPP and HumanEval.
\end{abstract}

\section{Introduction}
Large language models have achieved broad competence via pre-training on web-scale corpora, supervised fine-tuning (SFT), and reinforcement learning from verifiable feedback \cite{deepseekai2025deepseekr1incentivizingreasoningcapability}. However, this paradigm faces converging bottlenecks: public human-generated text may become a scaling bottleneck if current data-demand trends continue \cite{villalobos2024will}, human annotation does not scale, and imitation-based training caps performance at human level \cite{singhHumanDataScaling2024}. Tool-augmented reasoning frameworks such as ReAct \cite{ReACT} broaden how models interact with external environments, but they still typically assume a given task distribution rather than building a verifier-backed curriculum of newly generated problems.

We propose \textbf{ANCORA}, an anchored-curriculum framework that shifts from \emph{learning to answer} to \emph{learning to question}. A unified policy $\pi_\theta$ alternates between a Proposer that synthesizes novel problem specifications and a Solver that produces verified solutions, coupling both roles through a shared internal model of the problem--solution space. The framework has three load-bearing components: (i)~a \textbf{two-level group-relative update} coupling Proposer advantages across candidate specifications with Solver advantages across solution attempts; (ii)~\textbf{iterative self-distilled SFT} that lifts the base model onto its valid-output manifold before full RL; and (iii)~a \textbf{UCB-guided Curriculum DAG} that grows only through strictly filtered, novel, Solver-verified specifications, allowing the prompt pool to expand without poisoning the valid manifold.

\begin{figure}[t]
\centering
\includegraphics[width=\linewidth]{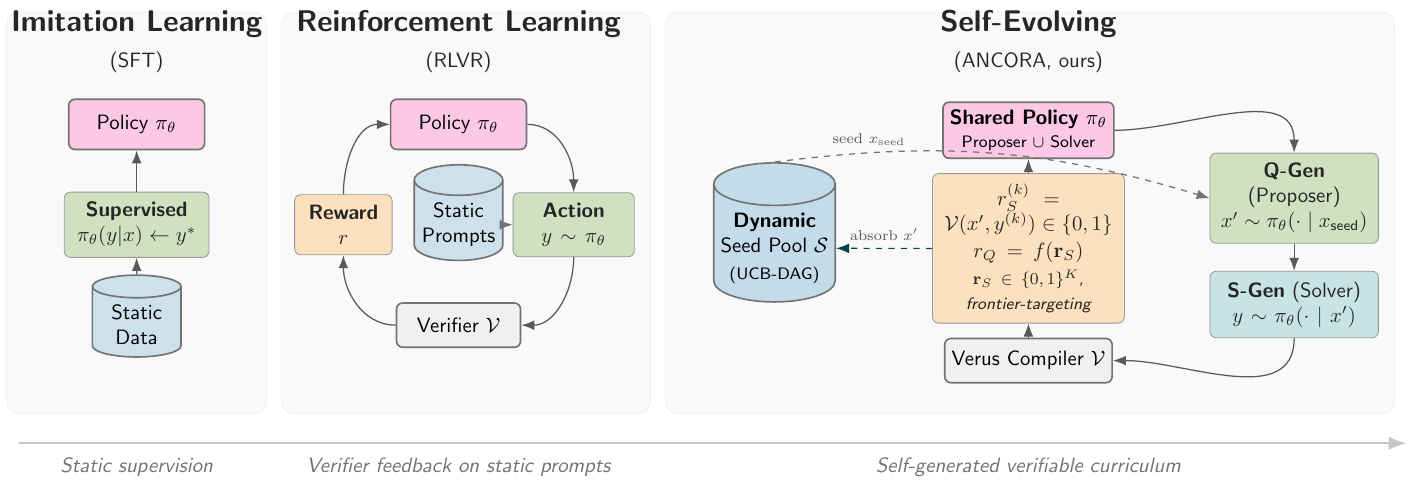}
\caption{Learning paradigms. \textbf{Left:} SFT is bounded by static-data supervision. \textbf{Middle:} RLVR uses environment feedback but a fixed prompt pool. \textbf{Right:} ANCORA. The Proposer (Q-Gen) generates a novel spec $x'$ from seed $x_{\text{seed}}$ via 1-shot ICL; the Solver (S-Gen) attempts implementation $y$. A coupled MLRL update shapes both personas of $\pi_\theta$ via Solver outcomes $\mathbf{r}_S\in\{0,1\}^K$ (\S\ref{sec:proposer_obj}--\S\ref{sec:solver_obj}). Strictly filtered, novel, Solver-verified specs re-enter the Dynamic Seed Pool, yielding a self-growing curriculum beyond the initial seed set.}
\label{fig:gsv_diagram}
\end{figure}

The joint update remains close to GRPO \cite{shao2024deepseekmathpushinglimitsmathematical}, but its mean normalization is motivated by \textbf{Maximum-Likelihood RL} (MLRL) \cite{tajwar2026maximumlikelihoodreinforcementlearning}: for binary Solver rewards, it matches the likelihood-aligned weighted-$\textsc{pass}@k$ estimator up to a score-function baseline. This matters because recent pass@$k$ analyses suggest that conventional RLVR can mainly sharpen small-$k$ sampling efficiency without expanding, and sometimes while narrowing, the large-$k$ reasoning boundary already present in the base model \cite{yue2025doesreinforcementlearningreally}. For the Proposer, the goal is complementary: generate specifications near the Solver's epistemic frontier, where outcome entropy is high enough to create useful Solver learning signal.

A central failure diagnosis is \textbf{manifold collapse}: when valid outputs are sparse relative to the full generation space, sparse verifier feedback can push the Proposer off the valid manifold regardless of whether the Proposer reward is Bernoulli variance $4p(1{-}p)$, a low-pass exponential surrogate $e^{-Kp}{-}e^{-K}$, or a Band-1-of-$K$ indicator $\mathbbm{1}[m{=}1]$ (Appendix~\ref{app:manifold_collapse}). Iterative SFT and the strictly gated UCB-DAG are therefore not auxiliary engineering details; they are the stabilizers that let frontier-targeted Proposer rewards operate without curriculum poisoning. Under these stabilizers, the Band-1-of-$K$ reward is our main-result default; we ablate the other two shapes in Appendix~\ref{sec:reward_ablation}.

We validate ANCORA on \textbf{Verus} formal verification \cite{lattuada2023verus}, where a deterministic compiler provides scalable ground-truth rewards. Compared to prior self-play and self-improvement methods \cite{zhao2025absolutezeroreinforcedselfplay, wilf2025proposesolveverifyselfplay, chen2025selfquestioninglanguagemodels}, ANCORA is distinguished by its combination of coupled two-level optimization, manifold-first initialization, and strictly gated UCB-DAG exploration.

Our contributions are: (1)~We introduce ANCORA, the first framework, to our knowledge, that integrates a unified-policy Proposer/Solver with the three mechanisms above for verifiable reasoning. (2)~We diagnose the \emph{manifold collapse} problem for sparse-verifier RL and show why manifold projection plus gated anchored exploration is necessary for stable Proposer learning. (3)~We empirically demonstrate that a Qwen2.5-Coder-3B model trained with ANCORA significantly outperforms baselines on three Verus benchmarks.

\section{Related Work}

\textbf{Reasoning with Static Prompts.}
STaR \cite{zelikman2022starbootstrappingreasoningreasoning}, OpenAI o1 \cite{openai2024openaio1card}, and DeepSeek-R1 \cite{deepseekai2025deepseekr1incentivizingreasoningcapability} train reasoning via RL on a \emph{fixed} set of human-curated prompts.
While effective at distilling search compute into the policy, they do not expand the problem space itself, and performance is bounded by the cardinality of the initial prompt set.

\textbf{Concurrent Self-Play Methods.}
AZR \cite{zhao2025absolutezeroreinforcedselfplay} uses a single model for code-based reasoning with a proposer reward of $1 - \bar{r}_{\text{solve}}$ and Task-Relative REINFORCE++~\cite{hu2025reinforceplusplus}.
SPELL \cite{spell2025selfplay} adds three roles (questioner, responder, verifier) for document QA via summed GRPO losses.
SPICE \cite{liu2025spice} applies dual-role Dr.\ GRPO~\cite{liu2025drgrpo} to corpus-grounded math reasoning.
SSR \cite{wei2025ssr} trains a shared bug-injector/solver policy via PPO~\cite{schulman2017ppo} for software engineering.
Notably, negative-gradient destabilization is not unique to our setting: Kimina-Prover \cite{wang2025kiminaprover}, training a 72B Lean-4 prover via GRPO, reports early ``format collapse'' and resorts to probabilistically discarding negative-advantage samples ($\omega = 0.5$). The underlying phenomenon---thin valid manifolds concentrating repulsion on a few off-manifold modes rather than spreading it across the vast invalid space---is shared by formal proofs, verified specifications, and strictly typed code (Appendix~\ref{app:manifold_collapse}). Unlike ad-hoc sample discarding or linear inverted rewards, ANCORA prevents format collapse systematically via manifold-constraining SFT and a UCB-DAG-anchored frontier objective.

\textbf{Decoupled or Alternating Self-Play.}
An alternative family maintains separate proposer/solver models or updates them in alternating phases: Self-Questioning LMs \cite{chen2025selfquestioninglanguagemodels} (asymmetric self-play), Dr.Zero \cite{yue2026drzeroselfevolvingsearch} (alternating HRPO/GRPO on separate copies), SOAR \cite{sundaram2026soar} (bilevel teacher-student meta-RL), Tool-R0 \cite{acikgoz2026toolr0} (which finds shared weights \emph{hurt} their tool-calling domain---a finding our coupled spec/implementation setting reverses), STP \cite{wu2025stp} (alternating expert iteration for Lean), and Sol-Ver \cite{solver2025selfplay} (sequential SFT$+$DPO without RL proposing). These decoupled strategies sidestep gradient conflicts but forfeit the mutual regularization of shared parameters \cite{silver_mastering_2017}.

\textbf{Formal Verification and Self-Improvement.}
PSV \cite{wilf2025proposesolveverifyselfplay} uses an alternating propose-solve loop for Verus. However, its proposer relies on in-context learning rather than RL, and its solver is updated via rejection fine-tuning. DeepSeek-Prover \cite{ren2025deepseekproverv2advancingformalmathematical} uses Lean as a static reward model without generating new theorems. In contrast, ANCORA formulates specification generation and solution as a unified RL policy, trained jointly via MLRL-aligned GRPO, repurposing the compiler as a dynamic engine for autonomous problem discovery.

\textbf{Positioning.}
ANCORA's distinguishing feature is the joint use of all three mechanisms above; Table~\ref{tab:comparison} in the appendix gives a per-feature comparison.

\section{Preliminaries}
\label{sec:preliminaries}

\subsection{Maximum Likelihood Reinforcement Learning}
\label{sec:grpo_mlrl}
Standard binary-reward RL optimizes the expected pass rate $\mathbb{E}[p_\theta(x)]$, sharpening small-$k$ sampling but not necessarily expanding the underlying reasoning boundary \cite{yue2025doesreinforcementlearningreally}. MLRL \cite{tajwar2026maximumlikelihoodreinforcementlearning} instead targets the log-likelihood $\mathbb{E}[\log p_\theta(x)]$, equivalent at the gradient level to the weighted-$\textsc{pass}@k$ identity $\sum_{k\geq 1}\tfrac{1}{k}\nabla\textsc{pass}@k=\nabla\log p_\theta$. With $N$ samples and success count $m=\sum_i r_i$ ($r_i\in\{0,1\}$), the hit-conditioned finite-sample estimator normalizes by successes rather than total samples:
\begin{equation}
    \hat{g}_{\text{ML}} = \mathbbm{1}[m{>}0]\,\tfrac{1}{m}\sum_{i=1}^N r_i\,\nabla_\theta\log\pi_\theta(y_i|x).
    \label{eq:maxrl}
\end{equation}
Unconditionally this estimator is multiplied by the hit probability $1-(1-p_\theta(x))^N$; it is unbiased conditional on at least one success and becomes exact as the no-hit probability vanishes.
We implement Eq.~\ref{eq:maxrl} as a GRPO-style advantage with \emph{mean} (not std) normalization,
\begin{equation}
    A_i^{\mathrm{ML}} = \mathbbm{1}[\bar r{>}0]\,\frac{r_i-\bar r}{\bar r},\qquad \bar r=\tfrac{m}{N},
    \label{eq:mean_norm_mlrl_adv}
\end{equation}
because the resulting group-averaged gradient recovers the same positive-success coefficient as $\hat g_{\text{ML}}$ plus the corresponding centered score term within hit groups. A rare success ($m{=}1,N{=}8$) thereby weights $N/m{=}8\times$ REINFORCE's flat $1/N$, matching MLRL's $1/m$. Standard std-normalized GRPO instead gives $\mathcal{A}^+\!\propto\!1/\sqrt{m}$, which is heuristic. Throughout ANCORA, ``GRPO advantage'' refers to this mean-normalized, MLRL-aligned variant.

\section{Methodology}

\subsection{The ANCORA Optimization Framework}
ANCORA is a discrete-time dynamic system where a single unified policy $\pi_\theta$ serves as both Proposer and Solver. We refer to Proposer rollouts as \emph{question/specification generation} (Q-Gen) and Solver rollouts as \emph{solution generation} (S-Gen); correspondingly, subscripts $Q$ and $S$ denote the Proposer/Q-Gen and Solver/S-Gen branches. At each iteration, the Proposer samples a novel task $x' \sim \pi_\theta(x' | x_{\text{seed}})$ from a dynamically expanding Seed Pool $\mathcal{S}$, and the Solver attempts a solution $y \sim \pi_\theta(y | x')$ judged by a deterministic verifier $\mathcal{V}(x', y) \in \{0, 1\}$. The framework couples the two roles through a shared policy update while admitting new curriculum nodes only after strict filtering, novelty checks, and Solver verification, so that self-expansion remains anchored to the valid manifold.

\paragraph{Anchoring as a unifying principle.}
ANCORA (\emph{Manifold-Anchored}) names a single rationale shared by iterative self-distilled SFT (\S\ref{sec:experiments}), the binary-only filter gate (\S\ref{sec:heuristic_reward}), and solved-only UCB-DAG admission (\S\ref{sec:ucb_dag}): refusing off-manifold gradient pressure under sparse verifier feedback, whether from external-distribution warm-up traces, heuristic-score noise, or unverified curriculum nodes. CoT itself is then the policy's manifold-preserving navigation through fragmented pretrained-known paths \cite{zhou2023limalessmorealignment, wen2025rlvrimplicitlyincentivizes}, the same property that lets RLVR's verifier signal generalize without overwriting the base model's pretrained prior.

\begin{figure}[t]
    \centering
    \includegraphics[width=0.95\linewidth]{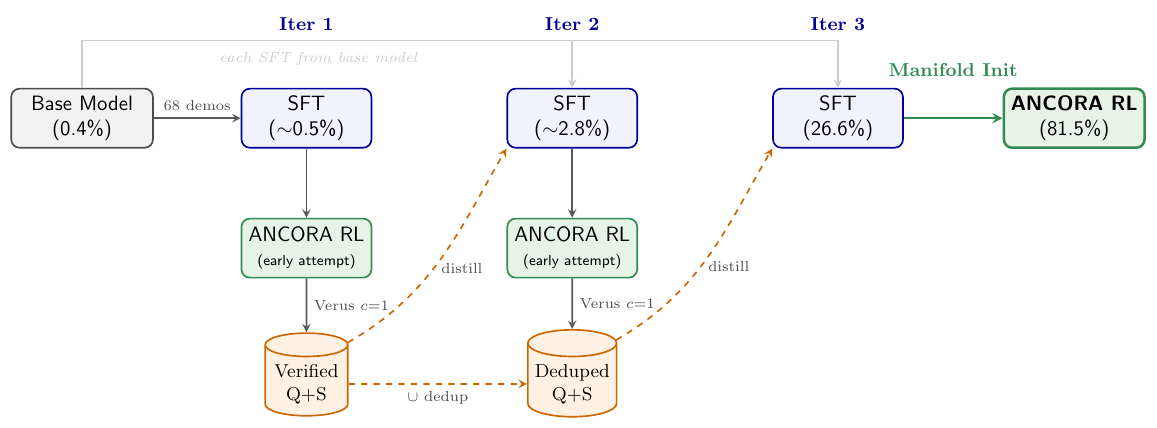}
    \caption{Iterative Self-Distilled SFT for valid manifold initialization. Direct RL from the base model fails due to verified-spec sparsity (positive-signal starvation). Early RL rollouts are filtered by the Verus compiler and distilled back into the policy; the iterative projection ($0.4\% \to 26.6\%$) anchors the model onto the valid manifold before full ANCORA RL.}
    \label{fig:sft_bootstrap}
\end{figure}

\subsection{Proposer Policy Strategy}

\subsubsection{Curriculum Generation}
\label{sec:adaptive_curriculum}
At each step, a seed $x_{\text{seed}}$ is selected from the DAG Seed Pool via UCB sampling (\S\ref{sec:ucb_dag}), and the Proposer generates a one-step curriculum extension. The iterative SFT warm-up teaches the model to make focused semantic changes rather than superficial rewrites, keeping exploration local enough to remain on the valid manifold while still walking toward new frontier regions; prompt details are deferred to Appendix~\ref{appPrompts}.

\subsubsection{Filter Gate and Zero-Distortion Rewards}
\label{sec:heuristic_reward}
Generated candidates pass through a lightweight three-stage \emph{accept/reject gate} (format checks, compiler stub syntax validation, and semantic heuristics rejecting vacuous postconditions); rejected candidates are excluded from the GRPO group. We strictly decouple \emph{search} from \emph{learning}: heuristic scores and MinHash novelty~\cite{broder1997minhash} drive only gating and deduplication, while the policy gradient is driven exclusively by binary verifier outcomes $c_{j,k} \in \{0,1\}$. Injecting tiered heuristics into the reward instead induces spurious advantage rankings among invalid outputs and displaces correct-output likelihood mass~\cite{liu2025lld} (Appendix~\ref{app:manifold_collapse}).

\subsubsection{Proposer Objective: Frontier-Targeting Reward}
\label{sec:proposer_obj}
The Proposer must generate specifications whose Solver outcomes are informative rather than degenerate: not saturated ($p_j \approx 1$), not purely off-manifold noise, but likely to expose useful MLRL signal for the Solver. Our contribution here is a \emph{frontier-targeting reward family}: bounded functions of the Solver's $K$ binary outcomes that turn the empirical outcome distribution itself into the Proposer's learning signal. For each filter-passing spec $x_j$, let $p_j = \tfrac{1}{K}\sum_k c_{j,k}$ be the Solver's empirical pass rate and $m_j = \sum_k c_{j,k}$ the success count. We instantiate the family with three designs:
\begin{equation}
    R_{\text{front}}(p_j,m_j) \in
    \left\{
    4p_j(1-p_j),\quad
    e^{-Kp_j}-e^{-K},\quad
    \mathbbm{1}[m_j=1]
    \right\}.
    \label{eq:entropy_reward_family}
\end{equation}
The Bernoulli-variance reward $4p_j(1-p_j)$ is a smooth symmetric uncertainty kernel (equivalent up to a constant to the Gini impurity, and the second-order Taylor expansion of $H(p)$ around $p{=}1/2$), peaked at $p_j{=}1/2$ and zero at both extremes. The exponential variant is a low-pass stress-test kernel rather than an entropy proxy: it upweights very low empirical pass rates and therefore relies on the filter and solved-only admission gates to avoid off-manifold zero-signal proposals. The Band-1-of-$K$ variant is a sparse binary nonzero-signal proxy:
\begin{equation}
    R_{\text{prop}}(x_j) = \mathbbm{1}\!\left[m_j = 1\right].
    \label{eq:proposer_reward}
\end{equation}
Our main run uses Band-1-of-$K$: a sparse binary event ($m{=}1$ out of $K$) that emphasizes hard-but-nonzero specifications and avoids sensitivity to noisy continuous pass-rate estimates. This $m{=}1$ regime is precisely where MLRL's $1/m$ coefficient (Eq.~\ref{eq:maxrl}) diverges most from REINFORCE's flat $1/N$, so Band-1-of-$K$ is the binary indicator of the rare-success region that the Solver's MLRL update already treats as most informative---rewarding the Proposer for landing there directly aligns Q/S likelihood signals. The other two shapes, ablated in Appendix~\ref{sec:reward_ablation}, are equally stabilized by the same SFT$+$UCB-DAG mechanisms; we treat the family itself, rather than a single shape, as the methodological contribution.

\subsection{Solver Policy Strategy}
\label{sec:solver_obj}
The solver is optimized under the same MLRL principle. For each spec $x_j$ with $K$ attempts and group mean $\bar{c}_j = \tfrac{1}{K}\sum_k c_{j,k}$, the token advantage on solution $y_{j,k}$ is
\begin{equation}
    \mathcal{A}_{\text{solve}}(y_{j,k}) = \frac{c_{j,k} - \bar{c}_j}{\bar{c}_j + \epsilon},
    \label{eq:solver_adv}
\end{equation}
By Eq.~\ref{eq:maxrl}, the group-averaged gradient induced by this advantage is the finite-sample MLRL/MaxRL estimator plus a zero-mean score baseline. All-fail and all-pass groups produce zero centered gradient; mixed groups update the Solver on exactly the uncertain cases where likelihood mass can still move.

\subsection{UCB-Guided Seed Pool and Curriculum DAG}
\label{sec:ucb_dag}

A single Proposer rollout usually produces only a local variation of its conditioning seed. To reach distant regions without leaving the valid manifold, ANCORA composes such local moves: verified novel specs are reinserted as future seeds, forming a curriculum DAG (Figure~\ref{fig:ucb_dag}). Initial curated seeds are protected roots; non-original nodes are admitted only if at least one Solver rollout verifies them (\emph{solved-only admission}, Lemma~\ref{lem:solved_only}); near-duplicates are rejected by MinHash. Seed selection uses MCTS-style UCB descent~\cite{auer2002finitetime,kocsis2006bandit} with a tunable \emph{root quota} $\rho\in[0,1]$: a fraction $\rho$ of the batch is sampled from UCB-chosen roots, and the remaining $1-\rho$ performs UCB descent into Q-Gen-discovered children. Our main run uses $\rho=0.50$; the limiting case $\rho{=}1$ is reported as the No-Descent ablation (\S\ref{sec:ablation}). Discounted backups, depth-capped absorb-and-merge, and production deviations are formalized in Appendix~\ref{subsec:discounted_ucb_model}--\ref{subsec:production_deviations}.

\begin{figure}[t]
    \centering
    \includegraphics[width=\linewidth]{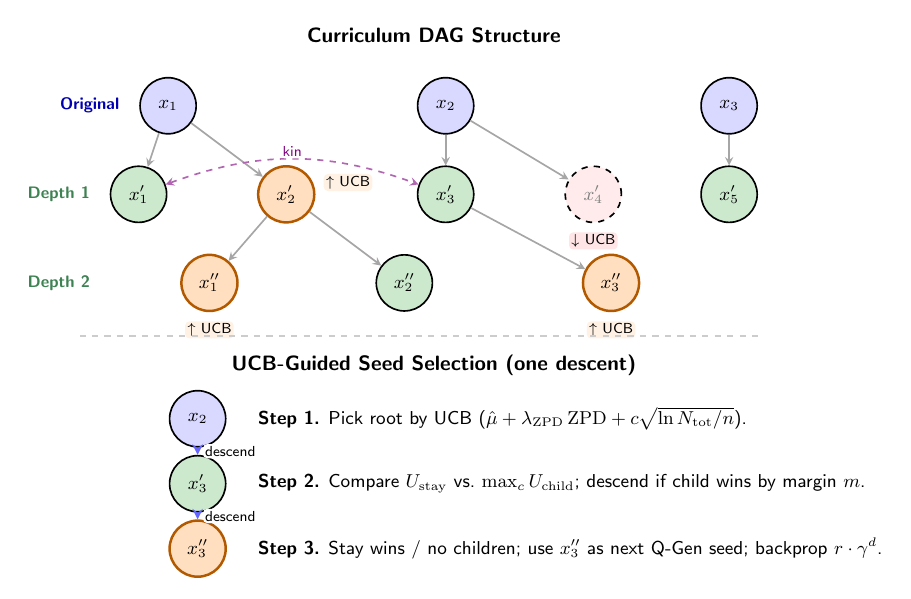}
    \caption{The UCB-guided Curriculum DAG. \textbf{Top:} Original seeds (blue) are protected roots; Proposer-generated specs (green) are admitted as children only after solved-only verification; high-UCB frontier nodes (orange) become next-step seeds, low-UCB nodes (red, dashed) are deprioritized; dashed-violet edges mark structural kinship (Jaccard $\geq 0.70$). \textbf{Bottom:} Selection first picks a root by UCB, then performs local stay-vs-child descent, using the terminal node as the next Q-Gen conditioning seed. Appendix~\ref{subsec:production_deviations} gives the full production deviations.}
    \label{fig:ucb_dag}
\end{figure}

\paragraph{Why composition produces novel problems.}
A single Q-Gen step from a fixed seed explores only a local neighborhood; the DAG composes such steps. Let $\mathcal{F}_k$ be the history, $Z_k\in\mathcal{R}_k$ the selected seed, $X_k\sim\pi_\theta(\cdot|Z_k)$, and $g(X_k)\in\{0,1\}$ the admission gate, so $\mathcal{R}_{k+1}=\mathcal{R}_k\cup\{X_k:g(X_k){=}1\}$.
\begin{proposition}[Compositional reachability]\label{prop:reachability}
Before saturation, assume $\Pr[g(X_k){=}1,X_k\notin\mathcal{R}_k\mid\mathcal{F}_k]\geq\delta>0$ for all $k$. Then $\mathbb{E}[|\mathcal{R}_{k+1}|\mid\mathcal{F}_k]\geq|\mathcal{R}_k|+\delta$ and $|\mathcal{R}_k|\to\infty$ a.s. (until saturation if finite). Let $\mathcal{U}_1=\cup_{s\in\mathcal{S}_0}\{x:g(x){=}1,\pi_\theta(x|s)>0\}$. If an admitted non-root $z$ is revisited with probability $\alpha>0$ and $\Pr[g(X){=}1,X\notin\mathcal{U}_1|Z{=}z]\geq\beta>0$, then a composition-only spec outside $\mathcal{U}_1$ is eventually admitted a.s.
\end{proposition}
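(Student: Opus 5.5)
The argument has three parts: a one-step drift bound, a conditional Borel--Cantelli argument for divergence, and a second conditional Borel--Cantelli argument for the composition-only claim.

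\textbf{Drift bound.} The size of the pool changes by $|\mathcal{R}_{k+1}|-|\mathcal{R}_k| = \mathbbm{1}[g(X_k){=}1,\,X_k\notin\mathcal{R}_k]$. Taking conditional expectation given $\mathcal{F}_k$ and using the hypothesis gives
\[
\mathbb{E}\bigl[|\mathcal{R}_{k+1}|\mid\mathcal{F}_k\bigr]\ge|\mathcal{R}_k|+\delta
\]
immediately.

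\textbf{Divergence.} Let $E_k=\{g(X_k){=}1,\,X_k\notin\mathcal{R}_k\}$, which is $\mathcal{F}_{k+1}$-measurable. Then $\sum_k\Pr[E_k\mid\mathcal{F}_k]\ge\sum_k\delta=\infty$ almost surely. L\'evy's extension of the Borel--Cantelli lemma states that for events adapted to a filtration, $\{E_k \text{ i.o.}\}=\{\sum_k\Pr[E_k\mid\mathcal{F}_k]=\infty\}$ a.s. Hence $E_k$ occurs infinitely often, and since each occurrence adds a new element, $|\mathcal{R}_k|\to\infty$ a.s. Saturation is handled by stopping: let $\tau$ be the first time the hypothesis fails, and apply the argument to the stopped process. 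On $\{\tau=\infty\}$ the pool diverges. On $\{\tau<\infty\}$ the statement only claims growth up to saturation.

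Instead of L\'evy's lemma, one could use the martingale $M_k=|\mathcal{R}_k|-\sum_{i<k}\Pr[E_i\mid\mathcal{F}_i]$. Its increments are bounded by $1$, and the law of large numbers for martingales gives $M_k/k\to0$. This yields $\liminf_k |\mathcal{R}_k|/k\ge\delta$, which is stronger than divergence.

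\textbf{Composition-only spec.} Fix the admitted non-root $z$. Let $B_k=\{Z_k=z,\ g(X_k){=}1,\ X_k\notin\mathcal{U}_1\}$. By the tower property,
\[
\Pr[B_k\mid\mathcal{F}_k]=\Pr[Z_k=z\mid\mathcal{F}_k]\cdot\Pr[g(X){=}1,\,X\notin\mathcal{U}_1\mid Z=z]\ge\alpha\beta.
\]
This step uses the fact that $X_k\sim\pi_\theta(\cdot\mid Z_k)$ is drawn freshly given the history and the selected seed. The sum of these conditional probabilities diverges, so the same conditional Borel--Cantelli argument shows $B_k$ occurs a.s. (in fact infinitely often).

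It remains to check that the first such $X_k$ is admitted. Since $X_k\notin\mathcal{U}_1$ and $\mathcal{U}_1$ contains every gated one-step child of the original seeds, $X_k$ is not one of those children, so $X_k\notin\mathcal{S}_0$ in the relevant sense. If instead $X_k\in\mathcal{R}_k$, then it was already admitted as a composition-only spec outside $\mathcal{U}_1$ at an earlier step, and the claim holds anyway. In either case, a spec outside $\mathcal{U}_1$ belongs to $\mathcal{R}$ at some finite time. By the definition of $\mathcal{U}_1$, such a spec is not reachable in one gated step from $\mathcal{S}_0$, which is the meaning of ``composition-only.''

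\textbf{Main obstacle.} The delicate point is interpreting the assumption ``revisited with probability $\alpha$.'' The bound $\Pr[Z_k=z\mid\mathcal{F}_k]\ge\alpha$ must hold conditionally at every step after $z$ is admitted, not merely marginally. Under the UCB scheme, the exploration bonus and the root quota $\rho<1$ are what would justify such a uniform lower bound. I would state explicitly that $\alpha$ is understood as a conditional per-step lower bound for all $k\ge k_z$, where $k_z$ is the admission time of $z$. I would also note that $\pi_\theta$ evolves over time, so $\beta$ must hold uniformly in $\theta_k$. I would adopt both as part of the hypothesis rather than derive them.
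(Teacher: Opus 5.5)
Your proof is correct and has the same skeleton as the paper's: an indicator increment for the drift bound, an ``eventual success'' argument for divergence, and the same argument rerun with per-step success probability $\alpha\beta$. The probabilistic engine is different, though.

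The paper iterates the tower property to get the explicit tail $\Pr[I_k=\cdots=I_{k+m-1}=0\mid\mathcal{F}_k]\le(1-\delta)^m$. It concludes that each waiting time between admissions is a.s.\ finite and inducts over admissions. You invoke L\'evy's conditional Borel--Cantelli lemma (or the bounded-increment martingale SLLN) instead.

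The paper's route is fully elementary and quantitative. Waiting times are stochastically dominated by Geometric$(\delta)$, resp.\ Geometric$(\alpha\beta)$, variables. So the expected time to the next admission, or to the first composition-only admission after $z$ enters, is at most $1/\delta$, resp.\ $1/(\alpha\beta)$.

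Your route uses a heavier standard lemma but gives more:
\begin{itemize}
\item ``infinitely often'' without an induction;
\item an almost-sure linear rate $\liminf_k|\mathcal{R}_k|/k\ge\delta$ from the martingale version;
\item a conclusion that needs only $\sum_k\Pr[E_k\mid\mathcal{F}_k]=\infty$ a.s.
\end{itemize}
The last point means it covers path-dependent admission probabilities with no uniform positive floor. It weakens the sampling floor the paper says a literal guarantee would need to a mere non-summability condition. Your explicit factorization of $\Pr[B_k\mid\mathcal{F}_k]$ also helps, as do your remarks that $\alpha$ must be a conditional per-step bound and $\beta$ uniform in $\theta_k$. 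Together they spell out what the paper's ``same geometric-tail argument'' tacitly assumes.

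One inference in your last paragraph does not follow: $X_k\notin\mathcal{U}_1$ does not imply $X_k\notin\mathcal{S}_0$. A root need not be a gated one-step child of any root. If $X_k$ is such a root, the event $B_k$ occurs without any composition-only spec being admitted. The paper's one-line proof glosses over the same case. The fix is to assume $\mathcal{S}_0\subseteq\mathcal{U}_1$, or to read $\beta$ as a lower bound on $\Pr[g(X)=1,\,X\notin\mathcal{U}_1\cup\mathcal{S}_0\mid Z=z]$. With either, your argument goes through unchanged.
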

\noindent\textit{Proof.} Let $I_k=\mathbbm{1}[g(X_k){=}1,X_k\notin\mathcal{R}_k]$. Since $|\mathcal{R}_{k+1}|=|\mathcal{R}_k|+I_k$, the expectation bound follows. Also $\Pr[I_k=\cdots=I_{k+m-1}=0\mid\mathcal{F}_k]\leq(1-\delta)^m$, giving finite waiting time for each new node a.s.; induction gives divergence until saturation. The second claim is the same geometric-tail argument with success probability $\alpha\beta$. $\square$

\noindent Thus sustained expansion needs persistent exploration, not only a good local generator. ANCORA approximates this with protected roots, the $\rho{=}0.50$ root/DAG mixture, UCB bonuses, and solved-only admission; a literal guarantee would add an $\epsilon$-refresh or sampling floor over active roots/frontier leaves. Empirically, Q-Gen validity stays bounded away from zero and the seed pool grows monotonically over the logged diagnostic window (Appendix~\ref{app:additional_ablations}), so composition expands the effective training distribution beyond its initial seed support.

\subsection{Coupled Joint Optimization}
\label{sec:joint_opt}

Both Proposer and Solver are segments of a single unified policy $\pi_\theta$. For each iteration, $B$ prompts yield $N$ candidate specifications each; each valid specification receives $K$ solver rollouts. Problem tokens receive $R_{\text{prop}}$ via the mean-normalized GRPO variant in \S\ref{sec:grpo_mlrl}; Proposer advantages are normalized within each seed's $N$ Q-Gen candidates, with invalid candidates assigned zero Proposer reward and skipped only by the Solver rollout. Solver tokens receive binary $R_{\text{solve}} \in \{0,1\}$ over $K$ solutions, for which the same normalization realizes the MLRL weighted-$\textsc{pass}@k$ estimator exactly. The Band-1-of-$K$ Proposer reward is also binary and follows the same algebra, but targets the likelihood of an \emph{informative-frontier event} ($m_j{=}1$ of $K$) rather than literal Solver correctness---an MLRL-style update on a different binary signal.

\textbf{Branch balancing.} Solver responses are much longer than Proposer specs, so raw token-level advantages let the Solver dominate. We equalize generated token mass per batch with $\mathrm{scale}_Q = (T_Q+T_S)/(2T_Q)$ and $\mathrm{scale}_S = (T_Q+T_S)/(2T_S)$, then apply a static Proposer scale $w_Q=q_{\mathrm{scale}}=0.375$, $w_S=1.0$. Adaptive $\ell_2$-balance and the reverse-imbalance failure mode are analyzed in Appendices~\ref{sec:reverse_imbalance} and~\ref{sec:branch_weight_ablation}; Algorithm~\ref{alg:gsv_loop} gives the full iteration.

\section{Experiments}
\label{sec:experiments}

In our Verus instantiation, specifications (\texttt{spec}/\texttt{ensures}) serve as questions and implementations as answers, with the compiler providing deterministic binary feedback.

\subsection{Setup}

\paragraph{Datasets.} We evaluate on three Verus benchmarks: \textbf{Dafny2Verus} (274 problems, translated from Dafny \cite{aggarwal_alphaverus_2024}; of these, 234 pass the seed-pool quality filter and form the curriculum's initial roots in the Transfer setting), \textbf{MBPP}-Verified (78 problems \cite{austin2021programsynthesislargelanguage, yangAutoVerusAutomatedProof2025}), and \textbf{HumanEval}-Verified (85 functions converted from HumanEval \cite{chen2021evaluating}). We adopt Pass@$k$ \cite{chen2021evaluating} with $n{=}100$ samples.

\paragraph{Baselines.} We compare against: (1)~AlphaVerus \cite{aggarwal_alphaverus_2024} (50-shot prompting), the prior SOTA for verified code generation; (2)~iterative RFT \cite{yuan2023scalingrelationshiplearningmathematical, singhHumanDataScaling2024}, expert iteration without proposing; and (3)~PSV \cite{wilf2025proposesolveverifyselfplay}, alternating propose-solve with rejection fine-tuning.

\paragraph{Implementation.} Built on \texttt{verl}/HybridFlow \cite{sheng2024hybridflow} with \texttt{vllm} for rollout \cite{kwon2023efficient}. All experiments use 2$\times$A100 (40GB). We perform full-parameter fine-tuning on Qwen2.5-Coder-3B-Instruct \cite{hui2024qwen25coder} with AdamW, Flash-Attention 2, and bfloat16. The SFT phase uses $\text{lr}{=}1{\times}10^{-5}$ with cosine schedule; the RL phase uses $\text{lr}{=}1{\times}10^{-6}$ with 5\% linear warmup. Each training step samples $B{=}8$ seeds from the curriculum DAG; each seed yields $N{=}8$ candidate specifications, and each valid specification receives $K{=}8$ solver rollouts. Maximum sequence length is 4096 tokens. The RL phase runs for 2{,}000 steps, totaling approximately 200 GPU-hours on 2$\times$A100.

\paragraph{Iterative SFT warm-up.}
The unadapted Qwen2.5-Coder-3B-Instruct achieves only 0.4\% pass@1 on Dafny2Verus, and direct RL from this point triggers the manifold collapse of Appendix~\ref{app:manifold_collapse}. We start with 68 basic Verus specs whose Q-Gen/S-Gen traces are generated by Kimi-K2.5 \cite{kimi2026k25} and DeepSeek-V3.2 \cite{deepseekai2025deepseekv32}, mainly to teach formatting and proof skeletons ($0.4\%\!\to\!{\sim}0.5\%$), then distill MinHash-deduplicated Verus-verified rollouts across three early ANCORA iterations (2 epochs). Beyond three iterations SFT overfits, so only early-stage data is used; Table~\ref{tab:sft_bootstrap} shows the progression. In the TTT setting all methods already use the full evaluation set as RL seed prompts, so the warm-up specs introduce no additional test-set exposure.

\begin{table}[h]
\centering
\small
\caption{SFT bootstrapping via accumulated verified rollouts. Each iteration collects Verus-verified Proposer/Q-Gen specifications and Solver/S-Gen implementations from ANCORA training, deduplicates them, and distills the resulting traces into SFT data for the next round.}
\label{tab:sft_bootstrap}
\begin{tabular}{llc}
\toprule
Phase & Data source & D2V p@1 \\
\midrule
Base model & --- & 0.4 \\
SFT iter\,1 & 68 basic seed demos (Kimi/DS) & ${\sim}$0.5 \\
SFT iter\,2 & + iter\,1 verified rollouts & ${\sim}$2.8 \\
SFT iter\,3 & + dedup(iter\,1{+}2 rollouts) & 26.6 \\
\midrule
\textbf{+ ANCORA RL} (2{,}000 steps) & --- & \textbf{81.5} \\
\bottomrule
\end{tabular}
\end{table}

The final \texttt{sft\_init} reaches 26.6\% / 8.8\% / 6.2\% pass@1 on D2V / MBPP / HumanEval---sufficient manifold coverage for stable RL training (Appendix~\ref{app:manifold_collapse}).

\subsection{Main Results}

Table~\ref{tab:main_results} presents pass rates across all benchmarks.

\begin{table*}[t]
\centering
\caption{Pass@$k$ (\%) on Verus benchmarks. All methods use Qwen2.5-Coder-3B-Instruct. Transfer Learning trains on Dafny2Verus and evaluates on MBPP/HumanEval as held-out sets; Test-Time Training (TTT) uses filtered evaluation roots as RL seed prompts. Our TTT run uses the union of the three benchmark root sets and reports each benchmark separately.}
\label{tab:main_results}
\resizebox{\textwidth}{!}{
\begin{tabular}{lccccccccc}
\toprule
 & \multicolumn{3}{c}{\textbf{Dafny2Verus}} & \multicolumn{3}{c}{\textbf{MBPP}} & \multicolumn{3}{c}{\textbf{HumanEval}} \\
 \cmidrule(lr){2-4} \cmidrule(lr){5-7} \cmidrule(lr){8-10}
 & p@1 & p@5 & p@10 & p@1 & p@5 & p@10 & p@1 & p@5 & p@10 \\
\midrule
\multicolumn{10}{l}{\emph{Transfer Learning} (seed $X_0$ = Dafny2Verus)} \\
\midrule
AlphaVerus (50-shot) & 24.06 & 52.42 & 63.44 & 6.48 & 18.36 & 24.57 & 7.24 & 15.38 & 18.02 \\
RFT (1-shot) & -- & -- & -- & 10.99 & 26.03 & 31.19 & 10.99 & 18.07 & 20.02 \\
PSV (1-shot) & -- & -- & -- & 25.25 & 38.32 & \textbf{41.51} & 16.18 & \textbf{21.61} & \textbf{23.13} \\
\textbf{ANCORA (0-shot)} & -- & -- & -- & \textbf{36.20} & \textbf{38.49} & 39.94 & \textbf{17.19} & 20.16 & 21.03 \\
\midrule
\multicolumn{10}{l}{\emph{Test-Time Training} (seed $X_0$ = test dataset)} \\
\midrule
AlphaVerus (50-shot) & 24.06 & 52.42 & 63.44 & 6.48 & 18.36 & 24.57 & 7.24 & 15.38 & 18.02 \\
RFT (1-shot) & 34.46 & 57.12 & 63.40 & 3.83 & 14.56 & 22.64 & 5.56 & 12.46 & 14.80 \\
PSV (1-shot) & 65.63 & 78.04 & 80.06 & 36.78 & \textbf{51.22} & \textbf{53.67} & 19.07 & \textbf{23.42} & \textbf{25.16} \\
\textbf{ANCORA (0-shot)} & \textbf{81.46} & \textbf{83.86} & \textbf{84.58} & \textbf{44.11} & 46.62 & 47.91 & \textbf{19.15} & 21.67 & 23.91 \\
\bottomrule
\end{tabular}
}
{\footnotesize AlphaVerus uses 50-shot prompting (no training); PSV and RFT results from \citet{wilf2025proposesolveverifyselfplay}. Human-written solutions are never trained on.}
\end{table*}

\paragraph{Evaluation protocol note.}
ANCORA is reported at \emph{0-shot} throughout, while PSV results are at 1-shot \cite{wilf2025proposesolveverifyselfplay}; the $+15.8$-point Dafny2Verus gap is therefore conservative in spirit rather than a formal lower bound, since prompt mismatch at 1-shot could in principle help or hurt ANCORA's score.

ANCORA significantly outperforms baselines on Dafny2Verus and MBPP, and remains competitive on HumanEval. At step 2{,}000 (with $n{=}100$ rollouts), ANCORA achieves 81.5\% pass@1 on Dafny2Verus, a $+15.8$-point gain over PSV (1-shot). In TTT on MBPP, ANCORA reaches 44.1\% pass@1, exceeding PSV by 7.3 points. On HumanEval, ANCORA matches PSV at pass@1 (19.2\% vs.\ 19.1\%) and remains within 1.3 points on pass@10 (23.9\% vs.\ 25.2\%), reflecting the protocol-controlled tradeoff: ANCORA recovers PSV's 1-shot pass@1 without any in-context exemplar at inference time. The flat TTT MBPP pass@$k$ curve (44.1$\to$47.9 across $k{=}1{\to}10$) indicates the policy has sharpened to near-deterministic solutions for solvable problems. On Transfer MBPP, ANCORA's pass@10 (39.9) is within 1.6 points of PSV (41.5), and its pass@1 remains higher (36.2 vs.\ 25.3 for PSV) despite a slightly narrower sampling tail. The narrower HumanEval margin reflects task-origin domain shift orthogonal to this sharpening: although all three benchmarks are evaluated as Verus code, HumanEval-Verified's 85 functions are Verus translations of general Python programming tasks (string parsing, list manipulation, arithmetic loops) rather than algorithm-verification specifications native to D2V, so the D2V-seeded Proposer distribution transfers less directly to HumanEval than to MBPP's algorithmic-task style. This is the price of training the Proposer on a single seed family---transferable problem-solving structure is preserved (HE pass@1 rises from 6.2\% SFT init to 17.2\% Transfer / 19.2\% TTT), but the gain narrows as the evaluation domain drifts further from the seed-induced distribution.

\subsection{Learning Curves and Coupled-Branch Ablation}
\label{sec:ablation}

Figure~\ref{fig:freeze_ablation} traces live pass@1 in the test-time-training setting corresponding to the lower block of Table~\ref{tab:main_results}. Starting from the manifold-projected \texttt{sft\_init} (26.6\% / 8.8\% / 6.2\% on D2V / MBPP / HumanEval), the final $n{=}100$ evaluation at step~2{,}000 gives the headline numbers in Table~\ref{tab:main_results}. Dafny2Verus and MBPP show a sustained upward trend with small checkpoint-level fluctuations; HumanEval shows higher variance due to its smaller evaluation set (85 problems). Late-stage Solver responses are well-formed proof-oriented derivations rather than pathological repetition (Appendix~\ref{appExamples}).

To isolate the contribution of each branch in the coupled optimization, we conduct \emph{branch-freeze} experiments: \textbf{Freeze-Q} masks the Proposer/Q-Gen loss contribution (only Solver tokens update the shared policy), and \textbf{Freeze-S} masks the Solver/S-Gen loss contribution (only Proposer tokens update it). Both variants share the same SFT initialization and hyperparameters as the full ANCORA run. Figure~\ref{fig:freeze_ablation} reports pass@1 on all three benchmarks.

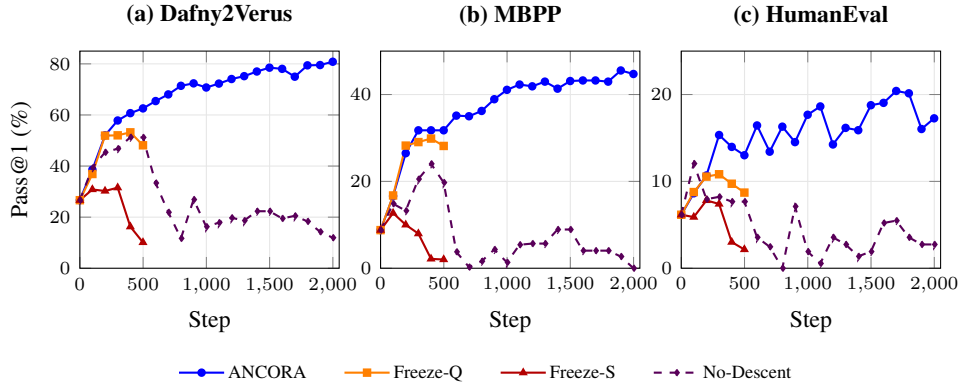
\begin{figure}[t]
\centering
\begin{tikzpicture}
\begin{groupplot}[
    group style={
        group size=3 by 1,
        horizontal sep=0.55cm,
    },
    width=0.36\textwidth,
    height=0.32\textwidth,
    xlabel={Step},
    tick label style={font=\scriptsize},
    label style={font=\small},
    title style={font=\small\bfseries},
    grid=major,
    grid style={gray!20},
    xmin=0, xmax=2050,
]
\nextgroupplot[title={(a) Dafny2Verus}, ylabel={Pass@1 (\%)}, ymin=0, ymax=85]
\addplot[blue, thick, mark=*, mark size=1.2pt] coordinates {
    (0,26.64) (99,38.59) (199,52.03) (299,57.84) (399,60.71) (499,62.57)
    (599,65.44) (699,68.05) (799,71.41) (899,72.37) (999,70.75)
    (1099,72.28) (1199,74.07) (1299,75.19) (1399,77.01) (1499,78.51)
    (1599,78.05) (1699,74.98) (1799,79.42) (1899,79.54) (1999,80.83)
};
\label{plot:ancora}
\addplot[orange, thick, mark=square*, mark size=1.2pt] coordinates {
    (0,26.64) (99,36.89) (199,51.91) (299,52.03) (399,53.20) (499,48.13)
};
\label{plot:freezeq}
\addplot[red!70!black, thick, mark=triangle*, mark size=1.2pt] coordinates {
    (0,26.64) (99,30.83) (199,30.25) (299,31.54) (399,16.27) (499,10.11)
};
\label{plot:freezes}
\addplot[violet!70!black, thick, mark=diamond*, mark size=1.2pt, dashed] coordinates {
    (0,26.64) (99,39.34) (199,45.39) (299,46.80) (399,51.29) (499,51.20)
    (599,33.36) (699,21.91) (799,11.70) (899,26.89) (999,16.27)
    (1099,17.84) (1199,19.75) (1299,18.67) (1399,22.32) (1499,22.32)
    (1599,19.50) (1699,20.50) (1799,18.34) (1899,14.27) (1999,11.95)
};
\label{plot:nodescent}

\nextgroupplot[title={(b) MBPP}, ymin=0, ymax=50]
\addplot[blue, thick, mark=*, mark size=1.2pt] coordinates {
    (0,8.78) (99,16.62) (199,26.49) (299,31.76) (399,31.76) (499,31.76)
    (599,35.14) (699,35.00) (799,36.22) (899,38.92) (999,41.08)
    (1099,42.30) (1199,41.89) (1299,42.97) (1399,41.35) (1499,43.11)
    (1599,43.24) (1699,43.24) (1799,42.97) (1899,45.54) (1999,44.73)
};
\addplot[orange, thick, mark=square*, mark size=1.2pt] coordinates {
    (0,8.78) (99,16.76) (199,28.24) (299,29.05) (399,29.86)  (499,28.16)
};
\addplot[red!70!black, thick, mark=triangle*, mark size=1.2pt] coordinates {
    (0,8.78) (99,12.70) (199,10.00) (299,7.97) (399,2.16) (499,2.01)
};
\addplot[violet!70!black, thick, mark=diamond*, mark size=1.2pt, dashed] coordinates {
    (0,8.78) (99,14.86) (199,13.24) (299,20.54) (399,24.05) (499,19.73)
    (599,3.78) (699,0.27) (799,1.62) (899,4.32) (999,1.35)
    (1099,5.41) (1199,5.68) (1299,5.68) (1399,8.92) (1499,8.92)
    (1599,4.05) (1699,4.05) (1799,4.05) (1899,2.70) (1999,0.00)
};

\nextgroupplot[title={(c) HumanEval}, ymin=0, ymax=25]
\addplot[blue, thick, mark=*, mark size=1.2pt] coordinates {
    (0,6.16) (99,8.63) (199,10.68) (299,15.34) (399,13.97) (499,13.01)
    (599,16.44) (699,13.42) (799,16.30) (899,14.52) (999,17.67)
    (1099,18.63) (1199,14.25) (1299,16.16) (1399,15.89) (1499,18.77)
    (1599,19.04) (1699,20.41) (1799,20.14) (1899,16.03) (1999,17.26)
};
\addplot[orange, thick, mark=square*, mark size=1.2pt] coordinates {
    (0,6.16) (99,8.77) (199,10.55) (299,10.82) (399,9.73) (499,8.71)
};
\addplot[red!70!black, thick, mark=triangle*, mark size=1.2pt] coordinates {
    (0,6.16) (99,5.89) (199,7.81) (299,7.40) (399,3.01) (499,2.16)
};
\addplot[violet!70!black, thick, mark=diamond*, mark size=1.2pt, dashed] coordinates {
    (0,6.16) (99,12.05) (199,7.95) (299,8.22) (399,7.67) (499,7.67)
    (599,3.56) (699,2.47) (799,0.00) (899,7.12) (999,1.92)
    (1099,0.55) (1199,3.56) (1299,2.74) (1399,1.37) (1499,1.92)
    (1599,5.21) (1699,5.48) (1799,3.56) (1899,2.74) (1999,2.74)
};

\end{groupplot}
\node[anchor=north, yshift=-2mm] at (current bounding box.south) {%
    \scriptsize
    \ref{plot:ancora}\,ANCORA\qquad
    \ref{plot:freezeq}\,Freeze-Q\qquad
    \ref{plot:freezes}\,Freeze-S\qquad
    \ref{plot:nodescent}\,No-Descent
};
\end{tikzpicture}
\caption{Branch-freeze and root-only curriculum ablation in the test-time-training setting across $\sim$2{,}000 RL steps. Curves show live training-time evaluation (small-$n$); the headline pass@$k$ numbers in Table~\ref{tab:main_results} use $n{=}100$ at the final checkpoint and may differ by 1--2 points due to sampling variance, particularly on HumanEval (85 problems). See text for detailed analysis of the failure modes of Freeze-Q (plateaus), Freeze-S (collapses), and No-Descent (initially peaks then collapses).}
\label{fig:freeze_ablation}
\end{figure}

\begin{itemize}[leftmargin=1.2em,itemsep=2pt,topsep=2pt]
\item \textbf{Freeze-Q (static curriculum):} initially tracks ANCORA (step~99: 36.9\% vs.\ 38.6\% D2V) but plateaus by step~300, ending 14.4 points below ANCORA at the matched step~499---Proposer co-evolution is essential.
\item \textbf{Freeze-S (reward collapse):} proposed problems become unsolvable or trivial, the informative frontier-reward region vanishes, and performance falls below SFT init by step~400---a manifestation of the manifold collapse in Appendix~\ref{app:manifold_collapse}.
\item \textbf{No-Descent (root-only anchoring collapse):} in the TTT root-only run, the limiting case $\rho{=}1$ initially \emph{climbs steeply} (51.3\% D2V at step~400) by exploiting curated seeds, then collapses after step~500 (D2V $12\%$ by step~1{,}999; MBPP falls to $1.6\%$ by step~799 and $0\%$ by step~1{,}999). Without the 50\% DAG-descent fraction into verified Q-Gen children, the curriculum cannot bootstrap once original seeds are exhausted.
\end{itemize}
The non-zero descent fraction is the load-bearing stabilizer that converts early gains into sustained improvement.

\subsubsection{Additional Diagnostics}

Detailed training diagnostics and partial-control ablations are reported in Appendix~\ref{app:additional_ablations}, including Q-Gen validity, seed-pool growth, Proposer reward variants, UCB exploit statistics, ZPD admission-window clipping (a negative-result variant of the ZPD prior), and static vs.\ adaptive Q/S branch weighting.

\section{Discussion and Conclusion}
\label{sec:conclusion}

We presented ANCORA, a framework for autonomous self-improvement in verifiable reasoning that unifies problem proposing and solving within a single RL policy. The results---81.5\% pass@1 on Dafny2Verus at 0-shot and 36.2\% / 17.2\% on held-out MBPP / HumanEval in a separate D2V-seeded transfer run---show that a 3B model can bootstrap a verifiable problem distribution from curated seeds without human-written training solutions.

\textbf{Takeaways and limitations.}
For thin-manifold RL, our evidence points to two load-bearing ingredients: a manifold-projection phase that lifts $p_{\mathcal{M}}$ above $1/N$ before policy gradients apply, and a state-conditioned exploration anchor that prevents drift back below this threshold. Concurrent Lean work \cite{wang2025kiminaprover} reaches a related conclusion via gradient filtering. ANCORA remains compute-limited to a 3B Verus study and does not yet solve within-manifold mode collapse (Appendix~\ref{sec:empirical_ucb_negative}); long-horizon TTT is also open, since an unbounded solved-only Seed Pool should ideally support steady gains, yet our 2{,}000-step runs plateau after the main phase. Testing Lean4 and other verifier-backed domains remains future work.

\begin{ack}
The numerical calculations in this paper have been done on the supercomputing system in the Supercomputing Center of Wuhan University, funded by computational grants provided by Prof. Jun Chen. We extend our deepest gratitude to Prof. Jun Chen for his essential support. We also thank Xuewei Jiao for computational support during the early stages of this project, and Haopeng Lao for generously sharing access to Codex, which assisted in code development.
\end{ack}

\bibliographystyle{abbrvnat}
\bibliography{neurips_2026}

\newpage
\appendix
\onecolumn

\renewcommand{\theHfigure}{A\arabic{figure}}
\renewcommand{\theHtable}{A\arabic{table}}

\section{Feature Comparison with Concurrent Methods}
\label{app:feature_comparison}

\begin{table}[h]
\centering
\caption{Feature comparison of self-play and self-improvement methods. \checkmark = present, $\times$ = absent.}
\label{tab:comparison}
\begin{tabular}{lccccc}
\toprule
Method & Unified Policy & RL Proposer & Manifold SFT & UCB Seed & Binary Reward \\
\midrule
AZR & \checkmark & \checkmark & $\times$ & $\times$ & $\times$ \\
SPELL & $\times$ & \checkmark & $\times$ & $\times$ & $\times$ \\
SPICE & \checkmark & \checkmark & $\times$ & $\times$ & $\times$ \\
Dr.Zero & $\times$ & \checkmark & $\times$ & $\times$ & $\times$ \\
PSV & $\times$ & $\times$ & $\times$ & $\times$ & \checkmark \\
\textbf{ANCORA} & \checkmark & \checkmark & \checkmark & \checkmark & \checkmark \\
\bottomrule
\end{tabular}
\end{table}

\section{Off-Manifold Gradient Dominance and Policy Collapse}
\label{app:manifold_collapse}

We formalize why RL with sparse valid outputs leads to policy collapse through three complementary mechanisms: gradient starvation from rare positive signals, sparse-sampling distortion of the GRPO gradient, and optimizer momentum drift under zero-gradient batches. The analysis applies to any setting where the valid output space is a small fraction of the full generation space---formal specifications, theorem proofs, type-correct programs, or any task with strict structural constraints.

\subsection{Setup}

Let $\mathcal{Y} = \mathcal{V}^T$ denote the space of token sequences of length $T$ over vocabulary $\mathcal{V}$, with $|\mathcal{Y}| = |\mathcal{V}|^T$. Let $\mathcal{M} \subset \mathcal{Y}$ denote the \textbf{valid manifold}---the set of outputs satisfying task-specific structural constraints (e.g., compilable specifications, well-typed proofs). Define the on-manifold probability mass:
\begin{equation}
    p_{\mathcal{M}} \triangleq \sum_{y \in \mathcal{M}} \pi_\theta(y \mid x)
\end{equation}

\textbf{Scope of this analysis.} The collapse mechanisms below concern samples that \emph{enter} the GRPO loss: every Solver rollout, and every filter-passing Proposer specification. The Q-Gen filter pipeline (\S\ref{sec:heuristic_reward}) rejects malformed Proposer candidates outright, so they never appear as $y_i$ in the group and contribute no gradient at all---this is a separate, intentional zero-gradient pathway and is \emph{not} the subject of the theorems that follow.

\textbf{Reward structure.} For samples that do enter the group, we use \emph{pure binary} rewards: $R(y) = 1$ if $y \in \mathcal{M}$ and is verified correct (or, on the Proposer side, if the Solver-outcome reward fires), and $R(y) = 0$ otherwise. This is a deliberate design choice validated empirically: we found that heuristic intermediate scores (e.g., partial credit for syntactically valid but semantically incorrect outputs) \emph{harm} training by introducing reward noise that destabilizes the policy gradient (see \S\ref{sec:heuristic_reward}). In the MLRL-aligned GRPO variant used by ANCORA, the advantage for sample $y_i$ in a group of $N$ samples is $A(y_i) = (R(y_i)-\bar{R})/(\bar{R}+\epsilon)$, where $\bar{R} = \frac{1}{N}\sum_{j=1}^N R(y_j)$.

\textbf{Consequence of binary rewards.} When all $N$ samples in a group earn $R(y_i) = 0$ (no Solver attempt verifies, or no filter-passing Proposer spec fires the frontier-targeting reward), the GRPO baseline becomes $\bar{R} = 0$, and every advantage is $A(y_i) = 0$. The gradient contribution from this group is \emph{exactly zero}. This is strictly better than tiered negative rewards (which would produce nonzero advantages that push the policy in arbitrary directions), but as we show in Theorem~\ref{thm:sparse_distortion} and Proposition~\ref{thm:momentum_drift}, zero gradients create their own pathology through sparse-sampling distortion and optimizer momentum drift; meanwhile, the rare \emph{mixed} group with $0<k<N$ positive samples produces concentrated negative-advantage directions on the $N-k$ zero-reward samples, which is the second mechanism formalized in Theorem~\ref{thm:dominance}.

\subsection{Gradient Starvation (Theorem 1)}

\begin{theorem}[Positive Signal Starvation]
\label{thm:starvation}
Given group size $N$ and on-manifold probability $p_{\mathcal{M}}$, the probability that a group contains \emph{zero} valid samples is:
\begin{equation}
    \Pr\left(\forall\, i \in [N]:\; y_i \notin \mathcal{M}\right) = (1 - p_{\mathcal{M}})^N \geq 1 - N p_{\mathcal{M}}
\end{equation}
When $p_{\mathcal{M}} \ll 1/N$:
\begin{equation}
    (1 - p_{\mathcal{M}})^N \approx e^{-N p_{\mathcal{M}}} \to 1
\end{equation}
\end{theorem}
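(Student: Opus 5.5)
The plan is to treat the $N$ rollouts in a group as i.i.d.\ draws $y_1,\dots,y_N\sim\pi_\theta(\cdot\mid x)$, which is how the GRPO group is sampled. Conditional on the prompt $x$, each draw independently satisfies $\Pr(y_i\notin\mathcal{M})=1-p_{\mathcal{M}}$ by the definition of $p_{\mathcal{M}}$. The probability that the whole group misses the manifold then factorizes as a product of $N$ identical terms, which gives the equality $(1-p_{\mathcal{M}})^N$ directly.

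For the lower bound $(1-p_{\mathcal{M}})^N\ge 1-Np_{\mathcal{M}}$, I will invoke Bernoulli's inequality $(1+t)^N\ge 1+Nt$, valid for $t\ge -1$ and integer $N\ge 1$, with $t=-p_{\mathcal{M}}\in[-1,0]$. To keep the note self-contained, I will also record the short induction:
\begin{equation*}
(1-p)^{N+1}\ge (1-Np)(1-p)=1-(N+1)p+Np^2\ge 1-(N+1)p.
\end{equation*}
Equivalently, this is the union bound applied to the complementary event that at least one sample lands in $\mathcal{M}$.

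For the asymptotic statement, I will use $\log(1-p)=-p+O(p^2)$. This gives $(1-p_{\mathcal{M}})^N=\exp\bigl(-Np_{\mathcal{M}}+O(Np_{\mathcal{M}}^2)\bigr)$. When $p_{\mathcal{M}}\ll 1/N$, the error term $Np_{\mathcal{M}}^2$ is smaller than $Np_{\mathcal{M}}$ by a factor $p_{\mathcal{M}}$, so the approximation $e^{-Np_{\mathcal{M}}}$ holds. Moreover, as $Np_{\mathcal{M}}\to 0$ the exponent tends to $0$, so the probability tends to $1$. The sandwich
\begin{equation*}
1-Np_{\mathcal{M}}\le(1-p_{\mathcal{M}})^N\le e^{-Np_{\mathcal{M}}}\le 1,
\end{equation*}
where the middle inequality follows from $1-p\le e^{-p}$, makes the limit rigorous without needing any Taylor remainder bookkeeping.

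The only real subtlety is the meaning of the limit "$\to 1$". I will make it precise as the regime $Np_{\mathcal{M}}\to 0$, and the sandwich above settles it in that regime. I also need to state the i.i.d.\ assumption on the rollouts explicitly. In vLLM sampling the rollouts are independent given the prompt and the current policy parameters, so this assumption is justified.
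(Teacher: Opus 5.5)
Your proof is correct and follows the same route as the paper, which simply invokes the independent Bernoulli-trial model to get $(1-p_{\mathcal{M}})^N$ and then illustrates it numerically with $N=8$, $p_{\mathcal{M}}=1/256$. Your Bernoulli-inequality (equivalently, union-bound) step and the sandwich $1-Np_{\mathcal{M}}\le(1-p_{\mathcal{M}})^N\le e^{-Np_{\mathcal{M}}}$ supply the elementary inequalities that the paper's one-line proof leaves unstated.
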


\begin{proof}
Direct application of the Bernoulli trial model. Each sample independently hits $\mathcal{M}$ with probability $p_{\mathcal{M}}$. For $p_{\mathcal{M}} = 1/256$ and $N = 8$ (our practical setting), the probability of at least one valid sample is $1 - (1 - 1/256)^{8} \approx 0.031$, meaning $\sim$97\% of groups produce zero gradient under binary rewards.
\end{proof}

\textbf{Implication.} With binary rewards, a zero-hit group produces zero GRPO gradient. The policy receives a learning signal from at most $\sim$3\% of groups. The remaining 97\% are wasted compute that, as we show next, create pathological gradient estimates in the groups that \emph{do} contain valid samples.

\subsection{Gradient Decomposition (Theorem 2)}

\begin{theorem}[Off-Manifold Coupling in Informative Groups]
\label{thm:dominance}
The expected mean-normalized GRPO policy gradient decomposes as:
\begin{equation}
    \nabla_\theta J(\theta) = \underbrace{p_{\mathcal{M}} \cdot \mathbf{g}^+}_{\text{on-manifold signal}} + \underbrace{(1 - p_{\mathcal{M}}) \cdot \mathbf{g}^-}_{\text{off-manifold signal}}
\end{equation}
where:
\begin{align}
    \mathbf{g}^+ &= \mathbb{E}_{y \sim \pi_\theta(\cdot \mid x),\, y \in \mathcal{M}} \left[ A(y) \nabla_\theta \log \pi_\theta(y \mid x) \right] \\
    \mathbf{g}^- &= \mathbb{E}_{y \sim \pi_\theta(\cdot \mid x),\, y \notin \mathcal{M}} \left[ A(y) \nabla_\theta \log \pi_\theta(y \mid x) \right]
\end{align}
For any informative group with $0<k<N$ valid samples, the group-averaged estimator contains both the ML-normalized positive term and a sampled negative score term:
\begin{equation}
    \hat{g}_{\mathrm{MN}}
    =
    \frac{1}{N}\left(
        \frac{N-k}{k}\sum_{y^+\in\mathcal{G}}\nabla_\theta\log\pi_\theta(y^+|x)
        -
        \sum_{y^-\in\mathcal{G}}\nabla_\theta\log\pi_\theta(y^-|x)
    \right).
\end{equation}
Thus mean normalization fixes the positive coefficient required by MLRL, but it does not remove the off-manifold directions sampled in the same finite group.
\end{theorem}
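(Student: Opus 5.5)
The proof has two independent parts. The first is the population-level decomposition. The second is the finite-group algebra for the mean-normalized advantage $A(y_i)=(R(y_i)-\bar R)/\bar R$. For the statement, we take $R(y)=\mathbbm{1}[y\in\mathcal{M}]$, identifying valid with rewarded, and set $\epsilon=0$ on informative groups, where $\bar R=k/N>0$.

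\emph{Decomposition.} Write the expected gradient as $\mathbb{E}_{y\sim\pi_\theta}[A(y)\nabla_\theta\log\pi_\theta(y|x)]$. Split this expectation by the law of total expectation over the partition $\mathcal{Y}=\mathcal{M}\sqcup(\mathcal{Y}\setminus\mathcal{M})$. The event $y\in\mathcal{M}$ has probability $p_{\mathcal{M}}$ by the definition of on-manifold mass, and the conditional expectations on the two cells are exactly $\mathbf{g}^+$ and $\mathbf{g}^-$ as defined in the statement.

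One subtlety needs care. $A(y)$ depends on the whole group through $\bar R$, so "expectation over $y$" should be read as the expectation of the group-averaged estimator $\frac1N\sum_i A(y_i)\nabla\log\pi_\theta(y_i|x)$. By exchangeability, this equals the expectation of a single index term $A(y_1)\nabla\log\pi_\theta(y_1|x)$, with the other $N-1$ samples integrated out. Conditioning on $y_1\in\mathcal{M}$ versus $y_1\notin\mathcal{M}$ then gives the identity. I will state this interpretation explicitly so the definitions of $\mathbf{g}^\pm$ are well-posed.

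\emph{Finite-group identity.} Fix a group $\mathcal{G}$ with $0<k<N$ valid samples, so $\bar R=k/N$. For a valid sample, $A=(1-k/N)/(k/N)=(N-k)/k$. For an invalid sample, $A=(0-k/N)/(k/N)=-1$. Substituting these into $\frac1N\sum_i A(y_i)\nabla\log\pi_\theta(y_i|x)$ and grouping the positive and negative samples gives the displayed formula for $\hat g_{\mathrm{MN}}$ directly.

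To connect with MLRL, rewrite $\frac{N-k}{k}=\frac{N}{k}-1$. The positive part then becomes $\frac1k\sum_{y^+}\nabla\log\pi_\theta$, which is $\hat g_{\text{ML}}$ of Eq.~\ref{eq:maxrl}, minus $\frac1N\sum_{\text{all }i}\nabla\log\pi_\theta(y_i|x)$. That subtracted term is a centered score with zero expectation, since $\mathbb{E}[\nabla\log\pi_\theta]=0$. This shows the positive coefficient is the MLRL one, while the per-sample negative terms $-\frac1N\nabla\log\pi_\theta(y^-|x)$ persist in any realization, even though they vanish in expectation together with the positive correction.

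The main obstacle is interpretive rather than computational: making $\mathbf{g}^\pm$ well-defined given the group dependence of $A$, and handling $\epsilon$. I would set $\epsilon\to0$ on informative groups, note that zero-hit groups contribute zero (consistent with Theorem~\ref{thm:starvation}), and note that all-valid groups ($k=N$) also give $A\equiv0$. Everything else is direct substitution.
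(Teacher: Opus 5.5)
Your proposal is correct and follows the paper's own proof. The decomposition is the law of total expectation over $y\in\mathcal{M}$ versus $y\notin\mathcal{M}$, and the group formula comes from substituting $\bar R=k/N$, $A(y^+)=(N-k)/k$, $A(y^-)=-1$. Your exchangeability reading of $\mathbf{g}^{\pm}$ (condition on $y_1$, integrate out the rest of the group) makes precise what the paper leaves implicit.

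One side remark needs correcting, though the theorem does not depend on it. The rewrite $\hat{g}_{\mathrm{MN}}=\hat{g}_{\text{ML}}-\frac{1}{N}\sum_i\nabla_\theta\log\pi_\theta(y_i|x)$ fails on zero-hit groups: there both estimators vanish, but the score sum does not. The valid identity therefore carries $\mathbbm{1}[k>0]$ on the subtracted term. That term's expectation is not zero. Using $\mathbb{E}[\mathbbm{1}[k=0]\,\nabla_\theta\log\pi_\theta(y_1|x)]=(1-p_{\mathcal{M}})^{N-1}\nabla_\theta(1-p_{\mathcal{M}})$, it equals $(1-p_{\mathcal{M}})^{N-1}\nabla_\theta p_{\mathcal{M}}$, which vanishes only as the no-hit probability does. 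You should drop or qualify the claim ``zero expectation since $\mathbb{E}[\nabla_\theta\log\pi_\theta]=0$''.
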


\begin{proof}
By the law of total expectation, conditioning on $y \in \mathcal{M}$ vs.\ $y \notin \mathcal{M}$. Under mean-normalized binary rewards, in a group with $k \geq 1$ valid samples out of $N$, the baseline is $\bar{R} = k/N$, and advantages are $A(y^+) = (N-k)/k$ for valid and $A(y^-) = -1$ for invalid samples. Thus the positive coefficient recovers the MLRL $1/k$ success normalization after the group average, while every sampled invalid output still contributes a negative score term. When $p_{\mathcal{M}} \ll 1$, informative groups are rare and the negative component is estimated from only $N-k$ sampled off-manifold directions, producing the sparse-sampling distortion formalized next.
\end{proof}

\subsection{Sparse Sampling Distortion (Theorem 3)}
\label{sec:sparse_distortion}

In practice, group sizes are small ($N = 8$ in our setting) while the true positive rate can be extremely sparse ($p_{\mathcal{M}} \approx 1/256$). This creates a finite-sample distortion in the GRPO gradient that concentrates repulsion on a few specific negatives rather than diffusing it uniformly.

\begin{theorem}[GRPO Sparse-Sampling Distortion]
\label{thm:sparse_distortion}
Let the true positive rate be $p_{\mathcal{M}} = 1/M$ where $M \gg N$. Consider the idealized gradient under full enumeration vs.\ the realized gradient with $N$ samples. \emph{Assumption (uniform off-manifold sampling):} the policy assigns near-equal probability $\pi_\theta(y^+\mid x)\approx 1/M$ to the valid output and $\pi_\theta(y^-_i\mid x)\approx 1/M$ to each of the $M-1$ invalid modes, so the total off-manifold mass is $(M-1)/M$. This is the conservative case for the variance estimate below; if $\pi_\theta(\cdot|x)$ is concentrated on a few modes off the manifold, the qualitative claim of repulsion concentration on a small subspace strengthens.

\textbf{Full-enumeration gradient.} With access to all $M$ outputs (one positive, $M-1$ negatives), baseline $\bar{R} = 1/M$:
\begin{equation}
    \nabla_\theta J_{\text{full}} = \frac{M-1}{M} \nabla_\theta \log \pi_\theta(y^+) - \sum_{i=1}^{M-1} \frac{1}{M} \nabla_\theta \log \pi_\theta(y^-_i)
\end{equation}
The negative gradient is spread \emph{uniformly} across $M-1$ negatives, each receiving weight $1/M$.

\textbf{GRPO gradient with $N$ samples.} Conditioned on a group containing exactly one valid sample $y^+$ and $N-1$ negatives $\{y^-_{j_1}, \ldots, y^-_{j_{N-1}}\}$ drawn without replacement from $\mathcal{Y} \setminus \mathcal{M}$, the advantage baseline is $\bar{R} = 1/N$, giving:
\begin{equation}
    \nabla_\theta \hat{J}_{\text{GRPO}} = \frac{N-1}{N} \nabla_\theta \log \pi_\theta(y^+) - \sum_{k=1}^{N-1} \frac{1}{N} \nabla_\theta \log \pi_\theta(y^-_{j_k})
\end{equation}

The finite-sample deviation in the negative component is:
\begin{equation}
    \Delta \mathbf{g}^- = -\frac{1}{N}\sum_{k=1}^{N-1} \nabla_\theta \log\pi_\theta(y^-_{j_k}) + \frac{1}{M}\sum_{i=1}^{M-1} \nabla_\theta \log\pi_\theta(y^-_i)
\end{equation}
The variance of this deviation term scales as:
\begin{equation}
    \text{Var}[\Delta \mathbf{g}^-] = \mathcal{O}\!\left(\frac{1}{N} - \frac{1}{M}\right) \cdot \text{Var}_{y \notin \mathcal{M}}\!\left[\nabla_\theta \log \pi_\theta(y)\right] \approx \frac{\sigma^2_{\text{score}}}{N}
\end{equation}
where $\sigma^2_{\text{score}}$ is the variance of score functions across the off-manifold space.
\end{theorem}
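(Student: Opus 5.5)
The plan is to treat $\Delta\mathbf{g}^-$ as a centered sample-mean deviation and reduce its variance to the standard finite-population sampling formula. The only substantive claim is the variance scaling; the two displayed gradients follow by substituting the mean-normalized advantages of Eq.~\ref{eq:mean_norm_mlrl_adv}, exactly as in the proof of Theorem~\ref{thm:dominance}.

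\textbf{Step 1: the two gradients.} For full enumeration the baseline is $\bar R=1/M$. I will use the plain centered form $R-\bar R$ with the uniform weights $\pi\approx 1/M$, which gives weight $(M-1)/M$ on $y^+$ and $-1/M$ on each negative. I will note that the $1/\bar R$ rescaling is a common positive factor that I drop in both expressions so they can be compared. For the group of $N$ with one hit the baseline is $1/N$, giving the stated coefficients $(N-1)/N$ and $-1/N$. Both positive terms involve the same $y^+$, and their coefficients differ only by $1/N-1/M$. So the deviation is concentrated in the negative component, which is $\Delta\mathbf{g}^-$ as displayed.

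\textbf{Step 2: rewrite as a sample-mean deviation.} Let $s_i=\nabla_\theta\log\pi_\theta(y^-_i)$ for $i\le M-1$, with population mean $\bar s$ and population covariance $\Sigma=\mathrm{Var}_{y\notin\mathcal{M}}[s]$. Then
\[
\Delta\mathbf{g}^- = -\tfrac{N-1}{N}\,\hat s_{N-1} + \tfrac{M-1}{M}\,\bar s ,
\]
where $\hat s_{N-1}$ is the mean of the $N-1$ draws taken without replacement. I split this into a random part $-\tfrac{N-1}{N}(\hat s_{N-1}-\bar s)$ and a deterministic bias $\bigl(\tfrac{M-1}{M}-\tfrac{N-1}{N}\bigr)\bar s$. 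The bias does not contribute to the variance.

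\textbf{Step 3: finite-population correction.} Sampling $n=N-1$ items without replacement from a population of size $M-1$ gives
\[
\mathrm{Var}[\hat s_n] = \frac{\Sigma}{n}\cdot\frac{(M-1)-n}{(M-1)-1}.
\]
Multiplying by $\bigl(\tfrac{N-1}{N}\bigr)^2$ yields $\tfrac{N-1}{N^2}\cdot\tfrac{M-N}{M-2}\,\Sigma$. This is $\Theta\!\left(\tfrac1N-\tfrac1M\right)\Sigma$: expanding $\tfrac{(N-1)(M-N)}{N^2(M-2)}$ and comparing with $\tfrac{M-N}{NM}$ gives a ratio bounded between absolute constants for $2\le N\le M$. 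In the regime $M\gg N$ this reduces to $\approx\sigma^2_{\text{score}}/N$, with $\sigma^2_{\text{score}}=\mathrm{tr}\,\Sigma$ or read per coordinate.

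\textbf{Main obstacle.} The uniform-sampling assumption is what lets me treat the drawn negatives as a simple random sample from the $M-1$ modes. Without it, draws would be $\pi$-weighted, and the variance would be taken under $\pi$ restricted to $\mathcal{Y}\setminus\mathcal{M}$. The remark about concentrated $\pi$ is therefore qualitative only, and I would state it as such rather than prove it. The other delicate point is Step 1's treatment of the $1/\bar R$ normalization: it must be handled consistently so that the comparison is between like-scaled quantities. I will flag that the $\mathcal{O}(\cdot)$ statement is insensitive to this choice up to the constant factor $M/N$ if the normalization is retained.
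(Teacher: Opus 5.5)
Your proposal is correct, but it takes a different route from the paper. The paper's proof never computes a variance. It compares per-negative weights: $1/N$ on each of the $N-1$ sampled negatives versus $1/M$ on each of the $M-1$ negatives in the full gradient. From this it reads off the $M/N$ (here $32\times$) over-repulsion on the sampled directions and the zero repulsion on the $M-N$ unsampled ones. It concludes that the error is anisotropic, supported on the random span of the sampled scores, and ties this to LLD; the displayed variance formula itself is only asserted.

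You instead split $\Delta\mathbf{g}^-=-\tfrac{N-1}{N}(\hat s_{N-1}-\bar s)+(\tfrac1N-\tfrac1M)\bar s$ and apply the without-replacement finite-population formula. This gives $\tfrac{(N-1)(M-N)}{N^2(M-2)}\Sigma$, which equals $\tfrac{(N-1)M}{N(M-2)}$ times $(\tfrac1N-\tfrac1M)\Sigma$, and that coefficient lies strictly between $\tfrac12$ and $3$. Your route actually proves the displayed scaling, even as a two-sided bound, and makes the deterministic bias explicit; the paper does not do this. What your route does not give is the directional claim. A covariance of this size is equally compatible with isotropic noise, so the paper's ``repulsion concentrated on a few directions'' reading still rests on its weight-counting picture.

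There is one slip in your Step 1. $1/\bar R$ is $M$ in the full-enumeration expression and $N$ in the group expression, so it is not a common factor you can drop. Your fallback remark also fails, because $M/N$ is unbounded when $M\gg N$. Keeping $1/\bar R$ on plain sums would give $-(N-1)\hat s_{N-1}+(M-1)\bar s$, which is no longer a centered sample-mean deviation. In fact nothing needs to be dropped. Multiply $1/\bar R$ by the averaging weight: $\pi=1/M$ in the expectation, $1/N$ in the group mean. In both cases the product is $1/m=1$, so the displayed gradients are exactly the mean-normalized estimator.
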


\begin{proof}
The key observation is that the GRPO negative gradient concentrates weight $1/N$ on each of $N-1$ specific negatives, while the ideal gradient distributes weight $1/M$ across $M-1$ negatives. Since $1/N \gg 1/M$ (e.g., $1/8 \gg 1/256$), each sampled negative receives $M/N \approx 32$ times more repulsion than it should.

Concretely, with $M = 256$ and $N = 8$: the ideal gradient pushes away from each of 255 negatives with weight $1/256 \approx 0.004$, producing a near-uniform ``repulsion shell.'' The realized GRPO gradient pushes away from 7 specific negatives with weight $1/8 = 0.125$---a $32\times$ amplification on these 7 directions and zero repulsion on the remaining 248. This is not just noisy; it is anisotropic because the 7 sampled negatives determine a specific (random) subspace of the off-manifold space. Over multiple steps, the policy learns to avoid the \emph{specific} outputs it has encountered rather than moving away from the off-manifold region as a whole, causing \textbf{Lazy Likelihood Displacement}~\citep{liu2025lld}: the likelihood of the correct answer decreases because the policy wastes capacity actively suppressing a few sampled negatives rather than uniformly distributing mass away from the invalid region toward $\mathcal{M}$.
\end{proof}

\textbf{Connection to LLD.} \citet{liu2025lld} empirically demonstrate that GRPO with negative-only groups causes the likelihood of correct outputs to \emph{decrease} even when they are not sampled---a phenomenon they term Lazy Likelihood Displacement. Our analysis provides the mechanism: the sparse negative gradient creates an anisotropic finite-sample repulsion field that displaces mass non-uniformly, inadvertently reducing $\pi_\theta(y^+)$ for correct outputs that happen to share parameter-space structure with the over-penalized negatives.

\subsection{Optimizer Momentum Drift (Mechanism)}

Even in groups where all advantages are zero (the $\sim$97\% zero-gradient groups under binary rewards), stateful optimizers like AdamW continue to update parameters due to accumulated momentum. The following proposition is a direct unrolling of the AdamW recursion under a run of zero gradients, not a deep result; we label it as a proposition to emphasize that its content is mechanistic.

\begin{proposition}[AdamW Momentum Drift under Zero Gradients]
\label{thm:momentum_drift}
Consider AdamW with momentum coefficients $(\beta_1, \beta_2)$ and weight decay $\lambda$. Suppose at step $t$, the GRPO gradient is $\mathbf{g}_t = \mathbf{0}$ (all-zero advantages from a zero-hit group). The parameter update is nonetheless:
\begin{equation}
    \theta_{t+1} = (1 - \lambda\eta)\,\theta_t - \eta \cdot \frac{\beta_1 \mathbf{m}_{t-1}}{\sqrt{\beta_2 \mathbf{v}_{t-1}} + \epsilon}
\end{equation}
where $\mathbf{m}_{t-1}$, $\mathbf{v}_{t-1}$ are the first and second moment estimates from previous steps. After $\tau$ consecutive zero-gradient steps:
\begin{equation}
    \mathbf{m}_{t+\tau} = \beta_1^\tau \mathbf{m}_t, \quad \mathbf{v}_{t+\tau} = \beta_2^\tau \mathbf{v}_t
\end{equation}
The accumulated parameter displacement is:
\begin{equation}
    \Delta\theta_\tau = -\eta \sum_{s=1}^{\tau} \frac{\beta_1^s \mathbf{m}_t}{\sqrt{\beta_2^s \mathbf{v}_t} + \epsilon} - \lambda\eta\sum_{s=0}^{\tau-1}\theta_{t+s}
\end{equation}
Since $\beta_1 < \beta_2$ (typically $0.9$ vs.\ $0.999$), the per-step displacement uses the ratio $r\equiv\beta_1/\sqrt{\beta_2}$. With $\beta_1{=}0.9$, $\beta_2{=}0.999$ we get $r=0.9/\sqrt{0.999}\approx 0.9005$, so the geometric series $\sum_{s=1}^{\infty} r^s = r/(1{-}r) \approx 9.05$. Substituting and absorbing the constant:
\begin{equation}
    \|\Delta\theta_\tau\|_{\text{momentum}} \approx \frac{\eta}{1-\beta_1} \left\|\frac{\mathbf{m}_t}{\sqrt{\mathbf{v}_t} + \epsilon}\right\| \cdot \left(1 - \beta_1^\tau\right),
\end{equation}
recovering the standard "$\sim 1/(1-\beta_1)\approx 10$ momentum-equivalent steps of post-signal drift" rule of thumb.
\end{proposition}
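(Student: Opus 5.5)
The plan is to unroll the AdamW recursion directly, treating the statement as a bookkeeping identity plus a geometric-series estimate. I will work with the un-bias-corrected form used in the statement, $\mathbf{m}_t=\beta_1\mathbf{m}_{t-1}+(1-\beta_1)\mathbf{g}_t$, $\mathbf{v}_t=\beta_2\mathbf{v}_{t-1}+(1-\beta_2)\mathbf{g}_t^{\odot 2}$, and the decoupled update $\theta_{t+1}=\theta_t-\eta\,\mathbf{m}_t/(\sqrt{\mathbf{v}_t}+\epsilon)-\lambda\eta\,\theta_t$. All operations are coordinatewise. Bias-correction factors $1/(1-\beta_i^t)$ only rescale $\eta$ per step, and I will remark that they can be absorbed for $t$ large.

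\textbf{Step 1 (single zero-gradient step).} Substitute $\mathbf{g}_t=\mathbf{0}$. This gives $\mathbf{m}_t=\beta_1\mathbf{m}_{t-1}$ and $\mathbf{v}_t=\beta_2\mathbf{v}_{t-1}$. Plugging these into the update yields the displayed formula for $\theta_{t+1}$ with the factor $(1-\lambda\eta)\theta_t$.

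\textbf{Step 2 (a run of $\tau$ zero gradients).} By induction on $\tau$, the moments satisfy $\mathbf{m}_{t+s}=\beta_1^s\mathbf{m}_t$ and $\mathbf{v}_{t+s}=\beta_2^s\mathbf{v}_t$. Telescoping $\theta_{t+\tau}-\theta_t=\sum_{s}(\theta_{t+s+1}-\theta_{t+s})$ then gives the stated $\Delta\theta_\tau$. The momentum part has terms $\beta_1^s\mathbf{m}_t/(\sqrt{\beta_2^s\mathbf{v}_t}+\epsilon)$, and the weight-decay part is $-\lambda\eta\sum_s\theta_{t+s}$. Here I must be careful with indexing: whether the first term uses exponent $s=0$ or $s=1$ depends on whether step $t$ itself is counted as the first zero step. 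I will fix the convention so that it matches the statement's $s=1,\dots,\tau$.

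\textbf{Step 3 (the ratio bound).} In the regime $\epsilon\ll\sqrt{\beta_2^s\mathbf{v}_t}$, each momentum term equals $r^s\,\mathbf{m}_t/\sqrt{\mathbf{v}_t}$ with $r=\beta_1/\sqrt{\beta_2}$. This is where $\beta_1<\sqrt{\beta_2}$ (so $r<1$) is used. Summing the series gives the exact coefficient $r(1-r^\tau)/(1-r)$, which is about $9.05$ as $\tau\to\infty$. If $\epsilon$ is retained, the denominators only grow, so the same expression is an upper bound; as $\mathbf{v}$ decays, $\epsilon$ eventually dominates and the drift dies off even faster.

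\textbf{Step 4 (comparison with the rule of thumb), the main obstacle.} The final display replaces $r(1-r^\tau)/(1-r)$ by $(1-\beta_1^\tau)/(1-\beta_1)$. This is only an approximation, since $r\approx\beta_1$ but $r/(1-r)\approx 9.05$ while $1/(1-\beta_1)=10$. I would therefore state it as an order-of-magnitude equivalence valid up to a factor $r(1-\beta_1)/(1-r)\in[0.9,1]$ for standard hyperparameters, rather than as an equality. I would also justify it by bounding $r^\tau\le\beta_1^{\tau}(1-\beta_2)^{-\tau/2}$ for moderate $\tau$. This reconciliation, which requires honestly absorbing the constant, is the only non-mechanical point. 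Everything else is direct recursion-unrolling.
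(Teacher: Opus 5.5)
Your proposal is correct and takes the same route as the paper: it unrolls the AdamW recursion under zero gradients. It is also more complete. The paper's proof unrolls only one step, using the convention that $\theta_{t+1}$ is driven by $\mathbf{m}_{t+1}=\beta_1\mathbf{m}_t$ and decayed by $\lambda\theta_t$; this convention makes both index ranges in the stated $\Delta\theta_\tau$ come out exactly, so adopt it rather than the one in your Step~1. The paper then asserts the $\sim 1/(1-\beta_1)$ drift informally, whereas you telescope, sum the series exactly, and correctly note that the final display agrees with $r(1-r^\tau)/(1-r)$ only up to a factor of about $0.9$.

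The only slip is the side bound $r^\tau\le\beta_1^\tau(1-\beta_2)^{-\tau/2}$. It holds for $\beta_2\ge 1/2$ but is vacuous; you presumably meant $\beta_2^{-\tau/2}\approx 1+\tau(1-\beta_2)/2$. It is also unnecessary, since your exact coefficient already settles the comparison.
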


\begin{proof}
At step $t+1$ with $\mathbf{g}_{t+1} = \mathbf{0}$:
\begin{align}
    \mathbf{m}_{t+1} &= \beta_1 \mathbf{m}_t + (1-\beta_1) \cdot \mathbf{0} = \beta_1 \mathbf{m}_t \\
    \mathbf{v}_{t+1} &= \beta_2 \mathbf{v}_t + (1-\beta_2) \cdot \mathbf{0} = \beta_2 \mathbf{v}_t
\end{align}
The AdamW update (decoupled weight decay) is:
\begin{equation}
    \theta_{t+1} = \theta_t - \eta\left(\frac{\mathbf{m}_{t+1}}{\sqrt{\mathbf{v}_{t+1}} + \epsilon} + \lambda \theta_t\right) = \theta_t - \eta\left(\frac{\beta_1 \mathbf{m}_t}{\sqrt{\beta_2 \mathbf{v}_t} + \epsilon} + \lambda \theta_t\right)
\end{equation}
The momentum term $\mathbf{m}_t$ retains the \emph{direction} of the last nonzero gradient. If the last informative gradient was an anisotropic sparse-sampling gradient (Theorem~\ref{thm:sparse_distortion}), then the momentum drift continues to push the policy in that direction for $\sim 1/(1-\beta_1) = 10$ additional steps even without any new learning signal.

Simultaneously, the weight decay term $-\lambda\eta\theta_t$ shrinks all parameters toward zero uniformly, which further erodes the on-manifold capability that the SFT phase established. This is the mechanism identified by \citet{xu2026optimizerdrift} as ``optimizer-induced low-dimensional drift'': the parameters evolve along a low-rank manifold determined by historical gradients, even when current gradients carry no task-relevant information.
\end{proof}

\textbf{Combined pathology.} In a typical training step with $p_{\mathcal{M}} = 1/256$ and $N=8$: (1)~$\sim$97\% of groups produce zero gradient and contribute only momentum drift; (2)~the $\sim$3\% of groups with a valid sample produce a gradient with $32\times$ concentrated repulsion on 7 specific negatives; (3)~the momentum from these anisotropic gradients persists for $\sim$10 steps of zero-gradient updates. The net effect is a policy that slowly drifts away from the SFT-initialized manifold along directions determined by a handful of over-penalized negatives.

\subsection{Self-Reinforcing Collapse (Sketch)}

\begin{proposition}[Collapse Dynamics, Informal]
\label{thm:collapse}
Under the combined action of sparse-sampling distortion (Theorem~\ref{thm:sparse_distortion}) and momentum drift (Proposition~\ref{thm:momentum_drift}), and assuming that $p_{\mathcal{M}}(\theta)$ is locally $L$-Lipschitz in policy parameters, the system enters a self-reinforcing decrease of the on-manifold mass:
\begin{enumerate}
    \item \textbf{Likelihood displacement}: Biased repulsion gradients reduce $\pi_\theta(y^+)$ for correct outputs that share parameter-space structure with over-penalized negatives (LLD).
    \item \textbf{Manifold erosion}: Momentum drift during zero-gradient steps moves the policy away from the SFT manifold, reducing $p_{\mathcal{M}}$.
    \item \textbf{Starvation amplification}: Reduced $p_{\mathcal{M}}$ increases the fraction of zero-gradient groups (Theorem~\ref{thm:starvation}), lengthening drift episodes and further amplifying distortion.
\end{enumerate}
The expected sequence $p_{\mathcal{M}}^{(0)}, p_{\mathcal{M}}^{(1)}, \ldots$ exhibits a monotone decreasing trend with attractor $p^*=0$.
\end{proposition}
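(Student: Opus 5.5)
I will treat Proposition~\ref{thm:collapse} as a statement about a scalar stochastic process $p_t \equiv p_{\mathcal{M}}^{(t)}$. The aim is a one-step drift inequality of the form
\[
\mathbb{E}[p_{t+1}\mid\mathcal{F}_t] \le p_t - c\,h(p_t)
\]
for some nonnegative $h$ vanishing only at $0$, on the regime $p_t \lesssim 1/N$. That restriction is needed: above the threshold $1/N$ the mean-normalized positive term of Theorem~\ref{thm:dominance} dominates, and collapse should not be expected there. Given the drift inequality, $(p_t)$ is a bounded nonnegative supermartingale. The martingale convergence theorem then gives $p_t\to p_\infty$ a.s., and summability of $\mathbb{E}[h(p_t)]$ forces $h(p_\infty)=0$, i.e.\ $p_\infty=0$. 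This supplies both the monotone trend and the attractor $p^*=0$.

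To obtain the drift inequality I will split a step according to the group outcome. This follows the three items of the statement.

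\textbf{Zero-hit groups.} These occur with probability $(1-p_t)^N$ by Theorem~\ref{thm:starvation}. The gradient is zero, and Proposition~\ref{thm:momentum_drift} gives a parameter displacement of order $\eta\|\mathbf{m}_t/\sqrt{\mathbf{v}_t}\|$ plus weight decay. By the local $L$-Lipschitz assumption, the resulting change in $p$ is bounded by $L$ times this displacement. I will need a sign assumption here: the stored momentum direction, inherited from the last informative step, has nonpositive inner product with $\nabla_\theta p_{\mathcal{M}}$. This should follow once the informative-step analysis shows the net informative gradient is biased against $p_{\mathcal{M}}$.

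\textbf{Informative groups.} These occur with probability at most $Np_t$. I will use the decomposition of Theorem~\ref{thm:dominance} and the distortion bound of Theorem~\ref{thm:sparse_distortion}. The positive term raises $p$ by $O(\eta\,\frac{N-k}{kN}\|\nabla\log\pi(y^+)\|)$. The negative term has a deviation of variance of order $\sigma^2_{\text{score}}/N$, which by the LLD mechanism I model as a systematic decrease $-\gamma$ in $p$. Multiplying by the probability and adding the momentum contribution, which is amplified by the expected run length $1/(1-(1-p_t)^N)\approx 1/(Np_t)$, I expect an inequality of the form
\[
\mathbb{E}[\Delta p]\le Np_t(a-\gamma)-b,
\]
with $b>0$ coming from drift.

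The main obstacle is step one: the sign and magnitude of the LLD term. Nothing proved earlier in the paper gives an inequality that the sampled negatives reduce $p_{\mathcal{M}}$ in expectation. Theorem~\ref{thm:sparse_distortion} controls only variance, and in expectation the finite-sample negative term is unbiased for the full-enumeration one. My first attempt would be to impose an explicit alignment hypothesis: correlated score geometry, meaning $\langle\nabla\log\pi(y^-),\nabla\log\pi(y^+)\rangle\ge\kappa>0$ for the sampled negatives. Such a hypothesis makes repulsion from them directly lower $\pi(y^+)$, and I would state it as an assumption. The conclusion is informal in the same sense as Proposition~\ref{thm:collapse}, valid under these added conditions. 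Beyond that, the remaining steps are bookkeeping with Theorem~\ref{thm:starvation}'s $e^{-Np}$ approximation to show the amplification loop (item 3) makes $h$ strictly positive on $(0,1/N)$.
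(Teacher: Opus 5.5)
Your outer wrapper (a drift inequality on $p\lesssim 1/N$, then nonnegative-supermartingale convergence) is fine, and it goes beyond the paper. The paper's sketch combines the Lipschitz bound with a geometrically bounded momentum burst and then simply \emph{postulates} $\mathbb{E}[p_{k+1}-p_k\mid p_k]\le-\alpha\phi(p_k)$, conceding that the needed alignment is only empirical. Because your inequality holds only below $1/N$, you must stop the process at the exit time; Ville's inequality then gives $p_t\to0$ with probability at least $1-Np_0$, not almost surely.

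The gap is in your derivation of the drift, and it first fails at the momentum term. Proposition~\ref{thm:momentum_drift} gives $\mathbf{m}_{t+\tau}=\beta_1^{\tau}\mathbf{m}_t$. So an entire zero-gradient run moves $\theta$ by at most about $1/(1-\beta_1)$ times the last informative update, whatever its length; the paper's sketch says exactly this (``geometrically bounded rather than linear in $\tau_k$''). Amplifying by the run length $1/(Np_t)$ is therefore wrong. A $p$-independent $b>0$ is impossible in any case: for bounded scores, $\nabla_\theta p_{\mathcal M}=p_{\mathcal M}\,\mathbb{E}[\nabla_\theta\log\pi_\theta(y)\mid y\in\mathcal M]$, so any small step (weight decay included) changes $p_{\mathcal M}$ by $O(p_{\mathcal M})$. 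A constant $-b$ would force $\mathbb{E}[p_{t+1}\mid p_t]<0$ for small $p_t$. Moreover, the stored momentum only continues the last informative update, so it has no sign of its own. Your bound really reads $\mathbb{E}[\Delta p]\lesssim(1+C_m)\,Np_t\,(a-\gamma)$, and everything rides on the sign of $a-\gamma$.

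That sign is where the argument breaks. You correctly note that the sampled negative term is unbiased for the full-enumeration one; carried one step further, this rules out your fix. Take the reward the appendix's theorems use, $R=\mathbbm{1}[y\in\mathcal M]$, set $p=p_{\mathcal M}$, and let $\hat g=\mathbbm{1}[m>0]\bigl(\frac1m\sum_i r_i\nabla_\theta\log\pi_\theta(y_i)-\frac1N\sum_i\nabla_\theta\log\pi_\theta(y_i)\bigr)$ be the mean-normalized group gradient. Then
\[
\mathbb{E}[\hat g]=\Bigl(\frac{1-(1-p)^N}{p}-(1-p)^{N-1}\Bigr)\nabla_\theta p_{\mathcal M}=\Bigl(\sum_{j=0}^{N-2}(1-p)^j\Bigr)\nabla_\theta p_{\mathcal M}.
\]
So a plain gradient step \emph{raises} $p_{\mathcal M}$ to first order, i.e.\ $a-\gamma\ge0$. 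The variance of Theorem~\ref{thm:sparse_distortion} enters only at $O(\eta^2)$, through curvature.

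Consistently, your alignment hypothesis cannot hold on average over negatives drawn from $\pi_\theta$. The identity $\mathbb{E}_{\pi_\theta}[\nabla\log\pi_\theta]=0$ forces $\mathbb{E}[\nabla\log\pi_\theta(y)\mid y\notin\mathcal M]=-\nabla p_{\mathcal M}/(1-p_{\mathcal M})$. Its inner product with the averaged positive score $\nabla p_{\mathcal M}/p_{\mathcal M}$ is nonpositive, contradicting $\kappa>0$.

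A genuine negative drift must come from something your decomposition does not model:
\begin{itemize}
\item second-order variance--curvature terms when $\eta$ is large relative to the signal;
\item Adam's sample-dependent preconditioning, or weight decay (whose sign on $p_{\mathcal M}$ is unspecified);
\item a reward that is not membership in $\mathcal M$, such as the Proposer's frontier indicator, or Solver correctness on a strict subset of $\mathcal M$.
\end{itemize}
Until one of these is quantified, the drift inequality has to be assumed, exactly as the paper does.
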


\begin{proof}[Sketch.]
Let $p_k = p_{\mathcal{M}}^{(k)}$, and let $\tau_k = 1/(N p_k)$ be the expected inter-arrival of informative groups. During $\tau_k$ zero-gradient steps, AdamW momentum drift accumulates parameter displacement
\begin{equation}
    \|\Delta\theta\|_{\text{drift}} \le C_m\,\eta\left\|\frac{\mathbf{m}_t}{\sqrt{\mathbf{v}_t}+\epsilon}\right\|,\qquad C_m\approx \frac{1}{1-\beta_1}.
\end{equation}
Thus a single zero-gradient burst is geometrically bounded rather than linear in $\tau_k$, but as $p_k$ falls, such bursts become more frequent and informative hit groups become rarer. Under the local Lipschitz assumption, the change in on-manifold mass is bounded by $|p_{k+1}-p_k|\le L\cdot \|\Delta\theta_{k\to k+1}\|$. Taking expectations and combining the bounded momentum drift with the anisotropic informative-group gradient (Theorem~\ref{thm:sparse_distortion}), we assume a net negative renewal drift
\begin{equation}
    \mathbb{E}[p_{k+1}-p_k \mid p_k] \;\le\; -\alpha\,\phi(p_k), \qquad \alpha>0,
\end{equation}
for some nonnegative $\phi$ capturing the increasing zero-gradient frequency and decreasing hit rate as $p_k$ shrinks. This establishes the qualitative self-reinforcing direction toward the absorbing state $p=0$, without claiming a specific square-root collapse law. We label this a sketch rather than a theorem because (i)~the Lipschitz constant $L$ is not directly verifiable, (ii)~the directional alignment between drift and anisotropic finite-sample updates is only established empirically, and (iii)~the renewal drift $\phi$ is not estimated from first principles. The empirical evidence in Appendix~\ref{app:additional_ablations} (seed-pool collapse under empirical UCB; Freeze-S manifold erosion) is consistent with this sketch.
\end{proof}

\subsection{Implications for ANCORA}

Theorems~\ref{thm:starvation}--\ref{thm:collapse} establish three mechanisms by which RL training collapses when $p_{\mathcal{M}} \ll 1/N$. The ANCORA framework addresses each:

\begin{enumerate}
    \item \textbf{SFT Warm-Up (Breaking Theorem~\ref{thm:starvation}):} Fine-tuning on verified demonstrations raises $p_{\mathcal{M}}$ from near-zero to a level where $N p_{\mathcal{M}} \gg 1$, ensuring that most groups contain at least one valid sample for positive-advantage gradients.

    \item \textbf{Pure Binary Rewards (Mitigating Theorem~\ref{thm:sparse_distortion}):} By using binary rewards $R \in \{0, 1\}$ with filtering as accept/reject gating (rather than tiered heuristic scores), zero-hit groups produce \emph{exactly zero} gradient rather than arbitrary off-manifold gradients. This eliminates the most damaging source of directional misalignment. Heuristic scores were found experimentally to harm training---a finding consistent with the LLD analysis of~\citet{liu2025lld}, who show that reward noise amplifies likelihood displacement.

    \item \textbf{UCB-DAG Anchoring (Breaking Theorem~\ref{thm:collapse}):} By conditioning generation on verified seed specifications, the anchor mechanism constrains exploration to a neighborhood of $\mathcal{M}$ where valid outputs have higher density ($p_{\mathcal{M}} \gg 1/N$ locally), dramatically reducing the fraction of zero-gradient groups and shortening momentum drift episodes. The UCB selection further keeps training near the Solver's capability frontier, while the frontier-targeting reward family prevents already-mastered all-pass groups from dominating the Proposer update and, depending on the variant, biases the signal toward symmetric uncertainty or hard nonzero frontier cases.
\end{enumerate}

\section{Homotopy-Continuation Interpretation}
\label{proof:curriculum}

This appendix offers a homotopy-continuation reading of the dynamic-curriculum schedule. It is an \emph{interpretation}, not a convergence theorem: actual training uses AdamW (not Robbins--Monro step sizes), runs for a finite horizon, has a non-stationary objective driven by the evolving Seed Pool, and we verify neither Hessian non-singularity along the path nor the regularity assumptions required by stochastic-approximation guarantees. We retain the framework because the predictor--corrector intuition organizes several otherwise-disconnected design choices.

Let $\lambda \in [0, 1]$ be a notional curriculum parameter, where $\lambda=1$ corresponds to the full-context (guided) regime and $\lambda=0$ to the zero-shot (intrinsic) regime, and consider the family of objectives
\[ \mathcal{J}(\theta, \lambda) = \lambda J_{\text{guided}}(\theta) + (1-\lambda) J_{\text{intrinsic}}(\theta). \]
The dynamic curriculum is heuristically described as stochastic gradient ascent on $\mathcal{J}(\theta, \lambda_t)$ with $\lambda_t \to 0$; the conceptual content of homotopy continuation is that, \emph{under stronger regularity than we verify}, the iterate would track a slowly drifting stationary path.

\textbf{1. Smooth homotopy path (assumption-level).}
If $J_{\text{guided}}$ and $J_{\text{intrinsic}}$ are twice continuously differentiable and the Hessian $\nabla_\theta^2 \mathcal{J}(\theta, \lambda)$ is non-singular along the path, the implicit function theorem produces a continuous curve $\theta^*(\lambda)$ from the guided to the intrinsic optimum. We treat this as a \emph{structural hypothesis} that motivates the schedule rather than as a verified property of LLM RL.

\textbf{2. Bias--variance decomposition.}
Writing $\theta_{t+1} = \theta_t + \eta_t \hat{g}_t$, we decompose the stochastic gradient relative to the zero-shot target $\nabla J_0(\theta_t)$:
\[
\hat{g}_t \;=\; \underbrace{\nabla J_0(\theta_t)}_{\text{target}} \;+\; \underbrace{b_t}_{\text{annealing bias}} \;+\; \underbrace{\xi_t}_{\text{noise}},
\]
where $b_t \coloneqq \nabla J_{\lambda_t}(\theta_t) - \nabla J_0(\theta_t)$ and $\xi_t \coloneqq \hat{g}_t - \nabla J_{\lambda_t}(\theta_t)$. As $\lambda_t \to 0$, continuity of $\nabla \mathcal{J}$ in $\lambda$ implies $b_t \to 0$. Under bounded rewards and Lipschitz-smooth policies, $\{\xi_t\}$ behaves like a bounded-variance martingale-difference sequence around the population gradient.

\textbf{3. ODE-tracking sketch (under stronger assumptions).}
\textbf{If} the step sizes additionally satisfied the Robbins--Monro conditions $\sum_t \eta_t = \infty$, $\sum_t \eta_t^2 < \infty$ (which AdamW does \emph{not}, and our finite training horizon does not need to), the stochastic-approximation theory for slowly varying targets (e.g., Borkar 2008) would predict that $\theta_t$ tracks the ODE $\dot\theta = \nabla J_0(\theta)$ and converges to its stable set. In our setting this should be read as a \emph{qualitative} suggestion rather than a guarantee: it explains why a slow withdrawal of in-context guidance can succeed in principle, not why our specific 2{,}000-step AdamW run does.

\textbf{Predictor--corrector view.}
The schedule acts as a \textit{predictor--corrector} loop: the prompt provides a coarse ``prediction'' of the high-reward region, and the gradient updates ``correct'' the intrinsic policy to retain this performance once the prompt is withdrawn. Let $\mathcal{S}_k$ denote the set of problems solvable by $\pi_{\theta_k}$ with probability $p > 1-\epsilon$. The Proposer's frontier-targeting reward is zero at degenerate extremes and positive on uncertain boundary problems: in the main Band-1-of-$K$ instantiation, $R_{\text{prop}}(p) = \mathbbm{1}[m = 1]$ with $m = Kp$ fires on the rare-success band; in the Bernoulli-variance instantiation, $4p(1-p)$ peaks at $p=1/2$. Filter-rejected proposals are excluded from the GRPO group (contributing no gradient), fully-solved proposals yield zero reward ($p = 1 \implies R_{\text{prop}} = 0$), and unsolvable proposals likewise yield zero. The gradient on the Proposer therefore points towards the boundary $\partial \mathcal{S}_k$, and as $\pi_\theta$ updates and solves boundary problems, $\mathcal{S}_k$ expands to $\mathcal{S}_{k+1}$, shifting the reward surface outward. The coupled system is consistent with a path-following dynamic on the solvable-problem manifold; converting this consistency into a quantitative guarantee is left to future work.

\section{Additional Ablation Studies}
\label{app:additional_ablations}

\subsection{Algorithm Detail}
\begin{algorithm}[H]
    \caption{ANCORA: One Training Iteration}
    \label{alg:gsv_loop}
    \begin{algorithmic}[1]
        \STATE \textbf{Input}: Policy $\pi_\theta$, Verifier $\mathcal{V}$, Seed Pool DAG $\mathcal{S}$, Batch $B$, Specs/prompt $N$, Solutions/spec $K$
        \STATE \textbf{// Stage 1: UCB Seed Selection}
        \STATE Select $B$ seeds from $\mathcal{S}$ via UCB tree search (\S\ref{sec:ucb_dag}): roots by $\hat\mu^{\mathrm{ratchet}}_i + c\sqrt{\ln N_{\text{tot}} / n_i}$, then DAG descent with root quota $\rho{=}0.50$. The exploit term $\hat\mu^{\mathrm{ratchet}}_i$ is the discounted max-ratcheted aggregate (\texttt{ucb\_exploit\_mode=ratchet}); the discounted empirical mean alternative collapses substantially earlier in our ablation (Appendix~\ref{sec:empirical_ucb_negative}).
        \STATE \textbf{// Stage 2: Question Generation (Q-Gen; Proposer Rollout)}
        \FOR{each seed $x_{\text{seed}}$}
            \STATE Construct 1-shot prompt from $x_{\text{seed}}$; sample $N$ candidate specs $\{x_1, \dots, x_N\} \sim \pi_\theta$
            \STATE Run 3-stage filter (structural $\to$ compiler stub $\to$ semantic heuristics) on each $x_j$
            \STATE \textbf{Discard} rejected candidates \COMMENT{No reward signal; excluded from GRPO group}
        \ENDFOR
        \STATE \textbf{// Stage 3: Solution Generation (S-Gen; Solver Rollout)}
        \FOR{each valid spec $x_j$}
            \STATE Sample $K$ solutions $\{y_{j,1}, \dots, y_{j,K}\} \sim \pi_\theta(\cdot \mid x_j)$; verify $c_{j,k} = \mathcal{V}(x_j, y_{j,k}) \in \{0,1\}$
        \ENDFOR
        \STATE \textbf{// Stage 4: Coupled MLRL Advantage Computation}
        \STATE \textit{Proposer:} $m_j = \sum_k c_{j,k}$; \; $R_{\text{prop}}(x_j) = \mathbbm{1}[m_j = 1]$ \COMMENT{Eq.~\ref{eq:proposer_reward}; Band-1-of-$K$}
        \STATE \hspace{3.1em} $\bar{R}_{\text{prop}} = \frac{1}{|\mathcal{G}_q|}\sum_{x \in \mathcal{G}_q} R_{\text{prop}}(x)$; \; $\mathcal{A}_{\text{prop}}(x_j) = \frac{R_{\text{prop}}(x_j)-\bar{R}_{\text{prop}}}{\bar{R}_{\text{prop}}+\epsilon}$ \COMMENT{Mean-normalized among valid specs}
        \STATE \textit{Solver:} $\mathcal{A}_{\text{solve}}(y_{j,k}) = \frac{c_{j,k} - \bar{c}_j}{\bar{c}_j + \epsilon}$ \COMMENT{Mean-normalized MLRL estimator}
        \STATE \textbf{// Stage 5: Joint Policy Update}
        \STATE Broadcast $\mathcal{A}_{\text{prop}}$ and $\mathcal{A}_{\text{solve}}$ to proposer/solver response tokens
        \STATE Compute generated token counts $T_Q,T_S$; rescale by $\mathrm{scale}_Q=(T_Q+T_S)/(2T_Q)$ and $\mathrm{scale}_S=(T_Q+T_S)/(2T_S)$
        \STATE Apply static branch weights $w_Q = 0.375$ for Q-Gen and $w_S = 1.0$ for S-Gen; single gradient step on $\pi_\theta$
        \STATE \textbf{// Stage 6: Seed Pool Absorption \& Pruning}
        \STATE \textbf{Absorb:} Add novel verified specs as children of their seed parent (linkage expansion for diversity)
        \STATE \textbf{Prune:} Deduplicate near-duplicate nodes (MinHash); an optional long-horizon cleanup of dead-end nodes is available but not triggered in our main run (Appendix~\ref{subsec:prune_demoted}). Backpropagate reward along UCB path
\end{algorithmic}
\end{algorithm}

\subsection{Training Diagnostics}
\label{sec:training_diagnostics}

The main text focuses on benchmark performance and the coupled-branch ablation. Figure~\ref{fig:training_curves} reports lower-level diagnostics for the first 1{,}500 RL steps of the same main Band-1-of-$K$ run; the final $n{=}100$ benchmark evaluation at step~$2{,}000$ is the headline number reported in Table~\ref{tab:main_results}. The Q-Gen valid specification rate remains broadly stable despite exploration, and the test-time-training Seed Pool DAG grows monotonically from 436 filtered evaluation roots to 1{,}258 nodes by step~1{,}500, with all 822 novel nodes having been solved at least once before admission. This 436-root TTT pool is the filtered union of benchmark roots used for the TTT run, separate from the 234 filtered Dafny2Verus roots used for the Transfer setting. The gradient norm fluctuates after initial transients rather than serving as a primary performance metric; we therefore treat it as a stability diagnostic rather than a main-result claim.

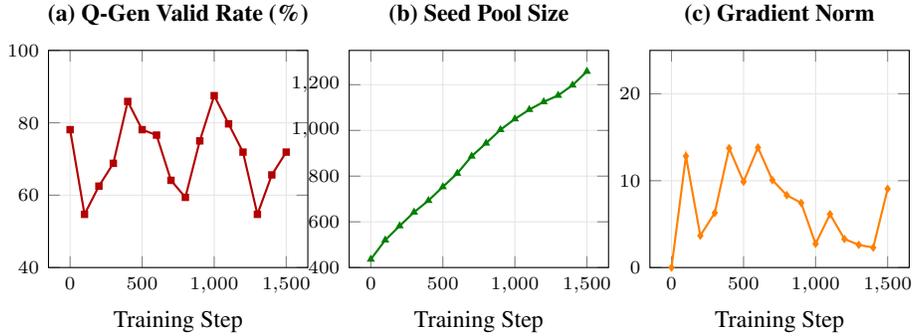
\begin{figure}[t]
\centering
\begin{tikzpicture}
\begin{groupplot}[
    group style={
        group size=3 by 1,
        horizontal sep=0.55cm,
    },
    width=0.36\textwidth,
    height=0.32\textwidth,
    xlabel={Training Step},
    tick label style={font=\scriptsize},
    label style={font=\small},
    title style={font=\small\bfseries},
    grid=major,
    grid style={gray!20},
]
\nextgroupplot[title={(a) Q-Gen Valid Rate (\%)}, ymin=40, ymax=100]
\addplot[red!70!black, thick, mark=square*, mark size=1pt] coordinates {
    (0,78.1) (100,54.7) (200,62.5) (300,68.8) (400,85.9)
    (500,78.1) (600,76.6) (700,64.1) (800,59.4) (900,75.0)
    (1000,87.5) (1100,79.7) (1200,71.9) (1300,54.7) (1400,65.6) (1500,71.9)
};

\nextgroupplot[title={(b) Seed Pool Size}, ymin=400, ymax=1350]
\addplot[green!50!black, thick, mark=triangle*, mark size=1pt] coordinates {
    (0,436) (100,520) (200,582) (300,642) (400,693)
    (500,753) (600,812) (700,887) (800,944) (900,1003)
    (1000,1050) (1100,1091) (1200,1125) (1300,1153) (1400,1198) (1500,1258)
};

\nextgroupplot[title={(c) Gradient Norm}, ymin=0, ymax=25]
\addplot[orange, thick, mark=diamond*, mark size=1pt] coordinates {
    (0,0.0) (100,12.83) (200,3.67) (300,6.29) (400,13.74)
    (500,9.88) (600,13.82) (700,10.07) (800,8.33) (900,7.44)
    (1000,2.73) (1100,6.14) (1200,3.29) (1300,2.61) (1400,2.30) (1500,9.06)
};

\end{groupplot}
\end{tikzpicture}
\caption{Training diagnostics over the first 1{,}500 RL steps of the canonical 2{,}000-step run; diagnostic logging ended slightly before the final $n{=}100$ benchmark eval at step~$2{,}000$ (Table~\ref{tab:main_results}). \textbf{(a)}~Q-Gen valid specification rate (fraction passing the 3-stage filter) fluctuates between 55--88\%, reflecting ongoing exploration near the valid-manifold boundary. \textbf{(b)}~Seed Pool DAG grows from 436 filtered TTT roots (the union of benchmark roots used for the TTT run) to 1{,}258 nodes as novel verified specifications pass the dedup gate. \textbf{(c)}~Actor gradient norm remains bounded after initial transients but is not monotonic; we use it as a stability diagnostic rather than a primary result.}
\label{fig:training_curves}
\end{figure}

\subsection{Proposer Reward Variant Ablation}
\label{sec:reward_ablation}

We compare the three bounded Proposer rewards introduced in Eq.~\ref{eq:entropy_reward_family} under matched seed pool, UCB hyperparameters, and Solver checkpoint: (1)~\textbf{Band-1-of-$K$} $\mathbbm{1}[m{=}1]$ (Eq.~\ref{eq:proposer_reward}), (2)~\textbf{Bernoulli variance} $4p(1{-}p)$, and (3)~\textbf{Exponential decay} $e^{-Kp}-e^{-K}$. Bernoulli variance is the symmetric uncertainty kernel with maximum at $p=0.5$; exponential decay is a low-pass stress-test kernel that shifts mass toward very low empirical pass rates; Band-1-of-$K$ is the concentrated sparse proxy used in the main run. Figure~\ref{fig:reward_ablation} shows training curves from a single run per variant. We treat these as \emph{illustrative rather than definitive}: subsequent runs under different settings of the surrounding hyperparameters (pass-rate decay, ZPD prior weight, UCB-statistic decay, branch weights) show that each of the three shapes can both run stably for long horizons and collapse, with Bernoulli variance sometimes outperforming the Band variant. The robust empirical finding across our runs is that \emph{collapse is possible under any of the three shapes without the manifold-stabilization mechanisms} (iterative SFT + UCB-DAG), and \emph{each of the three can be stabilized by them}; the specific curves below show one particular instance of the collapse regime for exponential decay, not a definitive ranking.

\begin{figure}[t]
\centering
\begin{tikzpicture}
\begin{groupplot}[
    group style={
        group size=2 by 1,
        horizontal sep=1.2cm,
    },
    width=0.48\textwidth,
    height=0.32\textwidth,
    xlabel={Step},
    tick label style={font=\scriptsize},
    label style={font=\small},
    title style={font=\small\bfseries},
    grid=major,
    grid style={gray!20},
    xmin=0, xmax=1250,
]
\nextgroupplot[title={(a) Dafny2Verus Pass@1}, ylabel={Pass@1 (\%)}, ymin=0, ymax=70]
\addplot[blue, thick, mark=*, mark size=1.2pt] coordinates {
    (0,26.64) (99,42.3) (199,52.2) (299,53.7) (399,54.4) (499,54.9) (599,57.8) (699,56.3) (799,59.1) (899,58.8) (999,59.9) (1099,61.8) (1199,62.2)
};
\label{plot:reward_band}
\addplot[orange, thick, dotted, mark=triangle*, mark size=1.2pt] coordinates {
    (0,26.64) (99,42.1) (199,49.2) (299,53.8) (399,57.6) (499,58.2) (599,56.5) (699,57.6)
};
\label{plot:reward_bernoulli}
\addplot[red!70!black, thick, dashed, mark=square*, mark size=1.2pt] coordinates {
    (0,26.64) (99,43.1) (199,51.9) (299,52.5) (399,54.0) (499,53.9) (599,49.9) (699,17.3) (799,30.3) (899,35.3)
};
\label{plot:reward_exp}

\nextgroupplot[title={(b) Actor Gradient Norm}, ylabel={$\|\nabla\|$}, ymin=0, ymax=25]
\addplot[blue, thick, mark=*, mark size=1.2pt] coordinates {
    (99,17.14) (199,11.81) (299,10.86) (399,7.28) (499,8.67) (599,9.00) (699,5.58) (799,4.86) (899,3.22) (999,3.46) (1099,2.25) (1199,4.77)
};
\addplot[orange, thick, dotted, mark=triangle*, mark size=1.2pt] coordinates {
    (99,19.79) (199,19.50) (299,15.88) (399,13.13) (499,10.47) (599,13.23) (699,15.57)
};
\addplot[red!70!black, thick, dashed, mark=square*, mark size=1.2pt] coordinates {
    (99,18.87) (199,18.65) (299,18.98) (399,16.00) (499,19.69) (599,11.72) (699,21.05) (799,13.37) (899,10.57)
};
\end{groupplot}
\node[anchor=north, yshift=-2mm] at (current bounding box.south) {%
    \scriptsize
    \ref*{plot:reward_band}\,Band-1-of-$K$ (main)\qquad
    \ref*{plot:reward_bernoulli}\,Bernoulli $4p(1{-}p)$\qquad
    \ref*{plot:reward_exp}\,Exp Decay
};
\end{tikzpicture}
\caption{Proposer reward variant ablation on the 234-seed d2v241 pool (identical UCB/SFT; \emph{single run per variant, illustrative only}). \textbf{(a)}~Dafny2Verus pass@1. In this particular run, Band-1-of-$K$ rises overall to $62.2\%$ at step $1199$ with small checkpoint-level fluctuations; Bernoulli variance $4p(1{-}p)$ plateaus near $58\%$ after step $400$; Exponential decay reaches $54\%$ by step $399$ and then collapses at step $699$ ($49.9\%\!\to\!17.3\%$), partially recovering to $\sim\!35\%$ without regaining the prior peak --- one instance of the manifold-collapse regime under a reward that over-amplifies zero-success low-pass ($p\!\to\!0$) specs. \textbf{(b)}~Actor gradient norm; the Band run declines toward low single-digit gradient norms, the denser rewards stay in $[10,20]$, and the exponential variant spikes to $21$ at its collapse step. Follow-up runs with different surrounding hyperparameters have produced stable trajectories for each shape, with Bernoulli variance sometimes stronger than Band-1-of-$K$; the cross-shape ranking under a controlled sweep remains an open empirical question.}
\label{fig:reward_ablation}
\end{figure}

\subsection{ZPD Upper-Margin Clipping (Negative Result)}

Beyond the default ZPD window $[\hat{a}, \hat{a}{+}\delta]$, we ablated a stricter variant that shrinks the window's upper end by a fixed margin $u$, i.e.\ $\mathrm{hi} = \min(\hat{a}{+}\delta{-}u,\,1)$, hypothesizing that denying the UCB selector access to near-ceiling seeds would reduce reward-dead proposals and slow Solver entropy collapse. At $u{=}0.10$ (with $\delta{=}0.40$), at matched step~200 the intervention produces only within-noise suppression of proposer pass rate ($\Delta q_{\mathrm{pass}}{=}{-}0.021$) and structural similarity ($\Delta q_{\mathrm{struct}}{=}{-}0.015$), while imposing a $4$--$7{\times}$ cold-start seed-supply deficit: during steps~$50$--$100$ the baseline accrues ${+}516$ new pool entries (a jackpot admission phase) whereas the clipped variant accrues only ${+}75$, and both runs converge to ${\sim}0.7$ new seeds/step thereafter. The knob is also structurally time-limited: once $\hat{a} > 1{-}\delta{+}u$ (here ${=}0.70$, crossed near step~$150$), the clamp $\min(\cdot,1)$ becomes the binding constraint and the margin has no effect. We report this as evidence that late-stage Proposer drift is not remediable by narrowing the ZPD window at admission time; the drift originates downstream in the reward-variance dynamics rather than in the sampling support.

\subsection{Exploit-Mode and Crossover: Long-Horizon Stability Ablation}
\label{sec:empirical_ucb_negative}

Proposition~\ref{thm:discounted_ucb_main} analyzes the discounted empirical mean $\hat\mu_t(q)=s_t(q)/n_t(q)$ as the UCB exploit term. The production code defaults to a max-ratcheted aggregate (\texttt{ucb\_exploit\_mode=ratchet}). We ablate the four-cell grid \{\texttt{empirical}, \texttt{ratchet}\} $\times$ \{\texttt{q\_crossover\_ratio}$=0.5$, $=0.0$\}, where \texttt{q\_crossover\_ratio} controls the fraction of Proposer rollouts conditioned on \emph{two} parent seeds simultaneously (a forced dual-parent ``merge A+B'' prompt). All other hyperparameters---topology, admission gates, LR, batch size, discount rates---are matched within the $2{\times}2$. We label the four cells \emph{Emp-XO}, \emph{Emp-NoXO}, \emph{Ratchet-NoXO}, and the main-paper run \emph{Main} (\texttt{ratchet}+crossover$=0.5$ with the production root quota $\rho{=}0.50$, i.e., the same 50\%-descent schedule used elsewhere in this paper).

\textbf{Result: empirical UCB collapses regardless of crossover.} Table~\ref{tab:empirical_negative} reports windowed validity, the collapse onset (first sustained 20-step window with $q_{\mathrm{valid}}<0.30$), and end-of-run state. Crossover acts as a $\sim$120-step \emph{accelerator} of an underlying empirical-UCB collapse rather than a necessary condition: Emp-XO breaks at step~$300$, Emp-NoXO breaks at step~$420$, both end in identical end-states ($q_{\mathrm{valid}}{<}0.10$, seed-pool growth frozen, near-zero solver feed). Ratchet-NoXO is the only no-crossover cell that preserves Q-format validity through the longest training horizon we logged ($\sim$$530$ steps).

\begin{table}[h]
\centering
\small
\caption{Exploit-mode $\times$ crossover ablation (4p1p reward, matched seeds and gates). Collapse onset $=$ first sustained 20-step window with $q_{\mathrm{valid}}<0.30$.}
\label{tab:empirical_negative}
\resizebox{\textwidth}{!}{
\begin{tabular}{llcccc}
\toprule
Run & Exploit / Crossover / Topology & $q_{\mathrm{valid}}$@[200,300] & $q_{\mathrm{valid}}$@[400,500] & Collapse onset & End-state \\
\midrule
Emp-XO & empirical / 0.5 / DAG & 0.543 & \textbf{0.184} & step 300 & dead ($q_{\mathrm{valid}}{=}0.08$@568) \\
Emp-NoXO & empirical / 0.0 / DAG & 0.708 & \textbf{0.171} & step 420 & dead ($q_{\mathrm{valid}}{=}0.02$@580) \\
Ratchet-NoXO & ratchet / 0.0 / DAG & 0.568 & 0.662 & none by 530 & stable, $q_{\mathrm{valid}}{=}0.77$@528 \\
Main & ratchet / 0.5 / $\rho{=}0.50$ DAG & ${\geq}0.4$ throughout & ${\geq}0.4$ & none by 1{,}500 & stable, 81.5\% D2V pass@1 \\
\bottomrule
\end{tabular}
}
\end{table}

\textbf{Mechanism (validity-censored reward).} Inspection of Emp-XO and Emp-NoXO rollouts at their respective collapse boundaries shows the same failure signature: $>89\%$ of rejected Q candidates fail post-parse \emph{static} filtering (\texttt{[NO ensures CLAUSE IN SPEC]} or implementation-body leakage), while compiler-stub rejections fall toward zero---i.e., the model emits parseable Verus-flavored code that lacks the spec contract. Let $V_t \in \{0,1\}$ denote Q-format validity and $U_t \in [0,1]$ the uncertainty/difficulty utility (e.g.\ $4p(1{-}p)$). The bandit reward is the \emph{censored} product $R_t = V_t U_t$, and Proposition~\ref{thm:discounted_ucb_main} implicitly assumes $\Pr(V_t{=}1\mid \text{arm}) \ge \alpha > 0$ uniformly. The empirical exploit term reflects only \emph{instantaneous} reward, so once Solver updates push the conditioning distribution into a region where $\Pr(V{=}1)$ is locally lower, the bandit chases the now-higher-mean arms in that region and stops re-exploring the previously-valid arms; the censored reward dynamics admit a self-reinforcing drift that empirical statistics cannot dampen. The max-ratcheted alternative is non-decreasing in observed reward, so previously-valid arms retain their selection priority and the variation budget $\Upsilon_T$ stays bounded. Crossover accelerates the collapse because dual-parent prompts directly raise $\Pr(V_t{=}0\mid\text{operator})$---the model interprets ``merge A+B'' as ``write a merged implementation'' rather than a new spec---but the underlying drift is a property of empirical UCB applied to a censored reward.

\textbf{Caveat: ratchet preserves validity but not output diversity.} Ratchet-NoXO is stable on $q_{\mathrm{valid}}$ but its Proposer entropy contracts hard: $H_Q$ falls $1.13 \to 0.046$ over 530 steps. This is a different pathology from the empirical-mode validity collapse---the model converges on a narrow valid output mode rather than drifting off the manifold. The ratcheted aggregate solves the validity question (Assumption~4 of \S\ref{subsec:pzpd_gaps}) but does not by itself address mode collapse, which would require an explicit entropy or KL regularizer.

\textbf{Theoretical fix directions, not enabled in the reported runs.} The cleanest theoretical remedy is either (i)~an operator-aware bandit with arm $=$ (seed-path, generation-operator) so crossover can be \emph{learned} to be down-weighted by UCB statistics, or (ii)~a validity-aware reward $R_Q = V\cdot U - \lambda(1{-}V)$ that supplies a direct corrective gradient on Q-format violations. The production system instead uses the simpler combination---ratcheted UCB on the censored reward, with crossover $=0.5$ and a balanced root/DAG schedule ($\rho{=}0.50$). The within-table cells (run at the same $\rho$ schedule but stopped earlier) show that the ratchet exploit alone is sufficient for validity stability whether or not crossover is enabled, while empirical exploit collapses regardless.

\textbf{Tool-augmented Solver introduces a separate failure mode.} Adding verifier-conditioned multi-turn repair (S-ReAct) on top of the Ratchet-NoXO stable cell does not reproduce the validity collapse of the empirical cells: through step~$\sim$$400$ the Q-format rejection rate stays low. However, by step~$\sim$$400$ a different pathology emerges---compiler-stub rejection climbs to ${\sim}46/64$ at step~$408$ while heuristic rejections remain near zero. Inspection of repair-turn outputs shows the model generating \emph{semantically} reasonable but \emph{syntactically} broken Verus bodies (type mismatches, malformed quantifier ranges) that fail compiler verification. We interpret this as the repair turn pushing the Solver branch into a local mode where post-edit code is plausible-looking but type-incorrect. We therefore do not include S-ReAct as a method ingredient in the main results; the rescue-rate gains it provides on the Q-Gen reward proxy do not translate cleanly into stable benchmark performance within our compute horizon, and the new compiler-rejection drift would need its own validity-shaping mechanism (analogous to point (ii) above) before being deployed.

\subsection{Branch-Balance Failure under Reverse Imbalance (Negative Result)}
\label{sec:reverse_imbalance}

The token-mass equalization in \S\ref{sec:joint_opt} is designed for the regime $T_S \gtrsim T_Q$ (Solver implementations longer than Proposer specs). One of our long-horizon runs (\texttt{q4p1p}, empirical UCB, slow decay) exposed an opposite regime where this assumption silently flips and the formula becomes anti-stabilizing.

\textbf{Signature.} After step ${\sim}300$ the Proposer drifts off its output contract: the dominant rejection signal is \texttt{[NO ensures CLAUSE IN SPEC]} (rising from ${\sim}9\%$ at step 100 to ${>}89\%$ by step 568), with rollouts emitting implementation-style bodies rather than spec targets. \texttt{q\_valid\_rate} falls $0.61\!\to\!0.08$, while the surviving valid Q subset becomes increasingly easy ($q_{\mathrm{pass}}$ rises $0.40\!\to\!0.82$). Because invalid Q rows generate long token responses but reach no Solver, $T_Q$ stays large while $T_S$ shrinks; the formula yields $\mathrm{scale}_Q\!\approx\!0.83$, and with static $q_{\mathrm{scale}}{=}0.375$ the effective Q multiplier falls from ${\sim}1.4$ to ${\sim}0.31$, while the effective S multiplier rises to ${\sim}1.9$. The branch most needing repair receives the smallest gradient share.

\textbf{Remediation directions} (not enabled in the main run, since the Ratchet-Band balanced root/DAG configuration did not enter this regime within our compute budget): (i)~impose a rescue floor $w_Q \ge w_{\min}$ when \texttt{q\_valid\_rate} falls below a threshold; (ii)~replace the token denominator with a row-count variant, which is insensitive to per-row response length drift; (iii)~tie $q_{\mathrm{scale}}$ to a moving estimate of \texttt{q\_valid\_rate} so the static prior tracks Proposer health. The exploit-mode ablation (\S\ref{sec:empirical_ucb_negative}) shows that this reverse-imbalance pathway is a downstream \emph{symptom} of validity collapse rather than an independent driver: both empirical-UCB cells (with and without crossover) eventually enter the validity bottleneck and therefore eventually trip branch-weight inversion, while the ratchet cell preserves $q_{\mathrm{valid}}$ and never enters the inverted regime within our compute horizon. The branch-balance formula is therefore safe whenever the upstream Q-format manifold is preserved by the exploit term, and the rescue floor remains a defensive measure for runs that may dip into the empirical regime under different reward shapes.

\subsection{Static vs.\ Adaptive Branch Weighting}
\label{sec:branch_weight_ablation}

\S\ref{sec:joint_opt} introduces two branch-weighting designs after token-mass equalization. The main run uses a fixed Proposer branch weight $w_Q=q_{\mathrm{scale}}=0.375$; the adaptive alternative equalizes Proposer and Solver advantage $\ell_2$-norms ($w_Q \propto r\cdot\mathrm{scale}_S\|U_S\|_2/\|U_Q\|_2$ with target ratio $r{=}0.5$, floor $w_Q^{\min}{=}0.125$). On paper the adaptive scheme is the more direct choice---it targets a constant relative gradient footprint between the two roles rather than a constant relative weight. Figure~\ref{fig:q_balance_comparison} compares the actor gradient norm under the two regimes on otherwise-identical Band-1-of-$K$ training runs.

\begin{figure}[h]
    \centering
    \includegraphics[width=\linewidth]{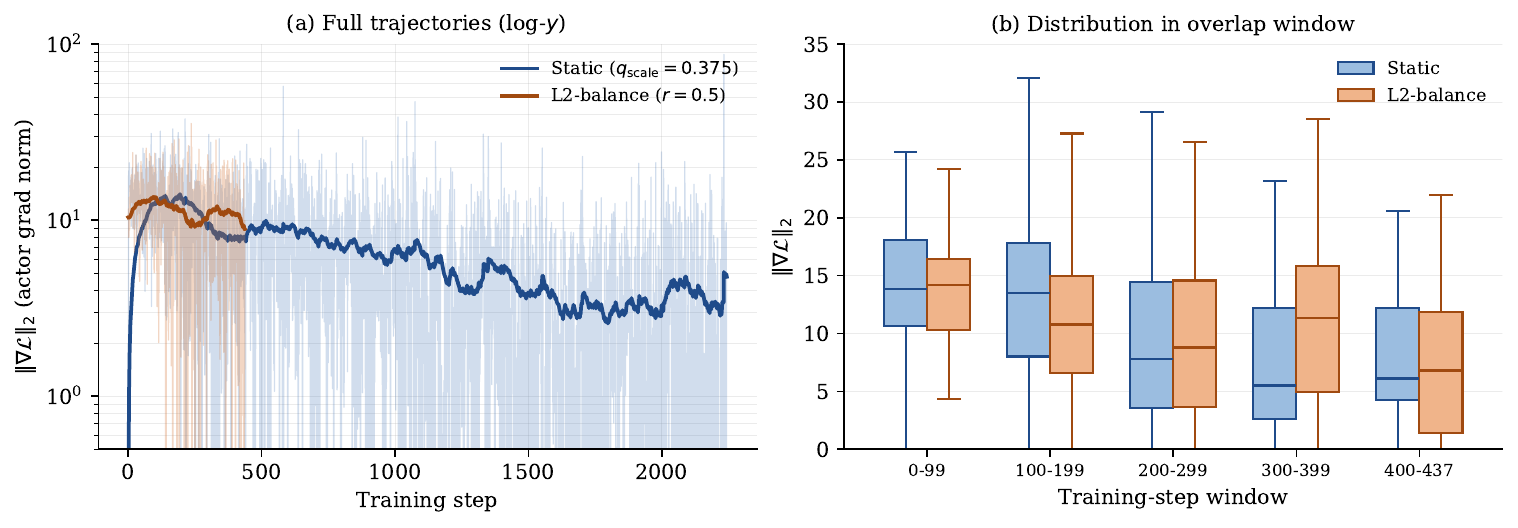}
    \caption{Actor gradient-norm trajectories comparing the two Proposer/Solver branch-weighting schemes on otherwise-identical Band-1-of-$K$ runs. \textbf{(a)}~Full-horizon view (log-$y$): light traces are per-step values, solid lines are EMA ($\alpha{=}0.02$) for readability. The \emph{static} run ($q_{\mathrm{scale}}{=}0.375$) exhibits a clear long-horizon annealing pattern, with EMA grad-norm descending from $\sim\!13$ at step 100 to $\sim\!5$ by step 1500; the \emph{$\ell_2$-balance} run ($r{=}0.5$) instead plateaus in the $10$--$13$ band and was halted at step 437 without settling. \textbf{(b)}~Distribution of per-step grad-norm within the overlap window $[0,437]$, binned by 100-step windows. The two schemes have comparable early-phase spread, but the static scheme's median begins to decline around step 200 whereas the adaptive scheme's median stays elevated. Because the adaptive run did not reach a stable regime within our compute budget, we could not obtain a like-for-like comparison on downstream pass-rate metrics; the stability evidence alone is load-bearing for the main-experiment choice of the static scheme.}
    \label{fig:q_balance_comparison}
\end{figure}

\section{Prompt Templates}
\label{appPrompts}

\subsection{Proposer System Prompt (Abridged)}

The Proposer system prompt enforces a 7-label structured chain-of-thought discipline. The key elements are shown below (minor formatting details omitted for brevity):

\begin{lstlisting}[basicstyle=\ttfamily\scriptsize, breaklines=true, breakatwhitespace=true, columns=flexible]
You are an expert Verus (Rust) programmer specializing in formal verification.
Your job is to generate one new curriculum question, not a solution.

Output contract:
- Output exactly one <think>...</think> block followed by one ```rust``` block.
- The rust block must contain exactly one target item (fn/spec fn/proof fn).
- Do not include a function body.

Reasoning discipline inside <think>:
Use exactly these seven labels in this order:
1. Parent core:       Identify the core objective in plain language.
2. Direction target:  Restate the requested direction (up/down).
3. Solver-skill focus: Name the one reusable solver obligation the child should teach.
4. Minimal decomposition: For up, explain why no further decomposition is needed;
                          for down, isolate the smallest blocker subproblem.
5. Stepping-stone rationale: Explain why this child is the right next step.
6. Next-step child choice: Formulate the minimal plan by choosing the best child.
7. Check: Verify the full reasoning chain for consistency.

Hard bans:
- No rename-only mutation.
- No child whose extra property is already implied by the seed.
- No child whose difficulty cannot be tied to a reusable S-Gen proof skill.
\end{lstlisting}

\subsection{Proposer Curriculum Template (Abridged)}

The curriculum template provides a seed specification and a direction (up or down), determined adaptively by the system (\S\ref{sec:adaptive_curriculum}):

\begin{lstlisting}[basicstyle=\ttfamily\scriptsize, breaklines=true, breakatwhitespace=true, columns=flexible]
Below is a seed problem from the existing pool:

```rust
{seed_code}
```

Direction: {direction}

Task:
- Generate one new Verus target item that is a plausible next curriculum step.
- Make one focused semantic change rather than a rename or superficial mutation.
- If Direction: up, keep the parent family and add one minimal semantic extension.
- If Direction: down, keep the parent family and expose one smaller
  blocker-focused stepping stone.
- The new difficulty must correspond to one reusable solver skill.
\end{lstlisting}

\section{Qualitative Examples from Late-Stage Rollouts}
\label{appExamples}

To verify that longer late-stage responses correspond to meaningful reasoning rather than degenerate repetition, we manually inspected solved long responses from the rollout log at training step 1422. In this batch, the response-length distribution had median 1283 characters, 90th percentile 1750 characters, and maximum 3431 characters. The longest \emph{solved} responses were generally well-formed proof-oriented explanations paired with concise Verus implementations, rather than the pathological repetition patterns observed in some earlier unstable phases.

\subsection{Abridged Solver CoT Examples}

\paragraph{Example A: Minimum Second Value Subject to a Threshold Constraint.}
The following is an abridged excerpt from a solved long response (step 1368, sample 190):
\begin{lstlisting}[basicstyle=\ttfamily\scriptsize, breaklines=true, breakatwhitespace=true, columns=flexible, frame=single]
The function returns a value `first_of_min_second` that satisfies the
postcondition. Specifically, it returns an index `i` such that
`0 <= i < s.len()` and `first_of_min_second == s[i as int][0]`,
and it satisfies the universal postcondition that for all indices `j`,
either `s[i as int][1] <= s[j as int][1]` and
`s[i as int][0] > threshold`, or `i == j`.
</think>

```rust
let mut min_index: usize = 0;
    for i in 1..s.len()
        invariant
            1 <= i <= s.len(),
            min_index < i,
            forall|j: nat| 0 <= j && j < i ==>
                s[min_index as int][1] <= s[j as int][1]
                && s[min_index as int][0] > threshold,
            1 <= i <= s.len(),
            forall|i: nat| i < s.len() ==> s[i as int].len() >= 2,
            forall|i: nat| i < s.len() ==>
                s[i as int][1] <= s[i as int][1]
                && s[i as int][0] > threshold,
        decreases s.len() - i
    {
        if s[i][1] < s[min_index][1]
            || (s[i][1] == s[min_index][1] && s[i][0] > s[min_index][0])
        {
            min_index = i;
        }
    }
    s[min_index][0]
}
```

This function initializes `min_index` to 0 and iterates over the rest
of the vector, updating `min_index` whenever it finds a pair of indices
`i` and `j` where `s[i][1] < s[j][1]` or
`s[i][1] == s[j][1] && s[i][0] > s[j][0]`. The postcondition is
maintained through modifications to `min_index` within the loop, and
the function returns `s[min_index][0]` after completion.
\end{lstlisting}

\paragraph{Example B: Minimum-Length Nonempty Sublist.}
The following is another abridged excerpt from a solved long response (step 1413, sample 253):
\begin{lstlisting}[basicstyle=\ttfamily\scriptsize, breaklines=true, breakatwhitespace=true, columns=flexible, frame=single]
The function returns a vector `min_sublist` such that:
- `min_sublist.len()` is less than or equal to the length of some
  element in `s`.
- There exists an index `i` with `0 <= i < s.len()` where
  `min_sublist.len()` equals the length of `s[i]`.
- For all indices `i` with `0 <= i < s.len()`, `min_sublist.len()`
  is less than or equal to the length of `s[i]`.
- The function satisfies all preconditions.
</think>

```rust
let mut min_sublist: Vec<int> = s[0].clone();
    for i in 1..s.len()
        invariant
            1 <= i <= s.len(),
            0 <= i,
            i <= s.len(),
            min_sublist.len() <= s[0].len(),
            exists|k: int| 0 <= k && k < i as int
                && min_sublist.len() == s[k].len(),
            forall|j: int| 0 <= j && j < i ==>
                min_sublist.len() <= s[j].len(),
            s.len() > 0,
        decreases s.len() - i
    {
        if s[i].len() < min_sublist.len()
        {
            min_sublist = s[i].clone();
        }
    }
    min_sublist
}
```

This function initializes `min_sublist` with the first vector in `s`
and iterates over the rest, updating the invariant to maintain
correctness. The loop maintains the existence of a valid candidate
by updating `min_sublist` only when a shorter sublist is found.
\end{lstlisting}
\subsection{Interpretation}

These late-stage solved examples support two qualitative observations:
\begin{enumerate}
    \item \textbf{Longer responses are not necessarily pathological.} In late training, some increase in response length reflects explicit invariant narration and postcondition checking rather than raw repetition.
    \item \textbf{The dominant style is ``spec recap + invariant-driven implementation + brief proof sketch.''} This is consistent with the intended solver behavior in ANCORA, where reusable proof obligations are surfaced explicitly.
\end{enumerate}

At the same time, these examples also show that the model often produces \emph{over-explained} solutions. The extra length typically comes from rephrasing the same postcondition in prose before and after the code, rather than from discovering substantially new proof content.

\section{Curriculum Bandit Analysis: Discounted UCB on a Lipschitz DAG}
\label{app:pruned_zpd_uct_dag}

This appendix formalizes the curriculum explorer used by ANCORA as a \emph{non-stationary discounted-UCB problem on a Lipschitz arm space}, and derives a surrogate regret object that the implementation qualitatively targets. A flat epochized UCB1 framing over an active set $V_e$ would make the vanilla bound vacuous because the online system grows its arm set at every step ($K_e$ linear in $T$). The genuine mathematical object exercised by the production implementation is a \emph{discounted} UCB recursion on arms that live in an $\epsilon$-packing of a similarity metric space, with solved-only admission guaranteeing boundedness of rewards and with topological hard-keep restricted to being an optional long-horizon cleanup policy rather than a load-bearing theoretical primitive. The rest of this appendix makes each of these claims precise and gives the corresponding qualitative regret scaling.

\subsection{Objective and Teacher Utility}

The outer objective of the curriculum RL system is solver improvement,
\begin{equation}
J(\pi) \;=\; \tau_T - \tau_0, \qquad \tau_t \;=\; \mathbb{E}_{q \sim \mathcal{D}_{\mathrm{test}}}\!\bigl[\mathrm{Pass}(S_t, q)\bigr],
\label{eq:outer_objective}
\end{equation}
where $S_t$ denotes the solver at step $t$. Since a UCB analysis requires a bounded per-rollout reward signal, we introduce a surrogate \emph{teacher utility}
\begin{equation}
u_\tau(p) \;=\; 4\,p(1-p)\,\exp\!\left(-\frac{(p-\tau)^2}{2\sigma^2}\right) \;\in\; [0,1],
\label{eq:teacher_utility}
\end{equation}
where $p$ is the rollout pass-rate of the generated question and $\tau$ is the current solver-ability estimate. The factor $4p(1-p)$ favors ZPD-edge questions (maximum at $p=\tfrac12$); the Gaussian localizes around the frontier; and the normalization $u_\tau \in [0,1]$ is what the analysis consumes. The reward multiplexer \texttt{q\_reward\_mode=band\_1of8} replaces $4p(1-p)$ with its sparse quantile variant, and \texttt{q\_reward\_mode=exp\_decay} with a monotone frontier kernel; both remain in $[0,1]$, so the surrogate scaling below is unchanged under any of these choices.

\subsection{Solved-only Admission and Metric Arm Space}
\label{subsec:admission_metric}

Let $\mathcal{X}$ be the space of candidate Verus specifications and let $d:\mathcal{X}\times\mathcal{X}\to[0,1]$ be the $\mathrm{MinHash}_{128}$ Jaccard distance used by \texttt{SeedPoolManager.question\_similarity}. The admission rule of \texttt{SeedPoolManager.add\_seed} only inserts $q\in\mathcal{X}$ if the trainer stored a non-empty \texttt{best\_ans} string for $q$; \texttt{best\_ans} is only set when \texttt{strict\_pass=True} on at least one solver rollout of $q$.

\begin{lemma}[Solved-only admission invariant]
\label{lem:solved_only}
Let $P_t\subset\mathcal{X}$ be the seed pool at step $t$, and let $P_t^{\mathrm{ext}}\subset P_t$ be the subset of non-original nodes admitted after step $0$. Then every $q\in P_t^{\mathrm{ext}}$ has been solved at least once by $S_{t'}$ for some $t'\le t$. The per-arm reward variable $R(q)\in[0,1]$ is bounded for every $q\in P_t$ (both extended and original), but strict-positivity of the current-model pass-rate statistic is guaranteed only on $P_t^{\mathrm{ext}}$: original root anchors are loaded with \texttt{pass\_rate}$=0$ under the fresh-start reset in \texttt{SeedPoolManager.\_load\_from\_file} (invoked with \texttt{reset\_pass\_rate=True} on cold start) and acquire positive statistics only after visitation by the current solver.
\end{lemma}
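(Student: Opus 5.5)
The plan is an induction on the step index $t$, using only the admission rule of \texttt{SeedPoolManager.add\_seed} and the definition of when \texttt{best\_ans} is set. The invariant to carry is: every $q\in P_t^{\mathrm{ext}}$ has some $t'\le t$ at which a Solver rollout of $q$ under $S_{t'}$ returned \texttt{strict\_pass=True}.

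\textbf{Base case and inductive step.} At $t=0$ we have $P_0^{\mathrm{ext}}=\emptyset$, since $P_0$ consists only of protected original roots, so the invariant holds vacuously. For $t\to t+1$, I would use that the only ways $P$ changes are the Absorb and Prune steps of Stage~6 in Algorithm~\ref{alg:gsv_loop}.

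Pruning, whether by MinHash deduplication or the optional dead-end cleanup, only removes nodes, so it cannot create a violation. Any node surviving from $P_t^{\mathrm{ext}}$ keeps its witness $t'\le t<t+1$.

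Absorption inserts $q$ only if \texttt{best\_ans} for $q$ is non-empty. \texttt{best\_ans} is written only when some Solver rollout in Stage~3 at the current step, or an earlier one, has $c_{q,k}=\mathcal{V}(q,y_{q,k})=1$. Since Stage~3 runs with the current policy $S_{t+1}$, or the checkpoint in force when the rollout was produced, the witness step $t'\le t+1$ exists. The invariant therefore holds for $P_{t+1}^{\mathrm{ext}}$.

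\textbf{Boundedness.} For every $q\in P_t$, $R(q)$ is either the teacher utility $u_\tau(p)$ of Eq.~\eqref{eq:teacher_utility} or one of its \texttt{q\_reward\_mode} replacements (Band-1-of-$K$ or exp-decay). Each is a product or composition of maps $[0,1]\to[0,1]$ applied to an empirical pass rate $p\in[0,1]$, so $R(q)\in[0,1]$. A one-line check suffices:
\begin{itemize}
\item $4p(1-p)\le 1$;
\item the Gaussian factor lies in $(0,1]$;
\item $\mathbbm{1}[m=1]\in\{0,1\}$;
\item $e^{-Kp}-e^{-K}\in[0,1-e^{-K}]$.
\end{itemize}
This argument does not depend on whether $q$ is original, so it covers all of $P_t$.

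\textbf{Asymmetry between extended and original nodes.} For positivity, each $q\in P_t^{\mathrm{ext}}$ was admitted with a stored solved rollout, so its recorded pass-rate statistic is initialized from a set containing at least one success and is strictly positive. For original roots I would exhibit the counterexample rather than prove anything: the cold-start path \texttt{\_load\_from\_file} with \texttt{reset\_pass\_rate=True} sets \texttt{pass\_rate}$=0$. Positivity therefore only appears after a visit in which the current Solver succeeds.

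\textbf{Main obstacle.} The hard part is not mathematical but definitional. I must pin down that ``pass-rate statistic'' for an extended node means the statistic recorded at or since admission, and not a later decayed or re-estimated value. Discounted backups could drive it toward $0$ without reaching it exactly, and a fresh visit with all failures could make the instantaneous rate $0$.

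My first attempt would be to state the positivity claim for the admission-time or ratcheted (max) statistic. That statistic is non-decreasing, as noted in \S\ref{sec:empirical_ucb_negative}, so positivity persists. If the lemma is meant for the discounted empirical mean instead, the argument gives only ``positive as a discounted sum with a positive term,'' which is still strictly positive under finite discounting. I would add that remark to close the gap.
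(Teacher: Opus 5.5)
Your proposal is correct and follows the paper's own argument. The lemma rests entirely on the trainer-side guard, which admits a non-original node only when \texttt{best\_ans} is non-empty, and \texttt{best\_ans} is set only in the \texttt{strict\_pass=True} branch. Your induction over steps, noting that pruning and merging only remove nodes, just makes explicit the persistence the paper leaves implicit. The paper also reads the positivity clause as a strictly positive \emph{historical} pass, which matches your admission-time resolution.

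One small fix: the backed-up per-arm reward is the two-gate quantity $H(x)\bigl((1-\beta)u_\tau(p)+\beta A(x)\bigr)$, possibly scaled by $\gamma^{L-i}$, not $u_\tau$ or a \texttt{q\_reward\_mode} replacement alone. It still lies in $[0,1]$ by the same convex-combination reasoning.
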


The lemma follows from the trainer-side admission guard, which only inserts new non-original nodes for which \texttt{best\_ans} is non-empty; \texttt{best\_ans} is in turn only set inside the \texttt{strict\_pass=True} branch of the solver-verification loop. Its consequence for the regret analysis is that admission itself---rather than arm retirement---ensures every non-original arm carries bounded reward $R(q)\in[0,1]$ with a strictly positive historical pass, so no separate retirement lemma is required.

\begin{lemma}[$\epsilon$-packing of the pool]
\label{lem:epsilon_packing}
Let $\epsilon=1-\texttt{max\_pool\_sim}$. For every $t$ and every $q,q'\in P_t$ with $q\ne q'$, $d(q,q')>\epsilon$. Consequently $|P_t|\le \mathcal{N}(\mathcal{X},d,\epsilon/2)$, the $\epsilon/2$-covering number of $(\mathcal{X},d)$.
\end{lemma}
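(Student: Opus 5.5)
The argument has two parts: an invariant on pairwise distances, proved by induction over pool updates, followed by a standard packing-versus-covering comparison.

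\textbf{Step 1 (separation invariant).} We induct on $t$ and show that the pool is always $\epsilon$-separated, i.e.\ $d(q,q')>\epsilon$ for all distinct $q,q'\in P_t$. The admission rule of \texttt{SeedPoolManager.add\_seed} rejects a candidate $q$ whenever $\mathrm{sim}(q,q')\ge\texttt{max\_pool\_sim}$ for some $q'$ already in the pool. Since $d=1-\mathrm{sim}$, an admitted $q$ satisfies $d(q,q')>1-\texttt{max\_pool\_sim}=\epsilon$ against every existing node.

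The other pool operations cannot break the invariant. MinHash dedup pruning and absorb-and-merge only delete nodes or merge them into existing ones, so the remaining pairwise distances stay untouched.

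The base case needs more care. The protected roots $P_0$ must themselves be pairwise $\epsilon$-separated. I would discharge this by noting that the seed-pool quality filter applies the same MinHash dedup at load time. If that check is not actually enforced, the statement has to be read as holding under that hypothesis, or restricted to $P_t^{\mathrm{ext}}$ together with a fixed additive $|P_0|$.

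\textbf{Step 2 (packing bound).} Let $C$ be a minimal $\epsilon/2$-cover of $(\mathcal{X},d)$, taken in the closed-ball convention: every $x\in\mathcal{X}$ has some $c\in C$ with $d(x,c)\le\epsilon/2$. Assign each $q\in P_t$ to such a center $c(q)$. Suppose two distinct pool elements $q\ne q'$ received the same center $c$. The triangle inequality for the Jaccard distance would give
\[
d(q,q')\le d(q,c)+d(c,q')\le\epsilon,
\]
contradicting Step 1. The assignment is therefore injective, and so $|P_t|\le|C|=\mathcal{N}(\mathcal{X},d,\epsilon/2)$.

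\textbf{Main obstacle.} The difficulty is that the guarantee rests on an approximation. Exact Jaccard distance is a metric, so the triangle inequality in Step 2 is valid for it. However, the production gate checks an estimated MinHash$_{128}$ similarity, which is not exactly a metric.

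I would resolve this by defining $d$ as the MinHash estimator itself, namely the fraction of the 128 signature coordinates on which $q$ and $q'$ disagree. This is a normalized Hamming distance on signatures, which is a pseudometric. The triangle inequality then holds, and Step 1 holds verbatim because the gate compares exactly this quantity.

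Two further points need to be kept consistent with the statement:
\begin{enumerate}
\item Strict versus non-strict inequality in the gate. This only affects whether the separation is ``$>$'' or ``$\ge$''; the closed-ball cover convention above absorbs either choice.
\item Ordering within a batch. Several candidates from the same batch may be admitted in one step, and each must be compared against the others already admitted earlier in that batch, not only against $P_{t-1}$. I would check that admission is sequential with this intra-batch comparison, so that the inductive step applies to each single insertion.
\end{enumerate}
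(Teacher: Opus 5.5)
Your proposal is correct and follows the paper's route: gate-enforced pairwise separation $d>\epsilon$, then the standard packing-versus-covering comparison at radius $\epsilon/2$. It is also more careful than the paper's proof, which does not address the roots $P_0$, intra-batch admission, or the fact that the gated quantity is a MinHash estimate and hence only a pseudometric on specifications. One correction to a side remark: if the gate only guaranteed $d\ge\epsilon$, the closed-ball convention would \emph{not} absorb this, because your triangle inequality would then give only $d(q,q')\le\epsilon$, which does not contradict $d(q,q')\ge\epsilon$, so you would need open $\epsilon/2$-balls instead (though in your signature-Hamming formulation all distances are multiples of $1/128$ and $0.28$ is not, so the distinction is moot at the paper's configuration).
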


\begin{proof}
The admission gate \texttt{SeedPoolGate.check(\ldots, max\_pool\_sim)} rejects any candidate whose maximum Jaccard similarity to the current pool exceeds $\texttt{max\_pool\_sim}$, i.e., whose Jaccard distance to at least one existing member is $\le\epsilon$. Hence accepted pool elements are pairwise separated by more than $\epsilon$, so $P_t$ is an $\epsilon$-packing. The standard packing-covering inequality bounds the maximum size of an $\epsilon$-packing by the $\epsilon/2$-covering number.
\end{proof}

Under our configuration, $\texttt{max\_pool\_sim}=0.72$, so $\epsilon=0.28$; empirically $|P_{200}|\approx 690$, which is compatible with the covering-number bound as long as $\mathcal{N}(\mathcal{X},d,0.14)$ is of this order. The Lipschitz arm-space bandit literature~\cite{kleinberg2008multiarmed} shows that all subsequent regret analysis depends on $\mathcal{X}$ only through the relevant covering number.

\subsection{Discounted UCB Recursion}
\label{subsec:discounted_ucb_model}

For analysis we collapse the production DAG traversal into a node-level discounted-UCB surrogate that treats each $q\in P_t$ as a single arm pulled with a global UCB index. The production \texttt{SeedPoolManager.ucb\_sample} routine refines this surrogate into hierarchical root/stay/child actions with parent-local visit denominators and several bounded additive priors; these deviations are catalogued in \S\ref{subsec:production_deviations} and do not invalidate the qualitative regret object derived below.

At step $t$, the explorer maintains for each arm $q\in P_t$ a discounted visit count $n_t(q)$ and discounted reward sum $s_t(q)$ with decay $\gamma_{\mathrm{UCB}}\in(0,1)$ (Hydra: \texttt{ucb\_stat\_decay}, value $0.995$ in our main configuration):
\begin{equation}
n_{t+1}(q) = \gamma_{\mathrm{UCB}}\,n_t(q) + \mathbbm{1}[A_t=q], \qquad
s_{t+1}(q) = \gamma_{\mathrm{UCB}}\,s_t(q) + R_t\,\mathbbm{1}[A_t=q].
\label{eq:discount_recursion}
\end{equation}
The discounted empirical mean is $\hat\mu_t(q)=s_t(q)/n_t(q)$, and the discounted UCB index is
\begin{equation}
U_t(q) \;=\; \hat\mu_t(q) \;+\; c\sqrt{\frac{\log n_t^{\mathrm{tot}}}{n_t(q)}}, \qquad n_t^{\mathrm{tot}}=\sum_{q\in P_t}n_t(q),
\label{eq:discount_ucb_index}
\end{equation}
with $c=\texttt{ucb\_c}=2.5$. The effective window of the discount is $\tau_\gamma\equiv 1/(1-\gamma_{\mathrm{UCB}})=200$ steps---each arm's statistics geometrically forget observations older than $\tau_\gamma$. The implementation in \texttt{SeedPoolManager.decay\_pass\_rates} multiplies \texttt{visit\_counts}, \texttt{value\_sums}, \texttt{stay\_visit\_counts}, \texttt{edge\_visit\_counts} and \texttt{\_total\_visits} by $\gamma_{\mathrm{UCB}}$ at the end of every step, which realizes~\eqref{eq:discount_recursion} exactly.

Crucially, we use the discount rate \texttt{ucb\_stat\_decay} \emph{only} as a bandit non-stationarity knob: the trainer tracks a separate forgetting rate \texttt{pass\_rate\_decay} (default $0.996$) for its operational model of solver ability, so that the two decays can be tuned independently.

\subsection{Two-Gate Bounded Reward}

Given a candidate $x$ with rollout pass-rate $p(x)$, the trainer assigns the two-gate reward
\begin{equation}
R_t(x) \;=\; H(x)\,\bigl((1-\beta)\,u_{\tau_t}(p(x)) + \beta\,A(x)\bigr), \qquad 0 \le \beta \le 0.1,
\label{eq:two_gate_reward}
\end{equation}
where $H(x)\in\{0,1\}$ is the reusable hard gate (struct-sim $<1$, non-duplicate) and $A(x)\in\{0,1\}$ is the pool-admission gate. Boundedness $R_t\in[0,1]$ is the only property the surrogate argument exploits. The $\beta$-mixing keeps credit flowing to productive intermediate nodes even when rollouts score $u_\tau=0$ but still successfully admit a novel seed. We emphasize that $R_t(x)$ here is the \emph{backed-up reward} credited along the sampled path and used to update $s_t(q)$ in Equation~\eqref{eq:discount_recursion}; it is distinct from the selection-time UCB index $U_t(q)$ in Equation~\eqref{eq:discount_ucb_index}, and from the bounded additive selection priors (ZPD kernel, kinship prior, neighborhood cold-start fallback) that the production code adds to $U_t$ but not to $R_t$ (\S\ref{subsec:production_deviations}). The surrogate argument only requires $R_t\in[0,1]$; it places no restriction on the priors attached to $U_t$ beyond boundedness.

After a rollout of depth $L$ along a sampled action path $(v_0,a_0),\ldots,(v_L,a_L)$, discounted path backup credits each ancestor action with
\begin{equation}
Y_i \;=\; \gamma^{L-i}\,R_t(x), \qquad 0 \le \gamma \le 1.
\label{eq:discounted_backup}
\end{equation}
Because \eqref{eq:discounted_backup} is itself in $[0,1]$ and is a deterministic function of the rollout outcome, it merely rebrands the terminal reward at each intermediate node and does not break the bandit structure.

\subsection{Piecewise-Stationary Curriculum and Variation Budget}
\label{subsec:piecewise_stationary}

Because the solver $S_t$ changes from step to step, the expected reward $\mu_t(q)=\mathbb{E}[R_t\mid A_t=q]$ is non-stationary. Following Garivier and Moulines~\cite{garivier2011discounted}, we model this non-stationarity by a \emph{variation budget}
\begin{equation}
\Upsilon_T \;=\; \sum_{t=1}^{T-1}\,\max_{q\in P_t\cap P_{t+1}}\,\bigl|\mu_{t+1}(q)-\mu_t(q)\bigr|,
\label{eq:variation_budget}
\end{equation}
which counts the total pointwise drift of the reward landscape over $T$ steps. The curriculum assumption is that $\mu_t$ is piecewise stationary with $\Upsilon_T$-controlled break magnitude, an assumption satisfied whenever solver improvements are Lipschitz-in-$t$---e.g., as long as per-step policy updates are bounded. In particular, we do \emph{not} assume an exogenous epoch partition; the analysis is phrased directly against~\eqref{eq:variation_budget}.

\subsection{Recent-Arm Proxy and Covering Number}
\label{subsec:active_arm_bound}

Because statistics decay geometrically with rate $\gamma_{\mathrm{UCB}}$, an arm last pulled $s$ steps ago retains a discount weight $\gamma_{\mathrm{UCB}}^s$ in~\eqref{eq:discount_ucb_index}. This does \emph{not} literally make old arms inactive: as $n_t(q)$ decays, the UCB bonus can make a neglected arm attractive again. We therefore use the following recent-arm count only as an analytical proxy for the finite set that dominates a bounded compute window. Fix a tolerance $\delta\in(0,1)$ and define the $\delta$ horizon
\[
\tau_\delta \;=\; \frac{\log(1/\delta)}{1-\gamma_{\mathrm{UCB}}}\quad\Longleftrightarrow\quad \gamma_{\mathrm{UCB}}^{\tau_\delta}\le \delta,
\]
so that an observation older than $\tau_\delta$ retains discount weight at most $\delta$. The natural-time scale $\tau_\gamma=1/(1{-}\gamma_{\mathrm{UCB}})$ corresponds to $\delta=e^{-1}\approx 0.37$, which is \emph{not} small; we therefore quote scalings parametrized by $\delta$ and instantiate them with a moderately small choice, e.g.\ $\delta=e^{-3}\approx 0.05$ giving $\tau_\delta\approx 3\tau_\gamma$. Let
\[
K_\gamma(\delta) \;=\; \sup_{t\le T}\bigl|\{q\in P_t:\, q\ \text{has been pulled within the last } \tau_\delta\ \text{steps}\}\bigr|.
\]
If the per-step branching factor of seed insertion is bounded by $B$ (Hydra: \texttt{q\_gen\_n}, value $8$), each step can activate at most $B$ fresh arms; combined with Lemma~\ref{lem:epsilon_packing} this gives:

\begin{corollary}[Recent-arm count, $\delta$-tolerance]
\label{cor:arm_count_bound}
$K_\gamma(\delta) \le \min\bigl(B\,\tau_\delta,\ \mathcal{N}(\mathcal{X},d,\epsilon/2)\bigr)$. With $B=8$, $\tau_\gamma=200$, $\epsilon=0.28$ and the choice $\delta=e^{-3}$ (so $\tau_\delta\approx 600$), this gives $K_\gamma(\delta)\le 4800$ independent of the horizon $T$. The bound is qualitative: $\delta$ trades off the looseness of the effective-set approximation against the size of $\tau_\delta$, and the bound below should be read with this $\delta$-dependence in mind.
\end{corollary}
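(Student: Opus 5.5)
The bound is the minimum of two separate bounds on the recent-arm set
\[
A_t(\delta)=\{q\in P_t : q \text{ pulled within the last } \tau_\delta \text{ steps}\}.
\]
I will prove each bound for every $t\le T$ and then take the supremum over $t$.

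\emph{Covering bound.} This follows from Lemma~\ref{lem:epsilon_packing}. Since $A_t(\delta)\subseteq P_t$ and $P_t$ is an $\epsilon$-packing, we get $|A_t(\delta)|\le|P_t|\le\mathcal{N}(\mathcal{X},d,\epsilon/2)$. No step in this part depends on time.

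\emph{Insertion bound.} This is the delicate part. The natural argument goes as follows:
\begin{enumerate}
\item Each step admits at most $B$ new arms, by the branching assumption.
\item An arm counts as recent if and only if it was pulled in the window $(t-\tau_\delta,t]$.
\item Each step pulls a bounded number of arms.
\end{enumerate}
Strictly, item 3 is what the recent-arm count needs, not item 1, because old arms can be re-pulled.

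I would therefore read the hypothesis ``each step can activate at most $B$ fresh arms'' as saying that at most $B$ arms become active per step. Here activation means a pull or an insertion, and it matches the $B$ seeds and $B$ Q-Gen slots per step. Under that reading, the window of length $\lceil\tau_\delta\rceil$ contains at most $B\tau_\delta$ activation events. An injective map from $A_t(\delta)$ to these events, taking each arm to its most recent activation, then gives $|A_t(\delta)|\le B\tau_\delta$.

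The main obstacle is exactly this modelling gap between ``inserted'' and ``pulled.'' I would state the per-step activation bound explicitly as the assumption being used. I would also note that if only insertions were bounded, the count would instead be limited by the number of pulls per step, which is the batch size, and the same window argument goes through with that constant.

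\emph{Numerics.} Take $\delta=e^{-3}$. Then $\tau_\delta=3/(1-\gamma_{\mathrm{UCB}})=3\tau_\gamma=600$, so $B\tau_\delta=4800$. Since neither term of the minimum involves $T$, the bound is horizon-independent.
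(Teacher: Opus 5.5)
Your decomposition is the paper's. The covering term comes from $A_t(\delta)\subseteq P_t$ together with Lemma~\ref{lem:epsilon_packing}, and the $B\tau_\delta$ term comes from a per-step count over a window of $\tau_\delta=3\tau_\gamma=600$ steps, giving $8\cdot 600=4800$ with no $T$-dependence. What differs is what the per-step budget counts. The paper's one-line justification bounds insertions (``each step can activate at most $B$ fresh arms'', with $B=\texttt{q\_gen\_n}$). As you observe, this does not control $K_\gamma(\delta)$, because arms admitted long ago but re-pulled inside the window are counted without being fresh. The protected roots are the clearest example: they are pulled every step under the root quota but never inserted after step $0$. Your fallback bounds pulls per step by the batch size and sends each recent arm to its latest pull in the window. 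That is the argument that matches the definition of $K_\gamma(\delta)$, and it reproduces $4800$ only because the batch size also happens to be $8$. Use it instead of your primary ``pull or insertion, at most $B$ per step'' reading. Insertions never enter $K_\gamma(\delta)$, so counting them only strengthens the hypothesis, and that stronger hypothesis is false for this system: $8$ seeds are pulled and up to $8\times 8$ candidates can be admitted in one step. The ZPD ablation also reports roughly $10$ admissions per step during its jackpot phase, which breaks the paper's own insertion premise. The pull-based version has a further benefit: it makes explicit what the constant depends on. In the flat surrogate recursion there is a single $A_t$ per step, so $\tau_\delta$ alone would suffice. In production, dual-parent crossover prompts, or crediting every node on a descent path as a pull, would push the per-step pull count above the batch size and scale the first term of the minimum accordingly.
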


\subsection{Main Surrogate Scaling: Discounted UCB}

\begin{proposition}[Surrogate discounted-UCB scaling on a Lipschitz DAG]
\label{thm:discounted_ucb_main}
Fix $\delta\in(0,1)$. Under Lemmas~\ref{lem:solved_only}--\ref{lem:epsilon_packing}, the variation budget~\eqref{eq:variation_budget}, and the recent-arm proxy of Corollary~\ref{cor:arm_count_bound}, the discounted-UCB sampler~\eqref{eq:discount_recursion}--\eqref{eq:discount_ucb_index} with discount $\gamma_{\mathrm{UCB}}=1-1/\tau_\gamma$ motivates the following gap-independent surrogate scaling, for some universal $C_1,C_2>0$:
\begin{equation}
\mathbb{E}[\mathrm{Reg}(T)] \;\lesssim\; C_1\,\sqrt{\frac{K_\gamma(\delta)\,T\,\log T}{1-\gamma_{\mathrm{UCB}}}} \;+\; C_2\,\frac{\Upsilon_T}{1-\gamma_{\mathrm{UCB}}} \;+\; \mathcal{O}(\delta\,T).
\label{eq:discounted_ucb_bound}
\end{equation}
The $\mathcal{O}(\delta T)$ slack records the cost of using a recent-arm proxy rather than the literal decayed-UCB active set. We state this as a surrogate scaling rather than a literal theorem: the underlying Garivier--Moulines (2011) result is for a fixed-cardinality finite arm set, while our active arm set is a growing metric $\epsilon$-packing and neglected arms are not truly removed from the UCB index.
\end{proposition}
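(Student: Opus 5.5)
The argument adapts the Garivier--Moulines analysis of discounted UCB to our setting. The growing arm set is replaced by the recent-arm proxy of Corollary~\ref{cor:arm_count_bound}, and the inputs supplied by Lemmas~\ref{lem:solved_only}--\ref{lem:epsilon_packing} are made explicit.

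\textbf{Step 1: split the regret.} Fix $\delta$ and set $\tau_\delta=\log(1/\delta)/(1-\gamma_{\mathrm{UCB}})$. At each step $t$, partition $P_t$ into two sets: the recent set $\mathcal{A}_t$ of arms pulled within the last $\tau_\delta$ steps, and the stale set. By Corollary~\ref{cor:arm_count_bound}, $|\mathcal{A}_t|\le K_\gamma(\delta)$. The regret is then decomposed into three parts.
\begin{itemize}
\item[(a)] Regret against the best recent arm. This is handled as a finite-armed discounted bandit with at most $K_\gamma(\delta)$ arms.
\item[(b)] Regret caused by reward drift. This is controlled through $\Upsilon_T$.
\item[(c)] The residual from stale arms. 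A stale arm's statistics carry discount weight at most $\delta$, so I charge at most $\delta$ per step from treating these arms as absent or reset, giving the $\mathcal{O}(\delta T)$ term. Boundedness $R_t\in[0,1]$ comes from Lemma~\ref{lem:solved_only} and from \eqref{eq:two_gate_reward}--\eqref{eq:discounted_backup}, and it makes every per-step loss at most $1$.
\end{itemize}

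\textbf{Step 2: concentration and the stationary part.} For each recent arm, write $\hat\mu_t(q)-\mu_t(q)$ as a bias term plus a noise term. The noise term is a discounted martingale-difference sum with bounded increments. The self-normalized Hoeffding-type inequality for discounted sums (Garivier--Moulines, Thm.~18) bounds it by $c\sqrt{\log n_t^{\mathrm{tot}}/n_t(q)}$ with high probability, since $n_t^{\mathrm{tot}}\le 1/(1-\gamma_{\mathrm{UCB}})$. The usual UCB argument then bounds the number of suboptimal pulls, with the gap $\Delta$ replaced by the confidence width. Summing over $K_\gamma(\delta)$ arms in the gap-free way (Cauchy--Schwarz over arms) gives the first term, $\sqrt{K_\gamma(\delta)\,T\log T/(1-\gamma_{\mathrm{UCB}})}$. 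The extra factor $1/(1-\gamma_{\mathrm{UCB}})$ arises because discounted counts saturate at $\tau_\gamma$, so exploration restarts once per window. Equivalently, the horizon splits into $T/\tau_\gamma$ windows, and each contributes $\sqrt{K\tau_\gamma\log T}$ plus a $K\log T$ overhead.

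\textbf{Step 3: the bias from non-stationarity.} The bias term equals $\sum_s\gamma^{t-s}(\mu_s-\mu_t)/n_t$. It is bounded by a discounted sum of the one-step drifts in \eqref{eq:variation_budget}, weighted by $\gamma^{t-s}$. Summing over $t$ and exchanging the order of summation yields at most $\sum_t|\mu_{t+1}-\mu_t|\cdot\sum_{k\ge0}\gamma^k=\Upsilon_T/(1-\gamma_{\mathrm{UCB}})$. This replaces the Garivier--Moulines breakpoint count by a variation budget, in the style of Besbes et al.

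\textbf{Main obstacle.} The hard step is Step~1(c). Arms are not truly removed from the index: a neglected arm's bonus grows as $n_t(q)$ decays, and new arms keep entering the set. The finite-arm argument therefore does not apply literally. I would first bound the number of times a stale or newly admitted arm can win the index. Each such arm must first be pulled about $\log T$ times before its bonus drops below that of a recent arm. Charging these pulls to $B$ arrivals per step keeps the cost within the $K_\gamma(\delta)$ budget, with any leftover absorbed into $\delta T$. This is exactly why the statement claims only a surrogate scaling ($\lesssim$). I would make clear in the write-up that this charging step is heuristic.
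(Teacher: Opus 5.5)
\textbf{Gap in Step~2.} Your overall route is the paper's. You transplant the Garivier--Moulines discounted-UCB analysis with $K$ replaced by $K_\gamma(\delta)$, take boundedness from Lemma~\ref{lem:solved_only} and the two-gate reward, and leave the time-varying arm set as an admitted heuristic. The paper also invokes the Kleinberg--Slivkins--Upfal zooming argument for the metric arm space; you can skip it, since you compare against the best pooled arm and use the $\min$ in Corollary~\ref{cor:arm_count_bound}.

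The genuine gap is in Step~2, the one place where you go beyond the paper and try to \emph{derive} the first term: your own mechanism does not produce it.
\begin{itemize}
\item Summing your per-window cost gives $\frac{T}{\tau_\gamma}\sqrt{K\tau_\gamma\log T}=T\sqrt{K(1-\gamma_{\mathrm{UCB}})\log T}$. The target is $\sqrt{KT\log T/(1-\gamma_{\mathrm{UCB}})}=\sqrt{KT\tau_\gamma\log T}$. Their ratio is $\sqrt{T}/\tau_\gamma$, so your ``equivalently'' holds only at $T\asymp\tau_\gamma^2$.
\item The Cauchy--Schwarz route agrees with the window count once the discounted pull-count bound is used. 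Because $n_t(q)\le\tau_\gamma$, confidence widths never drop below order $\sqrt{(1-\gamma_{\mathrm{UCB}})\log\tau_\gamma}$. The Garivier--Moulines bound on suboptimal plays is of order $T(1-\gamma_{\mathrm{UCB}})\log\tau_\gamma/\Delta_q^2$ (plus the non-stationarity term), not $\log T/\Delta_q^2$. Hence $\sum_q\Delta_q\,\mathbb{E}N_T(q)\lesssim\sqrt{KT}\cdot\sqrt{T(1-\gamma_{\mathrm{UCB}})\log\tau_\gamma}=T\sqrt{K(1-\gamma_{\mathrm{UCB}})\log\tau_\gamma}$.
\end{itemize}
This is not a matter of constants. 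With only about $\tau_\gamma$ effective samples per arm, take a stationary two-arm instance whose gap sits at the width floor. The suboptimal arm stays in play for a constant fraction of every window. So at fixed $\gamma_{\mathrm{UCB}}$ the stationary cost is linear in $T$, and no argument of this kind yields an $\tilde O(\sqrt{T})$ first term.

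\textbf{What Step~2 actually gives, and how to proceed.} Step~2 legitimately delivers $T\sqrt{K_\gamma(\delta)(1-\gamma_{\mathrm{UCB}})\log\tau_\gamma}$. This is dominated by the displayed term only when $T\lesssim\tau_\gamma^2$, which does hold at the paper's $T=2000$, $\tau_\gamma=200$. You have two options:
\begin{itemize}
\item state the restriction $T\lesssim\tau_\gamma^2$ explicitly; or
\item keep the linear-in-$T$ term and obtain sublinearity only by tuning $1-\gamma_{\mathrm{UCB}}$ against $\Upsilon_T/T$. That gives the familiar $K^{1/3}\Upsilon_T^{1/3}T^{2/3}$ trade-off (up to logarithms), not the displayed shape.
\end{itemize}
The paper's heuristic derivation does not help here. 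It never derives the first term; it only substitutes $K_\gamma(\delta)$ into the Garivier--Moulines form. Your more explicit attempt is what exposes the mismatch.

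\textbf{A smaller issue in Step~3.} The drift $\mu_{u+1}(q)-\mu_u(q)$ enters the bias of arm $q$ at time $t$ with weight $\gamma_{\mathrm{UCB}}^{t-1-u}n_{u+1}(q)/n_t(q)$, not $\gamma_{\mathrm{UCB}}^{t-1-u}$. This weight equals $1$ when $q$ has not been pulled since $u$. The standard repair is to split at lag $D\approx\tau_\gamma\log\tau_\gamma$, apply the bias bound only when $n_t(q)$ exceeds the exploration threshold, and charge the remaining pulls to the first term. That gives $\Upsilon_T\log\tau_\gamma/(1-\gamma_{\mathrm{UCB}})$, i.e.\ your term up to the logarithm that Garivier--Moulines also carry.
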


\begin{proof}[Heuristic derivation]
This adapts the gap-independent discounted-UCB scaling of Garivier--Moulines (2011) to our setting. Their finite-arm result places the arm count inside the square root for a fixed cardinality $K$. We replace $K$ by the recent-arm proxy $K_\gamma(\delta)$ of Corollary~\ref{cor:arm_count_bound}. This is an approximation: arms not pulled within $\tau_\delta$ steps have old observations downweighted by at most $\delta$, but their UCB bonus can still make them selectable, so the proxy should be read as a finite-budget diagnostic rather than a literal runtime truncation. The arm space being a metric $\epsilon$-packing rather than a flat discrete set is handled by the zooming argument of Kleinberg--Slivkins--Upfal (2008, Theorem~3.1): the per-step regret against the continuous optimum is dominated by the covering number $\mathcal{N}(\mathcal{X},d,\epsilon/2)$, which is absorbed into $K_\gamma(\delta)$ by Lemma~\ref{lem:epsilon_packing} and Corollary~\ref{cor:arm_count_bound}. Boundedness $R_t\in[0,1]$ needed for Hoeffding concentration is furnished by Lemma~\ref{lem:solved_only} and the gating construction~\eqref{eq:two_gate_reward}. A fully rigorous transfer would additionally require quantifying the bias introduced by the time-varying active arm set; we leave this to future work and read~\eqref{eq:discounted_ucb_bound} as a qualitative scaling rather than a sharp inequality.
\end{proof}

\paragraph{Interpretation.}
Plugging in $\tau_\gamma=200$ at $\delta=e^{-3}$ (so $\tau_\delta\approx 600$, $K_\gamma(\delta)\le 4800$) and $T=2000$ yields a dominant term $C_1\sqrt{4800\cdot 2000\cdot\log(2000)/0.005}\approx 1.2\!\times\!10^5\cdot C_1$, with truncation slack $\mathcal{O}(\delta T)\approx 100$. We stress that this numerical evaluation is \emph{not} a useful per-step regret guarantee at our compute horizon. The load-bearing content of the bound is therefore qualitative: (i)~it replaces lifetime arm count $|P_T|$ with a recent-arm/covering proxy, (ii)~its $T$-dependence is $\tilde O(\sqrt{T})$ rather than the linear-in-$T$ regret of unanchored exploration, and (iii)~it tolerates the empirical growth $|P_t|$: $234\to 690$ we observe across $200$ steps, as long as the admission gate continues to enforce the packing invariant.

\subsection{Arm Merging and Depth Cap as Lipschitz Operations}
\label{subsec:arm_merging}

Two operations further compress the arm space without invalidating Proposition~\ref{thm:discounted_ucb_main}.

\textbf{Absorb (\texttt{mcts\_absorb\_sim\_threshold}).} When a child $q'$ of $q$ has $d(q,q')\le 1-\rho$ with $\rho=\texttt{mcts\_absorb\_sim\_threshold}=0.60$, the child statistics are merged into the parent. Since $d$ is a metric (MinHash Jaccard over 128 permutations), triangle inequality gives $|\mu_t(q)-\mu_t(q')|\le L\cdot d(q,q')$ for any $L$-Lipschitz reward model; absorb therefore introduces a bounded bias $\le L(1-\rho)$ per merge event, which is absorbed into $C_1$.

\textbf{Depth cap (\texttt{mcts\_depth\_cap}).} Hard-capping rollout depth at $D=6$ bounds the path-backup exponent in~\eqref{eq:discounted_backup} and keeps the per-step expected reward in $[0,1]$. The cap does not affect the UCB analysis---it only affects the \emph{mapping} from rollouts to arm pulls.

\textbf{Symmetric crossover (\texttt{q\_crossover\_ratio=0.5}).} Secondary crossover credit adds a second, symmetrized backup path per rollout. Because both paths still land in $[0,1]$, the analysis is unchanged; the constant $C_1$ grows by a factor of at most two.

\subsection{Optional Long-horizon Cleanup: \texttt{prune\_unsolvable}}
\label{subsec:prune_demoted}

Note that \texttt{prune\_unsolvable} is \emph{not} load-bearing in the main theorem. It performs an optional garbage collection of nodes with cumulative discounted visits $\ge\texttt{mcts\_prune\_min\_visits}=20$ that are also non-original and lie outside $\mathrm{HardKeep}$. Since admission is already solved-only (Lemma~\ref{lem:solved_only}), the pool never contains the no-signal arms that pruning was originally designed to remove; empirically \texttt{prune\_unsolvable} fires zero times in our main run across $200$ steps. We keep it available as a long-horizon sanity cleanup but no longer attribute regret consequences to it.

\subsection{Lower Bound}

\begin{theorem}[Minimax lower bound]
\label{thm:pzpd_lower}
Any algorithm for the problem class of Sections~\ref{subsec:admission_metric}--\ref{subsec:piecewise_stationary} has worst-case regret $\Omega\!\bigl(\sqrt{K_\gamma T}\bigr)$.
\end{theorem}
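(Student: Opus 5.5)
The plan is to reduce to the classical stationary $K$-armed minimax lower bound of Auer, Cesa-Bianchi, Freund and Schapire, by embedding a hard finite instance into the problem class of Sections~\ref{subsec:admission_metric}--\ref{subsec:piecewise_stationary}. Since a lower bound only needs one bad family, I will choose the most benign members of the class:
\begin{itemize}
\item a pool $P_t\equiv P$ that is fixed from the start;
\item $|P|=K_\gamma$ arms, pairwise at Jaccard distance $>\epsilon$, so that Lemma~\ref{lem:epsilon_packing} holds;
\item all arms treated as protected roots, so that Lemma~\ref{lem:solved_only} holds vacuously;
\item Bernoulli rewards in $[0,1]$, which is consistent with the two-gate bounded reward~\eqref{eq:two_gate_reward};
\item stationary means, so that $\Upsilon_T=0$.
\end{itemize}
For the recent-arm proxy of Corollary~\ref{cor:arm_count_bound} to equal $K_\gamma$, I need $K_\gamma\le B\tau_\delta$ and $K_\gamma\le\mathcal{N}(\mathcal{X},d,\epsilon/2)$. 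I will take the statement's $K_\gamma$ to mean the value attained by such an instance, with $T\ge K_\gamma$.

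The instances are as follows. Let $\mathbb{P}_0$ give every arm mean $1/2$. For each $j\in P$, let $\mathbb{P}_j$ raise arm $j$'s mean to $1/2+\Delta$, with $\Delta=c\sqrt{K_\gamma/T}$. Under $\mathbb{P}_j$, every pull of an arm other than $j$ costs regret $\Delta$. So $\mathbb{E}_j[\mathrm{Reg}(T)]\ge\Delta\,(T-\mathbb{E}_j[N_j(T)])$, where $N_j(T)$ counts the pulls of arm $j$.

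To bound the pull counts, I will use a change of measure. For a fixed (possibly randomized) algorithm, the chain rule for KL divergence gives
\[
\mathrm{KL}(\mathbb{P}_0\|\mathbb{P}_j)=\mathbb{E}_0[N_j(T)]\cdot\mathrm{kl}(1/2,1/2+\Delta)\le C\,\mathbb{E}_0[N_j(T)]\,\Delta^2 .
\]
Pinsker's inequality, applied to $N_j(T)/T\in[0,1]$, then gives $\mathbb{E}_j[N_j]\le\mathbb{E}_0[N_j]+T\sqrt{\tfrac12\mathrm{KL}}$. Averaging over $j$, using $\sum_j\mathbb{E}_0[N_j]=T$ and Jensen's inequality on the square root, I get
\[
\frac{1}{K_\gamma}\sum_j\mathbb{E}_j[N_j]\le \frac{T}{K_\gamma}+C'\,T\Delta\sqrt{T/K_\gamma}.
\]
With $c$ small and $K_\gamma\ge 2$, this average is at most $T/2$. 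Hence $\max_j\mathbb{E}_j[\mathrm{Reg}(T)]\ge \Delta T/2=\Omega(\sqrt{K_\gamma T})$.

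The main obstacle is the admissibility of the embedding, not the information-theoretic core. Two points need care.

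The first is the Lipschitz structure on the arm space. The absorb step in \S\ref{subsec:arm_merging} assumes $|\mu(q)-\mu(q')|\le L\,d(q,q')$. Because the arms are pairwise more than $\epsilon$ apart, a perturbation of size $\Delta$ is compatible with this whenever $\Delta\le L\epsilon$. That holds for $T\gtrsim K_\gamma/(L\epsilon)^2$, and otherwise the trivial bound $\Omega(T)\ge\Omega(\sqrt{K_\gamma T})$ applies.

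The second is the utility form. Rewards are supposed to arise as $u_\tau(p)$ of a pass rate, not as arbitrary Bernoulli draws. I would handle this by choosing the pass rates $p(q)$ so that $u_\tau(p(q))$ takes the values $1/2$ and $1/2+\Delta$, which is possible by continuity of $u_\tau$ near its maximum. I would then note that observing the $K$ binary rollout outcomes only multiplies the KL by a constant, which leaves the rate unchanged. Finally, I would remark that the bound is stated in the stationary case. Adding $\Upsilon_T>0$ only enlarges the class, so the worst-case regret can only increase.
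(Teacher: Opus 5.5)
Your proposal is correct and follows the same route as the paper: embed the stationary ($\Upsilon_T=0$), fixed-pool, no-merging finite $K_\gamma$-armed bandit with $[0,1]$ rewards into the class, then invoke the classical $\Omega(\sqrt{K_\gamma T})$ minimax bound. The only difference is that you re-derive that bound via the KL divergence decomposition and Pinsker's inequality (and check admissibility of the embedding in more detail), whereas the paper simply cites it; one cosmetic slip is that at $K_\gamma=2$ your averaged pull count is $T/2$ plus a positive term, so bound it by $3T/4$ instead, which only changes the constant.
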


\begin{proof}
Our problem class contains the special case $\Upsilon_T=0$, $\gamma_{\mathrm{UCB}}\to 1$, with $K_\gamma$ arms that never absorb or merge. This is the finite stochastic $K_\gamma$-armed bandit with rewards in $[0,1]$, for which the standard $\Omega(\sqrt{K_\gamma T})$ lower bound holds~\cite{auer2002finitetime}.
\end{proof}

\subsection{Assumptions Actually Used}
\label{subsec:pzpd_gaps}

Proposition~\ref{thm:discounted_ucb_main} relies on five assumptions, each verified against the production code path:

\begin{enumerate}[leftmargin=1.4em]
\item \textbf{Boundedness} ($R_t\in[0,1]$): enforced by the two-gate construction~\eqref{eq:two_gate_reward}; the three \texttt{q\_reward\_mode} variants (\texttt{band\_1of8}, \texttt{4p1p}, \texttt{exp\_decay}) each return values in $[0,1]$.
\item \textbf{Solved-only admission} (Lemma~\ref{lem:solved_only}): enforced by the \texttt{best\_ans} guard in the trainer admission path, combined with the \texttt{strict\_pass} requirement of the solver-verification loop.
\item \textbf{$\epsilon$-packing} (Lemma~\ref{lem:epsilon_packing}): enforced by \texttt{SeedPoolGate.check} with the dedup thresholds specified in our main configuration (\texttt{max\_pool\_sim=0.72}, \texttt{max\_sibling\_sim=0.55}, \texttt{duplicate\_struct\_sim\_threshold=0.90}).
\item \textbf{Piecewise stationarity with variation budget $\Upsilon_T$}: not verifiable from code alone; this is an assumption on the \emph{solver learning dynamics}, equivalent to requiring that policy updates do not induce unbounded drift between successive steps. Empirically $\hat\tau_t=\texttt{s\_ability\_ema}$ evolves slowly (per-step changes $<0.05$ in the first 200 steps), giving an order-of-magnitude check $\Upsilon_T\lesssim 10$ in our main run. The ablation in \S\ref{sec:empirical_ucb_negative} shows that this assumption is \emph{not} automatic: replacing the max-ratcheted exploit term with the discounted empirical mean $\hat\mu_t(q)$ raises $\Upsilon_T$ enough to trigger Proposer drift within 300 steps, confirming that Assumption~4 is load-bearing and that the ratchet choice is the practical mechanism enforcing it.
\item \textbf{Recent-arm proxy}: $K_\gamma(\delta)$ is an analytical proxy, not a runtime truncation. Older arms have geometrically decaying statistics in~\eqref{eq:discount_ucb_index} and may become attractive again through the exploration bonus; the $\mathcal{O}(\delta T)$ term in~\eqref{eq:discounted_ucb_bound} records this approximation rather than claiming those arms vanish.
\end{enumerate}

The five assumptions above suffice; in particular, no clean epoch partition with UCB statistic resets, no UCB-LCB certificate for safe retirement, and no central load-bearing role for topological hard-keep is required for Proposition~\ref{thm:discounted_ucb_main}.

\subsection{Production Deviations from the Surrogate}
\label{subsec:production_deviations}

Proposition~\ref{thm:discounted_ucb_main} analyzes a clean node-level discounted-UCB surrogate: a flat arm set $P_t$, a single UCB index $U_t(q)$ with a global visit denominator, and the discounted empirical mean $\hat\mu_t(q)=s_t(q)/n_t(q)$ as the exploit term. The production selector (\texttt{SeedPoolManager.ucb\_sample}) is a constrained DAG-UCT controller that inherits this surrogate's regret intuition but deviates from it in six bounded ways; we list them here so the analysis-to-code correspondence is honest rather than overstated.

\begin{enumerate}[leftmargin=1.4em]
\item \textbf{Hierarchical root/stay/child actions.} \texttt{mcts\_select} first chooses a root via \texttt{\_root\_score} (global denominator $\log(\texttt{\_total\_visits}{+}1)/(n_t(q){+}1)$), then at each descent step compares \texttt{\_stay\_score}(q) against $\max_{q'}$\texttt{\_child\_score}(q,q'). Stay and child scores use \emph{parent-local} denominators $\log(n_t(q_{\mathrm{parent}}){+}1)/(n_t^{\mathrm{stay}}(q){+}1)$ and $\log(n_t(q_{\mathrm{parent}}){+}1)/(n_t^{\mathrm{edge}}(q_{\mathrm{parent}},q'){+}1)$ respectively, rather than the single $n_t^{\mathrm{tot}}$ of Equation~\eqref{eq:discount_ucb_index}.
\item \textbf{Default \texttt{ratchet} exploit.} \texttt{ucb\_exploit\_mode} defaults to \texttt{ratchet}; our Main run and its Ratchet-4p1p variant both run ratchet. The Empirical-Band and Empirical-4p1p ablations override to \texttt{empirical}, and both crashed within 300 steps---see the negative ablation in \S\ref{sec:empirical_ucb_negative}, which also argues that the ratchet choice is the practical mechanism enforcing Assumption~(4).
\item \textbf{Cold-start neighborhood prior.} \texttt{\_node\_value} returns $s_t(q)/n_t(q)$ only on visited arms ($n_t(q){>}0$); on unvisited arms it falls back through a three-step \texttt{neighborhood\_pass\_rate} chain (own \texttt{pass\_rate} $\to$ mean child \texttt{pass\_rate} $\to$ global positive-pass-rate mean $\to$ constant $0.15$). The surrogate abstracts this warm-start prior away; in the code it is what allows freshly inserted nodes to be selectable before they accumulate backed-up reward.
\item \textbf{Additive selection priors.} Under \texttt{empirical} mode, \texttt{\_exploit} returns $\hat\mu_t(q) + \lambda_{\mathrm{ZPD}}\,\mathrm{ZPD}(q,\hat{a}_t)$; and \texttt{\_root\_score}/\texttt{\_child\_score}/\texttt{\_stay\_score} each further add a bounded kinship prior $\lambda_{\mathrm{kin}}\,\bar\mu(\mathrm{KinN}(q))$. These are \emph{selection-time} bonuses attached to $U_t$, distinct from the backed-up reward $R_t$ of Equation~\eqref{eq:two_gate_reward}. All three priors lie in $[0,1]$ up to their (bounded) weights.
\item \textbf{Adaptive difficulty window and root quota.} Each step, an $\alpha{=}\texttt{ucb\_root\_ratio}$ fraction of prompts is reserved for root-level sampling ($\alpha=0.50$ in the main run, leaving the other 50\% for DAG descent), and candidates are restricted to the difficulty window $[\hat{a}_t{-}\delta_{\mathrm{ZPD}}/2,\,\hat{a}_t{+}\delta_{\mathrm{ZPD}}/2]$ around the solver-ability estimate $\hat{a}_t$. Both constrain the feasible action set; neither alters the arm space $\mathcal{X}$ or the reward gate, so boundedness and $\epsilon$-packing remain intact.
\item \textbf{Batch repulsion and descend margin.} Within-batch penalties discourage repeated root/edge/stay actions, and \texttt{ucb\_descend\_margin} biases the stay-vs-child comparison by a small additive constant. These alter selection probabilities inside a batch but preserve the one-step reward boundedness the surrogate argument uses.
\end{enumerate}

All six deviations are bounded admissible perturbations of the surrogate: the hierarchical traversal reduces to root-level UCB at depth zero, the additive priors are bounded, and the adaptive window and batch repulsion are feasibility constraints that leave the arm space and the backed-up-reward gate unchanged. We do not claim these deviations preserve the exact constants in Equation~\eqref{eq:discounted_ucb_bound}; we claim only that none of them invalidates the qualitative regret object (covering-number-bounded, variation-sensitive, and sub-linear under bounded $\Upsilon_T$).

\subsection{Closest Implementation Counterparts}

For each theory object we give the closest implementation site in \texttt{qsv/qs\_trainer.py}. Several correspondences are approximate rather than literal: the surrogate UCB index uses parent-local denominators and additive priors in the code (\S\ref{subsec:production_deviations}); the teacher utility $u_\tau$ is realized as a reward mode used in backed-up tree reward, separately from the ZPD kernel used as a selection prior; and solved-only admission is enforced by the trainer-side admission path, not by a single function call.
\begin{center}
\small
\resizebox{\textwidth}{!}{%
\begin{tabular}{lll}
\toprule
Theory object & Closest implementation site & Main value \\
\midrule
$u_\tau(p)$ teacher utility (backed up) & \texttt{q\_reward\_mode}, used in \texttt{tree\_reward} backup & \texttt{band\_1of8} \\
ZPD selection prior & \texttt{SeedPoolManager.\_zpd\_exploit} & $\sigma=0.15$, $\lambda_{\mathrm{ZPD}}$ via \texttt{zpd\_prior\_weight} \\
Discount recursion~\eqref{eq:discount_recursion} & \texttt{SeedPoolManager.decay\_pass\_rates} & $\gamma_{\mathrm{UCB}}=0.995$ \\
UCB index~\eqref{eq:discount_ucb_index} (surrogate) & \texttt{\_root\_score}/\texttt{\_child\_score}/\texttt{\_stay\_score} & $c=2.5$ \\
Root quota / DAG-descent split & \texttt{ucb\_root\_ratio} in \texttt{ucb\_sample} & $0.50 / 0.50$ \\
MinHash Jaccard metric $d$ & \texttt{SeedPoolManager.question\_similarity} & 128 perms \\
Solved-only admission (Lemma~\ref{lem:solved_only}) & trainer-side admission path & \texttt{strict\_pass} only \\
$\epsilon$-packing gate (Lemma~\ref{lem:epsilon_packing}) & \texttt{SeedPoolGate.check} & $\epsilon=0.28$ \\
Effective horizon $\tau_\gamma$ & $1/(1-\gamma_{\mathrm{UCB}})$ & $\tau_\gamma=200$ \\
Branching bound $B$ & \texttt{q\_gen\_n} & $B=8$ \\
Two-gate reward~\eqref{eq:two_gate_reward} & \texttt{pool\_admit\_bonus} controls $\beta$ & $\beta=0.1$ \\
Arm-merging (absorb) & \texttt{mcts\_absorb\_sim\_threshold} & $\rho=0.60$ \\
Depth cap & \texttt{mcts\_depth\_cap} & $D=6$ \\
Symmetric crossover & \texttt{q\_crossover\_ratio} & $0.5$ \\
Optional cleanup (\S\ref{subsec:prune_demoted}) & \texttt{prune\_unsolvable(mode="zpd")} & never fires in our main run \\
\bottomrule
\end{tabular}
}
\end{center}

Plugging these values into Corollary~\ref{cor:arm_count_bound} gives a finite $K_\gamma(\delta)$ independent of $T$, and Proposition~\ref{thm:discounted_ucb_main} yields a sublinear-in-$T$ surrogate scaling for all horizons up to which the $\epsilon$-packing invariant holds. The scaling rests on the assumptions actually used---boundedness via solved-only admission (Lemma~\ref{lem:solved_only}), the $\epsilon$-packing arm space (Lemma~\ref{lem:epsilon_packing}), the piecewise-stationary variation budget $\Upsilon_T$~\eqref{eq:variation_budget}, and the recent-arm proxy that approximates arms outside the last $\tau_\delta$ steps---and we do not claim it follows from the lemmas alone.

\input{checklist}
\end{document}

%% file: checklist.tex
\section*{NeurIPS Paper Checklist}

\begin{enumerate}

\item {\bf Claims}
    \item[] Question: Do the main claims made in the abstract and introduction accurately reflect the paper's contributions and scope?
    \item[] Answer: \answerYes{} 
    \item[] Justification: The claims match our empirical results in Section 5.
    \item[] Guidelines:
    \begin{itemize}
        \item The answer \answerNA{} means that the abstract and introduction do not include the claims made in the paper.
        \item The abstract and/or introduction should clearly state the claims made, including the contributions made in the paper and important assumptions and limitations. A \answerNo{} or \answerNA{} answer to this question will not be perceived well by the reviewers. 
        \item The claims made should match theoretical and experimental results, and reflect how much the results can be expected to generalize to other settings. 
        \item It is fine to include aspirational goals as motivation as long as it is clear that these goals are not attained by the paper. 
    \end{itemize}

\item {\bf Limitations}
    \item[] Question: Does the paper discuss the limitations of the work performed by the authors?
    \item[] Answer: \answerYes{} 
    \item[] Justification: Discussed in Section 6 (Conclusion and Discussion).
    \item[] Guidelines:
    \begin{itemize}
        \item The answer \answerNA{} means that the paper has no limitation while the answer \answerNo{} means that the paper has limitations, but those are not discussed in the paper. 
        \item The authors are encouraged to create a separate ``Limitations'' section in their paper.
        \item The paper should point out any strong assumptions and how robust the results are to violations of these assumptions (e.g., independence assumptions, noiseless settings, model well-specification, asymptotic approximations only holding locally). The authors should reflect on how these assumptions might be violated in practice and what the implications would be.
        \item The authors should reflect on the scope of the claims made, e.g., if the approach was only tested on a few datasets or with a few runs. In general, empirical results often depend on implicit assumptions, which should be articulated.
        \item The authors should reflect on the factors that influence the performance of the approach. For example, a facial recognition algorithm may perform poorly when image resolution is low or images are taken in low lighting. Or a speech-to-text system might not be used reliably to provide closed captions for online lectures because it fails to handle technical jargon.
        \item The authors should discuss the computational efficiency of the proposed algorithms and how they scale with dataset size.
        \item If applicable, the authors should discuss possible limitations of their approach to address problems of privacy and fairness.
        \item While the authors might fear that complete honesty about limitations might be used by reviewers as grounds for rejection, a worse outcome might be that reviewers discover limitations that aren't acknowledged in the paper. The authors should use their best judgment and recognize that individual actions in favor of transparency play an important role in developing norms that preserve the integrity of the community. Reviewers will be specifically instructed to not penalize honesty concerning limitations.
    \end{itemize}

\item {\bf Theory assumptions and proofs}
    \item[] Question: For each theoretical result, does the paper provide the full set of assumptions and a complete (and correct) proof?
    \item[] Answer: \answerYes{} 
    \item[] Justification: Formal claims are stated with assumptions and proofs in the appendix. Informal mechanisms and surrogate analyses are explicitly labeled as sketches, interpretations, or qualitative scalings, with their assumptions and gaps stated.
    \item[] Guidelines:
    \begin{itemize}
        \item The answer \answerNA{} means that the paper does not include theoretical results. 
        \item All the theorems, formulas, and proofs in the paper should be numbered and cross-referenced.
        \item All assumptions should be clearly stated or referenced in the statement of any theorems.
        \item The proofs can either appear in the main paper or the supplemental material, but if they appear in the supplemental material, the authors are encouraged to provide a short proof sketch to provide intuition. 
        \item Inversely, any informal proof provided in the core of the paper should be complemented by formal proofs provided in appendix or supplemental material.
        \item Theorems and Lemmas that the proof relies upon should be properly referenced. 
    \end{itemize}

    \item {\bf Experimental result reproducibility}
    \item[] Question: Does the paper fully disclose all the information needed to reproduce the main experimental results of the paper to the extent that it affects the main claims and/or conclusions of the paper (regardless of whether the code and data are provided or not)?
    \item[] Answer: \answerYes{} 
    \item[] Justification: Training details and dataset information are provided in Section 5.
    \item[] Guidelines:
    \begin{itemize}
        \item The answer \answerNA{} means that the paper does not include experiments.
        \item If the paper includes experiments, a \answerNo{} answer to this question will not be perceived well by the reviewers: Making the paper reproducible is important, regardless of whether the code and data are provided or not.
        \item If the contribution is a dataset and\slash or model, the authors should describe the steps taken to make their results reproducible or verifiable. 
        \item Depending on the contribution, reproducibility can be accomplished in various ways. For example, if the contribution is a novel architecture, describing the architecture fully might suffice, or if the contribution is a specific model and empirical evaluation, it may be necessary to either make it possible for others to replicate the model with the same dataset, or provide access to the model. In general. releasing code and data is often one good way to accomplish this, but reproducibility can also be provided via detailed instructions for how to replicate the results, access to a hosted model (e.g., in the case of a large language model), releasing of a model checkpoint, or other means that are appropriate to the research performed.
        \item While NeurIPS does not require releasing code, the conference does require all submissions to provide some reasonable avenue for reproducibility, which may depend on the nature of the contribution. For example
        \begin{enumerate}
            \item If the contribution is primarily a new algorithm, the paper should make it clear how to reproduce that algorithm.
            \item If the contribution is primarily a new model architecture, the paper should describe the architecture clearly and fully.
            \item If the contribution is a new model (e.g., a large language model), then there should either be a way to access this model for reproducing the results or a way to reproduce the model (e.g., with an open-source dataset or instructions for how to construct the dataset).
            \item We recognize that reproducibility may be tricky in some cases, in which case authors are welcome to describe the particular way they provide for reproducibility. In the case of closed-source models, it may be that access to the model is limited in some way (e.g., to registered users), but it should be possible for other researchers to have some path to reproducing or verifying the results.
        \end{enumerate}
    \end{itemize}

\item {\bf Open access to data and code}
    \item[] Question: Does the paper provide open access to the data and code, with sufficient instructions to faithfully reproduce the main experimental results, as described in supplemental material?
    \item[] Answer: \answerNo{} 
    \item[] Justification: We do not provide anonymized code or data as supplementary material at submission time. The paper includes the implementation details, hyperparameters, datasets, and evaluation protocol needed to assess the main claims, and we plan to release code after review.
    \item[] Guidelines:
    \begin{itemize}
        \item The answer \answerNA{} means that paper does not include experiments requiring code.
        \item Please see the NeurIPS code and data submission guidelines (\url{https://neurips.cc/public/guides/CodeSubmissionPolicy}) for more details.
        \item While we encourage the release of code and data, we understand that this might not be possible, so \answerNo{} is an acceptable answer. Papers cannot be rejected simply for not including code, unless this is central to the contribution (e.g., for a new open-source benchmark).
        \item The instructions should contain the exact command and environment needed to run to reproduce the results. See the NeurIPS code and data submission guidelines (\url{https://neurips.cc/public/guides/CodeSubmissionPolicy}) for more details.
        \item The authors should provide instructions on data access and preparation, including how to access the raw data, preprocessed data, intermediate data, and generated data, etc.
        \item The authors should provide scripts to reproduce all experimental results for the new proposed method and baselines. If only a subset of experiments are reproducible, they should state which ones are omitted from the script and why.
        \item At submission time, to preserve anonymity, the authors should release anonymized versions (if applicable).
        \item Providing as much information as possible in supplemental material (appended to the paper) is recommended, but including URLs to data and code is permitted.
    \end{itemize}

\item {\bf Experimental setting/details}
    \item[] Question: Does the paper specify all the training and test details (e.g., data splits, hyperparameters, how they were chosen, type of optimizer) necessary to understand the results?
    \item[] Answer: \answerYes{} 
    \item[] Justification: Hyperparameters, data splits, and optimizers are detailed in Section 5.1.
    \item[] Guidelines:
    \begin{itemize}
        \item The answer \answerNA{} means that the paper does not include experiments.
        \item The experimental setting should be presented in the core of the paper to a level of detail that is necessary to appreciate the results and make sense of them.
        \item The full details can be provided either with the code, in appendix, or as supplemental material.
    \end{itemize}

\item {\bf Experiment statistical significance}
    \item[] Question: Does the paper report error bars suitably and correctly defined or other appropriate information about the statistical significance of the experiments?
    \item[] Answer: \answerNo{} 
    \item[] Justification: We were unable to report error bars due to severe computational constraints. Training the 3B model with our RL pipeline on limited resources (2$\times$A100 40GB GPUs) is at the absolute limit of our capacity, taking approximately 5 days to complete a single 2,000-step run, making multiple random seed runs computationally prohibitive.
    \item[] Guidelines:
    \begin{itemize}
        \item The answer \answerNA{} means that the paper does not include experiments.
        \item The authors should answer \answerYes{} if the results are accompanied by error bars, confidence intervals, or statistical significance tests, at least for the experiments that support the main claims of the paper.
        \item The factors of variability that the error bars are capturing should be clearly stated (for example, train/test split, initialization, random drawing of some parameter, or overall run with given experimental conditions).
        \item The method for calculating the error bars should be explained (closed form formula, call to a library function, bootstrap, etc.)
        \item The assumptions made should be given (e.g., Normally distributed errors).
        \item It should be clear whether the error bar is the standard deviation or the standard error of the mean.
        \item It is OK to report 1-sigma error bars, but one should state it. The authors should preferably report a 2-sigma error bar than state that they have a 96\% CI, if the hypothesis of Normality of errors is not verified.
        \item For asymmetric distributions, the authors should be careful not to show in tables or figures symmetric error bars that would yield results that are out of range (e.g., negative error rates).
        \item If error bars are reported in tables or plots, the authors should explain in the text how they were calculated and reference the corresponding figures or tables in the text.
    \end{itemize}

\item {\bf Experiments compute resources}
    \item[] Question: For each experiment, does the paper provide sufficient information on the computer resources (type of compute workers, memory, time of execution) needed to reproduce the experiments?
    \item[] Answer: \answerYes{} 
    \item[] Justification: Hardware specifications (2xA100) and runtimes are stated in Section 5.1.
    \item[] Guidelines:
    \begin{itemize}
        \item The answer \answerNA{} means that the paper does not include experiments.
        \item The paper should indicate the type of compute workers CPU or GPU, internal cluster, or cloud provider, including relevant memory and storage.
        \item The paper should provide the amount of compute required for each of the individual experimental runs as well as estimate the total compute. 
        \item The paper should disclose whether the full research project required more compute than the experiments reported in the paper (e.g., preliminary or failed experiments that didn't make it into the paper). 
    \end{itemize}
    
\item {\bf Code of ethics}
    \item[] Question: Does the research conducted in the paper conform, in every respect, with the NeurIPS Code of Ethics \url{https://neurips.cc/public/EthicsGuidelines}?
    \item[] Answer: \answerYes{} 
    \item[] Justification: The research strictly adheres to the code of ethics.
    \item[] Guidelines:
    \begin{itemize}
        \item The answer \answerNA{} means that the authors have not reviewed the NeurIPS Code of Ethics.
        \item If the authors answer \answerNo, they should explain the special circumstances that require a deviation from the Code of Ethics.
        \item The authors should make sure to preserve anonymity (e.g., if there is a special consideration due to laws or regulations in their jurisdiction).
    \end{itemize}

\item {\bf Broader impacts}
    \item[] Question: Does the paper discuss both potential positive societal impacts and negative societal impacts of the work performed?
    \item[] Answer: \answerYes{} 
    \item[] Justification: The work may improve the reliability of generated code and formal verification workflows. Potential negative impacts are indirect and include overreliance on automated code-generation systems or misuse of stronger code-generation models; our setting mitigates these risks by requiring formal verification feedback and by not releasing high-risk models.
    \item[] Guidelines:
    \begin{itemize}
        \item The answer \answerNA{} means that there is no societal impact of the work performed.
        \item If the authors answer \answerNA{} or \answerNo, they should explain why their work has no societal impact or why the paper does not address societal impact.
        \item Examples of negative societal impacts include potential malicious or unintended uses (e.g., disinformation, generating fake profiles, surveillance), fairness considerations (e.g., deployment of technologies that could make decisions that unfairly impact specific groups), privacy considerations, and security considerations.
        \item The conference expects that many papers will be foundational research and not tied to particular applications, let alone deployments. However, if there is a direct path to any negative applications, the authors should point it out. For example, it is legitimate to point out that an improvement in the quality of generative models could be used to generate Deepfakes for disinformation. On the other hand, it is not needed to point out that a generic algorithm for optimizing neural networks could enable people to train models that generate Deepfakes faster.
        \item The authors should consider possible harms that could arise when the technology is being used as intended and functioning correctly, harms that could arise when the technology is being used as intended but gives incorrect results, and harms following from (intentional or unintentional) misuse of the technology.
        \item If there are negative societal impacts, the authors could also discuss possible mitigation strategies (e.g., gated release of models, providing defenses in addition to attacks, mechanisms for monitoring misuse, mechanisms to monitor how a system learns from feedback over time, improving the efficiency and accessibility of ML).
    \end{itemize}
    
\item {\bf Safeguards}
    \item[] Question: Does the paper describe safeguards that have been put in place for responsible release of data or models that have a high risk for misuse (e.g., pre-trained language models, image generators, or scraped datasets)?
    \item[] Answer: \answerNA{} 
    \item[] Justification: No high-risk models or scraped datasets are released.
    \item[] Guidelines:
    \begin{itemize}
        \item The answer \answerNA{} means that the paper poses no such risks.
        \item Released models that have a high risk for misuse or dual-use should be released with necessary safeguards to allow for controlled use of the model, for example by requiring that users adhere to usage guidelines or restrictions to access the model or implementing safety filters. 
        \item Datasets that have been scraped from the Internet could pose safety risks. The authors should describe how they avoided releasing unsafe images.
        \item We recognize that providing effective safeguards is challenging, and many papers do not require this, but we encourage authors to take this into account and make a best faith effort.
    \end{itemize}

\item {\bf Licenses for existing assets}
    \item[] Question: Are the creators or original owners of assets (e.g., code, data, models), used in the paper, properly credited and are the license and terms of use explicitly mentioned and properly respected?
    \item[] Answer: \answerYes{} 
    \item[] Justification: We cite the creators of Verus, Dafny2Verus, MBPP, and HumanEval, and use these assets under their upstream licenses and terms of use. The submission does not redistribute these codebases or datasets.
    \item[] Guidelines:
    \begin{itemize}
        \item The answer \answerNA{} means that the paper does not use existing assets.
        \item The authors should cite the original paper that produced the code package or dataset.
        \item The authors should state which version of the asset is used and, if possible, include a URL.
        \item The name of the license (e.g., CC-BY 4.0) should be included for each asset.
        \item For scraped data from a particular source (e.g., website), the copyright and terms of service of that source should be provided.
        \item If assets are released, the license, copyright information, and terms of use in the package should be provided. For popular datasets, \url{paperswithcode.com/datasets} has curated licenses for some datasets. Their licensing guide can help determine the license of a dataset.
        \item For existing datasets that are re-packaged, both the original license and the license of the derived asset (if it has changed) should be provided.
        \item If this information is not available online, the authors are encouraged to reach out to the asset's creators.
    \end{itemize}

\item {\bf New assets}
    \item[] Question: Are new assets introduced in the paper well documented and is the documentation provided alongside the assets?
    \item[] Answer: \answerNA{} 
    \item[] Justification: We do not release new datasets or non-code assets.
    \item[] Guidelines:
    \begin{itemize}
        \item The answer \answerNA{} means that the paper does not release new assets.
        \item Researchers should communicate the details of the dataset\slash code\slash model as part of their submissions via structured templates. This includes details about training, license, limitations, etc. 
        \item The paper should discuss whether and how consent was obtained from people whose asset is used.
        \item At submission time, remember to anonymize your assets (if applicable). You can either create an anonymized URL or include an anonymized zip file.
    \end{itemize}

\item {\bf Crowdsourcing and research with human subjects}
    \item[] Question: For crowdsourcing experiments and research with human subjects, does the paper include the full text of instructions given to participants and screenshots, if applicable, as well as details about compensation (if any)? 
    \item[] Answer: \answerNA{} 
    \item[] Justification: This research does not involve crowdsourcing or human subjects.
    \item[] Guidelines:
    \begin{itemize}
        \item The answer \answerNA{} means that the paper does not involve crowdsourcing nor research with human subjects.
        \item Including this information in the supplemental material is fine, but if the main contribution of the paper involves human subjects, then as much detail as possible should be included in the main paper. 
        \item According to the NeurIPS Code of Ethics, workers involved in data collection, curation, or other labor should be paid at least the minimum wage in the country of the data collector. 
    \end{itemize}

\item {\bf Institutional review board (IRB) approvals or equivalent for research with human subjects}
    \item[] Question: Does the paper describe potential risks incurred by study participants, whether such risks were disclosed to the subjects, and whether Institutional Review Board (IRB) approvals (or an equivalent approval/review based on the requirements of your country or institution) were obtained?
    \item[] Answer: \answerNA{} 
    \item[] Justification: This research does not involve human subjects.
    \item[] Guidelines:
    \begin{itemize}
        \item The answer \answerNA{} means that the paper does not involve crowdsourcing nor research with human subjects.
        \item Depending on the country in which research is conducted, IRB approval (or equivalent) may be required for any human subjects research. If you obtained IRB approval, you should clearly state this in the paper. 
        \item We recognize that the procedures for this may vary significantly between institutions and locations, and we expect authors to adhere to the NeurIPS Code of Ethics and the guidelines for their institution. 
        \item For initial submissions, do not include any information that would break anonymity (if applicable), such as the institution conducting the review.
    \end{itemize}

\item {\bf Declaration of LLM usage}
    \item[] Question: Does the paper describe the usage of LLMs if it is an important, original, or non-standard component of the core methods in this research? Note that if the LLM is used only for writing, editing, or formatting purposes and does \emph{not} impact the core methodology, scientific rigor, or originality of the research, declaration is not required.
    \item[] Answer: \answerYes{} 
    \item[] Justification: The usage of LLMs (Qwen2.5, Kimi-K2.5, DeepSeek-V3.2) as the core reasoning policy and for generating initial demonstrations is fully described in Section 5.
    \item[] Guidelines:
    \begin{itemize}
        \item The answer \answerNA{} means that the core method development in this research does not involve LLMs as any important, original, or non-standard components.
        \item Please refer to our LLM policy in the NeurIPS handbook for what should or should not be described.
    \end{itemize}

\end{enumerate}